\documentclass[dvipsnames]{article}
\usepackage[dvipsnames]{xcolor}   




\usepackage[final]{neurips_2025}


\usepackage[utf8]{inputenc} 
\usepackage[T1]{fontenc}    
\usepackage{hyperref}       
\usepackage{url}            
\usepackage{booktabs}       
\usepackage{minitoc}
\usepackage[toc]{appendix}

\usepackage{amsfonts}       
\usepackage{amsmath}
\usepackage{amssymb}
\usepackage{float}
\usepackage{amsthm}
\usepackage{caption}
\usepackage{subcaption}
\usepackage{bbm}
\usepackage{diagbox}
\usepackage{nicefrac}       
\usepackage{microtype}      
\usepackage{graphicx}

\usepackage{wasysym}        

\newcommand{\blackdiam}{\rotatebox[origin=c]{45}{$\blacksquare$}}

\usepackage{svg}

\usepackage{wrapfig}
\newcommand{\colorfirst}[1]{\textbf{#1}}
\newcommand{\colorsecond}[1]{{\color[HTML]{1A4F8C}#1}}
\newcommand{\colorthird}[1]{{\color[HTML]{536907}#1}}

\usepackage{mathrsfs}
%


\makeatletter
\newcommand*\rel@kern[1]{\kern#1\dimexpr\macc@kerna}
\newcommand*\widebar[1]{%
  \begingroup
  \def\mathaccent##1##2{%
    \rel@kern{0.8}%
    \overline{\rel@kern{-0.8}\macc@nucleus\rel@kern{0.2}}%
    \rel@kern{-0.2}%
  }%
  \macc@depth\@ne
  \let\math@bgroup\@empty \let\math@egroup\macc@set@skewchar
  \mathsurround\z@ \frozen@everymath{\mathgroup\macc@group\relax}%
  \macc@set@skewchar\relax
  \let\mathaccentV\macc@nested@a
  \macc@nested@a\relax111{#1}%
  \endgroup
}
\makeatother

\usepackage{algorithm}      
\usepackage{algpseudocode} 

\newtheorem{theorem}{Theorem}[section]

\newtheorem{lemma}{Lemma}[section]

\newtheorem{corollary}{Corollary}[section]

\newtheorem{proposition}{Proposition}[section]

\newtheorem{Definition}{Definition}[section]

\newtheorem{Remark}{Remark}[section]
\definecolor{RedLight}{RGB}{209,77,65}

\usepackage{bbm}

\newcommand{\norm}[1]{\left\lVert #1 \right\rVert}

\newcommand*\diff{\mathop{}\!\mathrm{d}}



\title{Equivariant Eikonal Neural Networks:\\ Grid-Free, Scalable Travel-Time Prediction on Homogeneous Spaces}

%

\author{%
  Alejandro García-Castellanos$^{1}$\thanks{Corresponding author: $<$\texttt{a.garciacastellanos@uva.nl}$>$} \qquad  David R. Wessels$^{1,2}$ \qquad Nicky J. van den Berg$^{3}$ \\ \textbf{Remco Duits}$^{3}$ \qquad  \textbf{Daniël M. Pelt}$^{4}$ \qquad \textbf{Erik J. Bekkers}$^{1}$\\
  $^1$Amsterdam Machine Learning Lab (AMLab), University of Amsterdam \quad $^2$New Theory\\
  $^3$Department of Mathematics and Computer Science, Eindhoven University of Technology\\
  $^4$Leiden Institute of Advanced Computer Science, Universiteit Leiden
}

\makeatletter
\newcommand{\silentpart}{
  \begingroup
  \let\@endpart\relax
  \let\@part\relax
  \doparttoc
  \faketableofcontents
  \endgroup
}
\makeatother

\begin{document}

\doparttoc 
\faketableofcontents 
\silentpart
\maketitle

\begin{abstract}
We introduce Equivariant Neural Eikonal Solvers, a novel framework that integrates Equivariant Neural Fields (ENFs) with Neural Eikonal Solvers. Our approach employs a single neural field where a unified shared backbone is conditioned on signal-specific latent variables -- represented as point clouds in a Lie group -- to model diverse Eikonal solutions. The ENF integration ensures equivariant mapping from these latent representations to the solution field, delivering three key benefits: enhanced representation efficiency through weight-sharing, robust geometric grounding, and solution steerability. This steerability allows transformations applied to the latent point cloud to induce predictable, geometrically meaningful modifications in the resulting Eikonal solution. By coupling these steerable representations with Physics-Informed Neural Networks (PINNs), our framework accurately models Eikonal travel-time solutions while generalizing to arbitrary Riemannian manifolds with regular group actions. This includes homogeneous spaces such as Euclidean, position–orientation, spherical, and hyperbolic manifolds. We validate our approach through applications in seismic travel-time modeling of 2D, 3D, and spherical benchmark datasets. Experimental results demonstrate superior performance, scalability, adaptability, and user controllability compared to existing Neural Operator-based Eikonal solver methods.
\end{abstract}

\section{Introduction}
\label{sec:intro}

The eikonal equation is a first-order nonlinear partial differential equation (PDE) that plays a central role in a wide range of scientific and engineering applications. Serving as the high-frequency approximation to the wave equation \citep{noack_acoustic_2017}, its solution represents the shortest arrival time from a source point to any receiver point within a specified scalar velocity field \citep{sethian_fast_1996}. This formulation underpins numerous applications: in Computer Vision, it is integral to the computation of Signed Distance Functions (SDFs) \citep{jones_3d_2006} and geodesic-based image segmentation \citep{chen_active_2019}; in Robotics, it facilitates optimal motion planning and inverse kinematics \citep{ni_ntfields_2023, li_riemannian_2024}; and in Geophysics, it models seismic wave propagation through heterogeneous media, enabling critical travel-time estimations \citep{abgrall_big_1999, rawlinson_multipathing_2010, gerard_wavepath_1993}.

Conventional numerical solvers, such as the Fast Marching Method (FMM) \citep{sethian_fast_1996} and the Fast Sweeping Method (FSM) \citep{zhao_fast_2004}, have historically been used to compute solutions to the eikonal equation. However, these approaches are heavily dependent on spatial discretization, leading to a challenging trade-off: higher resolution is required for complex velocity models, which in turn dramatically increases computational and memory demands \citep{grubas_neural_2023, song_seismic_2024, mei_fully_2024, smith_eikonet_2021, waheed_pinneik_2021}. This issue is exacerbated in scenarios involving complex input geometries, such as Riemannian manifolds, which are prevalent in both computer vision \citep{bekkers_pde_2015} and robotics applications \citep{li_riemannian_2024}.

\begin{wrapfigure}{tr}{0.5\textwidth}
  \centering
  \vspace{-11pt}
  \includegraphics[width=\linewidth]{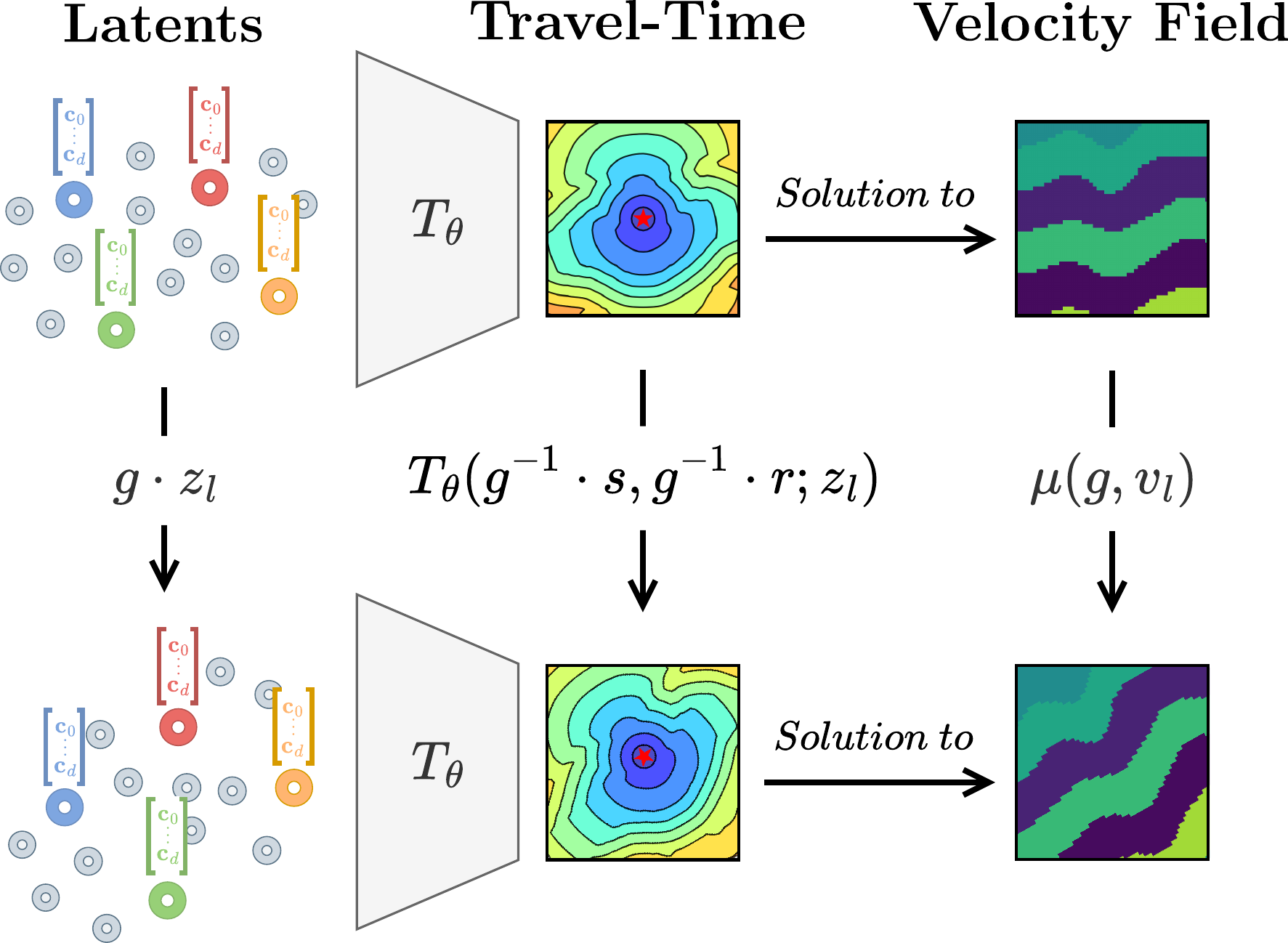}
    \caption{Steerability in the Equivariant Neural Eikonal Solver (\texttt{E-NES}) enables weight sharing across the entire group orbit: applying a group transformation to the conditioning variable \( z_l \), induces a corresponding transformation on the travel-time function \( T_{\theta}(\cdot,\cdot; z_l) \) through the group’s left regular representation, and on the associated velocity field \( v_l \) via the non-linear group action \( \mu \), as formalized in Section~\ref{sec:theory}.}
  \label{fig:steerability}
  \vspace{-15pt}
\end{wrapfigure}

Recent advances in scientific machine learning have introduced neural network-based solvers as promising alternatives. Physics-Informed Neural Networks (PINNs) integrate the PDE constraints into the training loss, offering a grid-free approximation to the eikonal equation and alleviating the discretization issues inherent in traditional numerical methods \citep{smith_eikonet_2021, waheed_pinneik_2021,ni_ntfields_2023, grubas_neural_2023, li_riemannian_2024}. However, a significant limitation of PINN-like approaches is their requirement to train a new network for each distinct velocity field, which hampers their applicability in real-time scenarios.

Neural Operators overcome this constraint by learning mappings between function spaces -- specifically, from velocity fields to their corresponding travel-time solutions. Unlike PINNs, neural operators utilize a shared backbone and incorporate conditioning variables to handle different velocity profiles \citep{song_seismic_2024, mei_fully_2024}. Current approaches leveraging simple architectures such as DeepONet \citep{mei_fully_2024} and Fourier Neural Operators \citep{song_seismic_2024} have demonstrated promising results, yet there remain several avenues for improvement, as discussed in Section~\ref{sec:rel}.

As stated in \cite{wang_bridging_2024}, one promising direction is to recognize that Neural Operators belong to a broader class of models known as Conditional Neural Fields. These models, which have been popularized within the Computer Vision community, explore advanced conditioning techniques to enhance expressivity, adaptability, and controllability \citep{dupont_data_2022, wessels_grounding_2024, wang_bridging_2024}. In this work, we focus on the recently introduced Equivariant Neural Fields, which ground these conditioning variables in geometric principles, leading to improved representation quality and steerability -- ensuring that transformations in the latent space correspond directly to transformations in the solution space \citep{knigge_space-time_2024}.

Our key contributions are as follows: 
\begin{itemize}
    \item We introduce a novel, expressive generalization of Equivariant Neural Fields to functions defined over products of Riemannian manifolds with regular group actions, including homogeneous spaces associated with linear Lie groups.

    \item We implement this framework through our Equivariant Neural Eikonal Solver (\texttt{E-NES}), to efficiently solve eikonal equations by leveraging geometric symmetries, enabling generalization across group transformations without explicit data augmentation (see Figure~\ref{fig:steerability}).
    
    \item We validate our approach through comprehensive experiments on 2D, 3D, and spherical seismic travel-time benchmarks, achieving superior scalability, adaptability, and user controllability compared to existing methods in a grid-free manner.
\end{itemize}

Our code, including scripts to generate the results, is provided at: \url{https://github.com/AGarciaCast/E-NES}.

\section{Related work}
\label{sec:rel}

\paragraph{Neural eikonal solvers.} Initial neural approaches for solving the eikonal equation have primarily relied on Physics-Informed Neural Networks (PINNs), which incorporate the PDE directly into the loss function \citep{smith_eikonet_2021, waheed_pinneik_2021,ni_ntfields_2023, grubas_neural_2023, li_riemannian_2024, kelshaw_computing_2024}. These models are trained individually for each velocity field, achieving high-accuracy reconstructions at the expense of significant computational and memory overhead. Moreover, this per-instance training lacks cross-instance generalization, limiting its practicality for large-scale or real-time applications.

To enable generalization across velocity fields, operator learning methods have been proposed. DeepONet variants \citep{lu_deeponet_2021, mei_fully_2024} learn mappings between function spaces, but typically require discretization of either the source or receiver points. This limits resolution invariance and complicates applications requiring continuous evaluations, such as geodesic path planning. Moreover, some methods, such as \cite{mei_fully_2024}, rely on the supervision of a numerical solver, which can be beneficial in some simple scenarios but is a bottleneck in complex ones. On the hand, other operator learning approaches such as Fourier Neural Operators \citep{song_seismic_2024} offer solver-free alternatives and incorporate the PDE into the loss, but still rely on partial discretization, inheriting similar resolution constraints.

In contrast, our method avoids discretization entirely by representing both inputs continuously and training solely with PDE supervision. We adopt the Conditional Neural Field (CNF) framework as our backbone, enabling scalable conditioning on velocity fields while preserving grid-free inference and resolution independence.

\paragraph{Conditional and equivariant neural fields.} 
\begin{sloppypar}
Conditional Neural Fields (CNFs) are coordinate-based networks that reconstruct continuous signals from discrete observations. Formally, a CNF ${f_{\theta}: \mathcal{M} \times \mathcal{Z} \to \mathbb{R}^d}$ maps coordinates $p \in \mathcal{M}$ and latent codes $z \in \mathcal{Z}$ to outputs $f_{\theta}(p; z)$ approximating target signals. Given a dataset $\mathcal{D} = \{f_i\}_{i=1}^n$, a single network can represent all signals via instance-specific latents: $f_{\theta}(p; z_i) \approx f_i(p)$.
\end{sloppypar}

Early CNFs used global latent vectors for conditioning \citep{dupont_data_2022}. Subsequent work demonstrated that learnable point cloud latents $\{z_i\}_{i=1}^m \subseteq \mathcal{Z}$ significantly enhance expressivity and reconstruction quality \citep{bauer_spatial_2023, luijmes_arc_2025, kazerouni_lift_2025, wessels_grounding_2024}. Equivariant Neural Fields (ENFs) further impose symmetry priors through the \textit{steerability property}: $f_{\theta}(g^{-1}\cdot p; \{z_i\}) = f_{\theta}(p; \{g \cdot z_i\})$ for all $g \in G$. This equivariant design improves sample efficiency and generalization \citep{wessels_grounding_2024, chen_3d_2022, chatzipantazis_se3-equivariant_2023} by encoding geometric structure directly into the latent space.

\begin{sloppypar}   
In this work, we \textit{extend ENFs to signals defined on products of Riemannian manifolds}, i.e., ${f: \mathcal{M}_1 \times \cdots \times \mathcal{M}_n \to \mathbb{R}^d}$, enabling complex inter-input interactions while preserving equivariance. This generalization substantially broadens the scope of addressable problems beyond prior single-point formulations, with implications extending beyond eikonal solving (see Appendix~\ref{app:diss}).
\end{sloppypar}

\paragraph{Invariant function learning.} 
Steerability in ENFs requires invariance to joint transformations: $f(g\!\cdot\! p; \{g\!\cdot\! z_i\}) = f(p; \{z_i\})$ for all $g \in G$ \citep{wessels_grounding_2024, chen_3d_2022, chatzipantazis_se3-equivariant_2023}. Constructing expressive invariant functions is thus central to our framework.

We provide a formal expressivity analysis establishing a \textit{complete and maximally expressive} set of independent invariants, guaranteeing zero information loss during canonicalization. This addresses a critical gap in prior work \citep{wessels_grounding_2024, knigge_space-time_2024}, where invariants were selected heuristically without completeness guarantees. Our analysis extends to arbitrary (possibly non-transitive) Lie group actions on product manifolds. 

From Invariant Theory, a complete set of \textit{fundamental invariants} must: (1) express any invariant function, and (2) separate orbits—i.e., $I_\nu(p)=I_\nu(q)$ for all invariants $I_\nu$ if and only if $p$ and $q$ share the same orbit \citep{olver_equivalence_1995}. While several computational approaches exist—including Weyl's theorem \citep{weyl_classical_1946, villar_scalars_2021}, infinitesimal methods \citep{andreassen_joint_nodate}, moving frames \citep{olver_joint_2001}, and differential invariants \citep{olver_equivalence_1995, sangalli_differential_2022, li_affine_nodate}—we employ \textit{the moving frame technique} \citep{olver_joint_2001} for its conceptual clarity and natural connection to modern canonicalization methods \citep{shumaylov_lie_2024} (see Appendix~\ref{app:inv} for detailed comparisons).

Moreover, rather than relying on global canonicalization—which produces a single canonical representation for an entire point cloud—we adopt a local canonicalization strategy \citep{hu_learning_2024, wessels_grounding_2024, chen_3d_2022, du_se3_2022, wang_equivariant_2024, zhang_rotation_2019}. By canonicalizing small patches, our approach is better able to capture relevant local information and facilitates the use of transformer-based architectures, as opposed to the DeepSet-based architectures commonly employed in \cite{blum-smith_learning_2024, dym_low_2023, villar_scalars_2021}.

\vspace{-5pt}
\section{Background}
\vspace{-10pt}
In this section, we present the necessary mathematical foundations and formulate the eikonal equation problem. For a comprehensive treatment of these topics, we refer the reader to \cite{lee_introduction_2018, olver_equivalence_1995}.
\vspace{-10pt}
\subsection{Mathematical Preliminaries}

\paragraph{Differential Geometry.} 
A \textit{Riemannian manifold} is defined as a pair $(\mathcal{M}, \mathcal{G})$, where $\mathcal{M}$ is a smooth manifold and $\mathcal{G}$ is a Riemannian metric tensor field on $\mathcal{M}$~\citep{lee_introduction_2018}. The metric tensor $\mathcal{G}_p: T_p\mathcal{M}\times T_p\mathcal{M} \to \mathbb{R}$ assigns to each $p \in \mathcal{M}$ a positive-definite inner product on the tangent space $T_p\mathcal{M}$. Specifically, for any tangent vectors $\dot{p}_1, \dot{p}_2 \in T_p\mathcal{M}$, the inner product is given by $\mathcal{G}_p( \dot{p}_1, \dot{p}_2)$, and the corresponding norm is defined as $\norm{\dot p_1}_{\mathcal{G}} = \sqrt{\mathcal{G}_{p}( \dot{p}_1, \dot{p}_1)}.$

The \textit{Riemannian distance} between two points $p$ and $q$ in a connected manifold $\mathcal{M}$, denoted as $d_{\mathcal{G}}(p, q)$, is defined as the infimum of the length of all smooth curves joining them \citep{lee_introduction_2018}. Curves that achieve this infimum while traveling at constant speed are known as \textit{geodesics}.

For a smooth scalar field $f : \mathcal{M} \to \mathbb{R}$, the \textit{Riemannian gradient} $\operatorname{grad} f$ is the unique vector field reciprocal to the differential $\diff f: T\mathcal{M}\to \mathbb{R}$, meaning $\mathcal{G}( \operatorname{grad} f, \,\cdot\,) = \diff f$. Given a smooth function $\phi:\mathcal{M}\to\mathcal{M}$, we can also define the \textit{adjoint of a differential} $\diff \phi(p) : T_p\mathcal{M} \to T_{\phi(p)}\mathcal{M}$ at a point $p \in \mathcal{M}$ as the map $(\diff \phi(p))^* : T_{\phi(p)}\mathcal{M} \to T_p\mathcal{M};$ such that for every $\dot{p}\in T_p \mathcal{M}, \dot{q} \in T_{\phi(p)}\mathcal{M}$ we have that $\mathcal{G}_{\phi(p)}(\diff \phi(p)[\dot{p}], \dot{q}) = \mathcal{G}_{p}(\dot{p}, (\diff \phi(p))^*[\dot{q}])$ \citep{lezcano-casado_trivializations_2019}.

Note that when $\mathcal{M} = \mathbb{R}^n$ and $\mathcal{G}_p = \mathbf{I}_n$ for all $p \in  \mathbb{R}^n$, all Riemannian notions reduce to their Euclidean counterparts.

\paragraph{Group Theory.}
A \emph{Lie group} $G$ is a smooth manifold with group operations that are smooth. A (left) \emph{group action} on a set $X$ is a map $\mu : G \times X \to X$ satisfying $\mu(e,x) = x$ and $\mu(g,\mu(h,x)) = \mu(gh,x)$ for all $x\in X$, $g,h\in G$. When $\mu$ is clear by the context we write $g\cdot x$. The \textit{orbit space} $X/G$ is the quotient space obtained by identifying points in $X$ that lie in the same orbit under the $G$-action. Formally, $X/G = \{\text{Orb}(x) \mid x \in X\}$ consists of all distinct orbits, where each orbit $\text{Orb}(x) = \{g \cdot x \mid g \in G\}$ represents an equivalence class under the relation $x \sim y \Leftrightarrow \exists g \in G \text{ such that } y = g \cdot x$. These equivalence classes partition $X$ into mutually disjoint subsets whose union equals $X$, yielding a canonical decomposition that reflects the underlying symmetry of the group action.

In this work, we will focus on the Special Euclidean group $SO(n)=\{R\in \mathbb{R}^{n\times n} \mid RR^T=\mathbf{I}_n,\ \det(R)=1\}$ and the Special Euclidean group $SE(n) = \mathbb{R}^n \rtimes SO(n)$, representing roto-translations. For $g = (\mathbf{t}, R) \in SE(n)$, the group product is $g\cdot g' = (\mathbf{t}, R)(\mathbf{t}', R') = (\mathbf{t} + R\mathbf{t}',\, RR')$.

The \textit{isotropy subgroup} (or stabilizer) at $x \in X$ is $G_x = \{g \in G \mid g \cdot x = x\}$. A group $G$ acts \textit{freely} if $G_x = \{e\}$ for all $x \in X$, meaning no non-identity element fixes any point. An $r$-dimensional Lie group acts freely on a manifold $\mathcal{M}$ if and only if its orbits have dimension $r$ \citep{olver_equivalence_1995}. A group acts \textit{regularly} on $\mathcal{M}$ if each point has arbitrarily small neighborhoods whose intersections with each orbit are connected. In practical applications, the groups of interest typically act regularly.

As discussed in Section~\ref{sec:rel}, it is crucial for building Equivariant Neural Fields to obtain a complete set of functionally independent invariants. As demonstrated by \citet{olver_joint_2001}, when a Lie group acts freely and regularly, such a set can be systematically derived locally using the moving frame method -- detailed in the Appendix (Section~\ref{app:inv}).

\subsection{Eikonal Equation Formulation}
\label{sec:meth-eik}

On a Riemannian manifold $(\mathcal{M}, \mathcal{G})$, the two-point Riemannian \textit{Eikonal equation} with respect to a velocity field $v: \mathcal{M} \rightarrow [v_{\min}, v_{\max}]$ (where $0 < v_{\min} \leq v_{\max} < \infty$) is:
\begin{equation}
\label{eq:TwoPointEikonal1}
\begin{cases}
    \|\operatorname{grad}_{s} T(s, r)\|_{\mathcal{G}} = v(s)^{-1},\\
    \|\operatorname{grad}_{r} T(s, r)\|_{\mathcal{G}} = v(r)^{-1}, \\
    T(s, r) = T(r, s), \quad T(s, s) = 0,
\end{cases}
\end{equation}
where \(\operatorname{grad}_{s}\) and \(\operatorname{grad}_{r}\) denote the Riemannian gradients with respect to the source \(s \in \mathcal{M}\) and the receiver \(r \in \mathcal{M}\), respectively. The solution \(T \colon \mathcal{M} \times \mathcal{M} \to \mathbb{R}_+\) corresponds to the travel-time function, and the interval \([v_{\min}, v_{\max}]\) specifies the minimum and maximum velocity values in the training set.

To prevent irregular behavior as \( r \to s \), it is standard to factorize the travel-time function as  
\[
T(s, r) = \tilde{d}(s, r)\, \tau(s, r),
\]
where \(\tilde{d}\) is a \textit{semimetric}—a function satisfying non-negativity (\(\tilde{d}(s, r) \geq 0\)), identity of indiscernibles (\(\tilde{d}(s, r) = 0 \text{ iff } s = r\)), and symmetry (\(\tilde{d}(s, r) = \tilde{d}(r, s)\))—that approximates the ground-truth Riemannian distance \(d_{\mathcal{G}}\) \citep{smith_eikonet_2021, waheed_pinneik_2021, grubas_neural_2023, li_riemannian_2024, kelshaw_computing_2024}. The scalar field \(\tau(s, r)\) then represents the unknown travel-time factor.

In this work, we further require that \(\tilde{d}(s, r)\) be invariant under the group action of \(G\); that is,
\(\tilde{d}(s, r) = \tilde{d}(g \cdot s, g \cdot r)\) for all \(g \in G\).
This invariance condition is essential to ensure that the travel-time function preserves its steerability property, as will be demonstrated in Section~\ref{sec:meth}.
When the group acts by isometries, \(\tilde{d}\) can, as previously discussed, be taken as the geodesic distance on the manifold.
In particular, for manifolds embedded in Euclidean space where the group action extends to isometries of the ambient Euclidean space, the Euclidean (chordal) distance offers a computationally efficient approximation \citep{kelshaw_computing_2024}.
For more general group actions, one may instead adopt the discrete semimetric \(\tilde{d}(s, r) = \mathbbm{1}_{s \neq r}\), which equals 1 when \(s \neq r\) and 0 otherwise.
To maintain gradient compatibility during training, this discrete indicator can be incorporated using a straight-through estimator \citep{bengio2013estimating}.

\section{Method}
\label{sec:meth}

We introduce Equivariant Neural Eikonal Solver (\texttt{E-NES}), which extends Equivariant Neural Fields to efficiently solve eikonal equations by leveraging geometric symmetries. Our approach incorporates steerability constraints that enable generalization across group transformations without explicit data augmentation. We present the theoretical framework (Section~\ref{sec:theory}), detail our equivariant architecture (Section~\ref{sec:model}), introduce a technique for computing fundamental joint-invariants (Section~\ref{sec:comp_inv}), and describe our physics-informed training methodology (Section~\ref{sec:train}).

\subsection{Theoretical Framework}
\label{sec:theory}

We extend the Equivariant Neural Field architecture introduced in \cite{wessels_grounding_2024} to represent solutions of the eikonal equation. Let $(\mathcal{M}, \mathcal{G})$ denote the input Riemannian manifold on which the eikonal equations are defined, and let $G$ be a Lie group acting regularly on $\mathcal{M}$. We introduce a conditioning variable, represented as a geometric point cloud $z = \{ (g_i, \mathbf{c}_i) \}_{i=1}^N$, which consists of $N$ so-called \emph{pose-context} pairs. Here, each $g_i \in G$ is referred to as a \emph{pose}, and each corresponding $\mathbf{c}_i \in \mathbb{R}^d$ is the associated \emph{context} vector. We will denote the space of pose-context pairs as the product manifold $\mathcal{Z} = G \times \mathbb{R}^d$, so that $z$ is an element of the power set $\mathscr{P}(\mathcal{Z})$. This representation naturally supports a $G$-group action defined by $g \cdot z = \{ (g\cdot g_i, \mathbf{c}_i) \}_{i=1}^N$.

In the setting of the factored eikonal equation, consider a solution $T_l$ satisfying Equation~\eqref{eq:TwoPointEikonal1} for the velocity field $v_l$. We associate this solution with the conditioning variable $z_l$, such that our conditional neural field $T_{\theta}(s, r; z_l) = \tilde{d}(s, r)\, \tau_{\theta}(s, r; z_l)$ is trained to approximate $T_{\theta}(s, r; z_l) \approx T_l(s, r)$, for all $s, r \in \mathcal{M}$. Here, $\theta$ denotes the network weights.

The steerability constraint, i.e.,
\begin{equation}
T_{\theta}(s, r; g\cdot z)= T_{\theta}(g^{-1}\cdot s,\,g^{-1}\cdot r;\,z) 
\quad
\text{for all }
(s,r,z)\in\mathcal{M}\times\mathcal{M}\times\mathscr{P}(\mathcal{Z}),\label{eq:steerability}
\end{equation}
incorporates equivariance, enabling the network to generalize across all transformations \(g\in G\) without requiring explicit data augmentation, thus significantly enhancing data efficiency. Consequently, solving the eikonal equation for one velocity field automatically extends to its entire family under group actions, as illustrated in Figure~\ref{fig:steerability}. This property is formally stated in the following proposition:

\begin{Definition}[$g$-steered metric]\label{def:g_steered_metric}
    For all $g\in G$, define the $g$-\textit{steered metric} $\mathcal{G}^g: T\mathcal{M} \times T\mathcal{M} \to \mathbb{R}$ as:
    \[
    \mathcal{G}^g_p\left(\dot{u},\dot{v}\right) :=\mathcal{G}_{gp}\left((\diff L_{g^{-1}}(g\cdot p))^*[\dot{u}], (\diff L_{g^{-1}}(g\cdot p))^*[\dot{v}]\right) \quad \text{for }p\in \mathcal{M}, \text{ and } \dot{u},\dot{v}\in T_{p}\mathcal{M},
    \]
    where $L_{g^{-1}}: \mathcal{M} \to \mathcal{M}$ is the diffeomorphism defined by $L_{g^{-1}}(p) = g^{-1} \cdot p$.
\end{Definition}

\begin{proposition}[Steered Eikonal Solution]
\label{prop:velocity_group_action}
Let $T_{\theta}: \mathcal{M}\times \mathcal{M}\times \mathscr{P}(\mathcal{Z})\to \mathbb{R}_+$ be a conditional neural field satisfying the steerability property \eqref{eq:steerability}, and let $z_l$ be the conditioning variable representing the solution of the eikonal equation for $v_l: \mathcal{M}\to \mathbb{R}_+^*$, i.e.,  
$T_{\theta}(s, r; z_l) \approx T_l(s, r)$ for $T_l$ satisfying Equation~\eqref{eq:TwoPointEikonal1} for the velocity field $v_l$. Let $\mathcal{G}^g$ be a $g$-steered metric (Definition \ref{def:g_steered_metric}).
Then:
\begin{enumerate}
\item The map $\mu: G \times (\mathcal{M}\to \mathbb{R}_+^*) \to (\mathcal{M}\to \mathbb{R}_+^*)$ defined by 
\begin{equation}
\mu(g, v_l)(s) := \bigl\|\operatorname{grad}_{g^{-1}s}\,T_l(g^{-1}\cdot s,\;g^{-1}\cdot r)\bigr\|_{\mathcal{G}^g}^{-1},\label{eq:def_mu}
\end{equation}
where $r$ is an arbitrary point in $\mathcal{M}$, \textbf{is a well-defined group action}.  
\item For any $g \in G$, $T_{\theta}(s, r; g\cdot z_l)$ \textbf{solves the eikonal equation} with velocity field $\mu(g, v_l)$.
\end{enumerate}
\end{proposition}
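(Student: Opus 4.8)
The plan is to reduce both parts to a single gradient identity and then bootstrap the group-action axioms from it, so I would establish the computation underlying Part~2 first even though the statement lists well-definedness first. Throughout I abbreviate $\tilde T(s,r) := T_{\theta}(s,r;g\cdot z_l)$, which by the steerability property~\eqref{eq:steerability} equals $T_{\theta}(g^{-1}\cdot s,\,g^{-1}\cdot r;\,z_l)\approx T_l(g^{-1}\cdot s,\,g^{-1}\cdot r)$, and I treat this last approximation as an equality for the analysis. The boundary and symmetry conditions in~\eqref{eq:TwoPointEikonal1} transfer immediately: $\tilde T(s,s)=T_l(g^{-1}\cdot s,\,g^{-1}\cdot s)=0$, and $\tilde T(s,r)=\tilde T(r,s)$ follows from the symmetry of $T_l$ together with the assumed $G$-invariance of the semimetric $\tilde d$, which is exactly what keeps the factorization $\tilde d\cdot\tau_{\theta}$ compatible with~\eqref{eq:steerability}. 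So the substance is the two gradient-norm equations.

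For Part~2 I would fix $r$, differentiate $s\mapsto T_l(g^{-1}\cdot s,\,g^{-1}\cdot r)$ through $L_{g^{-1}}$ by the chain rule, and convert the resulting differential into a Riemannian gradient using the adjoint-of-a-differential identity from the preliminaries. This yields
\begin{equation*}
\operatorname{grad}_{s}\tilde T(s,r)=\bigl(\diff L_{g^{-1}}(s)\bigr)^{*}\!\bigl[\,\operatorname{grad}_{g^{-1}\cdot s}T_l(g^{-1}\cdot s,\,g^{-1}\cdot r)\,\bigr].
\end{equation*}
Writing $w:=\operatorname{grad}_{g^{-1}\cdot s}T_l(g^{-1}\cdot s,\,g^{-1}\cdot r)\in T_{g^{-1}\cdot s}\mathcal{M}$ and taking the $\mathcal{G}$-norm at $s$, the definition of the $g$-steered metric (Definition~\ref{def:g_steered_metric}) evaluated at the base point $p=g^{-1}\cdot s$, for which $g\cdot p=s$, gives precisely $\bigl\|(\diff L_{g^{-1}}(s))^{*}[w]\bigr\|_{\mathcal{G}}=\|w\|_{\mathcal{G}^g}$. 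Hence $\|\operatorname{grad}_{s}\tilde T(s,r)\|_{\mathcal{G}}=\|w\|_{\mathcal{G}^g}=\mu(g,v_l)(s)^{-1}$, which is the source equation of~\eqref{eq:TwoPointEikonal1} for $\tilde T$ with velocity $\mu(g,v_l)$; the receiver equation follows from the identical computation in the second argument.

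This same identity drives Part~1. Well-definedness reduces to showing the defining expression is independent of the auxiliary point $r$: by the computation above it equals $\|\operatorname{grad}_{s}\tilde T(s,r)\|_{\mathcal{G}}^{-1}$, and since $T_l$ solves~\eqref{eq:TwoPointEikonal1} its plain $\mathcal{G}$-magnitude $\|w\|_{\mathcal{G}}=v_l(g^{-1}\cdot s)^{-1}$ is already $r$-free. The remaining task is to verify that passing to the $g$-steered norm does not reintroduce $r$-dependence through the direction of $w$. I expect \emph{this $r$-independence to be the main obstacle} and the step demanding the most care, since $(\diff L_{g^{-1}}(s))^{*}$ can in principle rescale the norm anisotropically; it is transparent when $G$ acts by isometries, where $\mathcal{G}^g=\mathcal{G}$ and $\mu(g,v_l)=v_l\circ L_{g^{-1}}$, which covers all the homogeneous spaces treated here.

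Finally, for the group-action axioms I would avoid manipulating $\mathcal{G}^{gh}$ directly and instead bootstrap from Part~2 using that the pose-context action satisfies $e\cdot z_l=z_l$ and $g\cdot(h\cdot z_l)=(gh)\cdot z_l$. The identity axiom $\mu(e,v_l)=v_l$ is then immediate from $\mathcal{G}^e=\mathcal{G}$ and $L_e=\mathrm{id}$, while applying Part~2 twice shows that $T_{\theta}(\,\cdot\,;(gh)\cdot z_l)$ solves the eikonal equation simultaneously with velocity $\mu(gh,v_l)$ and with velocity $\mu(g,\mu(h,v_l))$. Because an eikonal solution determines its velocity pointwise via $v(s)=\|\operatorname{grad}_{s}T(s,r)\|_{\mathcal{G}}^{-1}$, these two velocities must coincide, giving $\mu(gh,v_l)=\mu(g,\mu(h,v_l))$ and completing the verification that $\mu$ is a group action.
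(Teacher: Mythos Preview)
Your derivation of Part~2 and of the identity axiom matches the paper's: both obtain $\operatorname{grad}_s\tilde T=(\diff L_{g^{-1}}(s))^*[w]$ via the chain rule and the adjoint, then identify $\|(\diff L_{g^{-1}}(s))^*[w]\|_{\mathcal{G}}$ with $\|w\|_{\mathcal{G}^g}$ straight from Definition~\ref{def:g_steered_metric}. Where you diverge is compatibility. The paper proves $\mu(g,\mu(h,v_l))=\mu(gh,v_l)$ by writing out both sides as steered-metric norms and showing $(\mathcal{G}^h)^g=\mathcal{G}^{gh}$ directly, which reduces to the one-line adjoint identity $(\diff L_{(gh)^{-1}})^*=(\diff L_{g^{-1}})^*\circ(\diff L_{h^{-1}})^*$. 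Your bootstrapping route---reading the velocity off the same eikonal solution twice via $(gh)\cdot z_l=g\cdot(h\cdot z_l)$---is also valid and arguably more conceptual, at the cost of invoking Part~2 a second time with $T_l^h$ playing the role of $T_l$; the paper's approach instead isolates a reusable structural fact about the steered metric itself. On the $r$-independence you single out as the main obstacle: the paper's proof does not address it at all and checks only the two group-action axioms, so your remark that it is automatic precisely in the isometric (and, via Corollary~\ref{col:simple_steer}, conformal) cases is an honest sharpening rather than a gap relative to the paper.
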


For the common cases where the group action is either isometric or conformal, the expression for the associated velocity fields admits a simpler form:

\begin{corollary}
\label{col:simple_steer}
Assume the hypotheses of Proposition~\ref{prop:velocity_group_action}, then the group action $\mu: G \times (\mathcal{M}\to \mathbb{R}_+^*) \to (\mathcal{M}\to \mathbb{R}_+^*)$ is given by:
\begin{enumerate}
    \item $\mu(g,v_l)(s)= v_l(g^{-1}\cdot s)$ if $G$ acts isometrically on $\mathcal{M}$.
    \item $\mu(g,v_l)(s)= \Omega(g, s)\, v_l(g^{-1}\cdot s)$ if $G$ acts conformally on $\mathcal{M}$ with conformal factor $\Omega(g, s) > 0$, i.e., $
    \mathcal{G}_{gs}\left(\diff L_{g}(s)[\dot{s}_1],\, \diff L_{g}(s)[\dot{s}_2] \right) = \Omega(g, s)^2\,  \mathcal{G}_{s}\left( \dot{s}_1,\, \dot{s}_2 \right)$, $\forall\, \dot{s}_1, \dot{s}_2 \in T_s \mathcal{M}.$
\end{enumerate}
\end{corollary}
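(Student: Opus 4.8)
The plan is to evaluate the action $\mu$ of Equation~\eqref{eq:def_mu} by comparing the $g$-steered metric $\mathcal{G}^g$ of Definition~\ref{def:g_steered_metric} with the base metric $\mathcal{G}$ at the relevant base point. The key observation is that the vector $\operatorname{grad}_{g^{-1}s} T_l(g^{-1}\!\cdot\! s, g^{-1}\!\cdot\! r)$ lives in $T_{g^{-1}s}\mathcal{M}$ and already has $\mathcal{G}$-norm equal to $v_l(g^{-1}\!\cdot\! s)^{-1}$ by the first line of \eqref{eq:TwoPointEikonal1}; hence the whole statement reduces to expressing $\|\cdot\|_{\mathcal{G}^g}$ at $p = g^{-1}\!\cdot\! s$ as a pointwise scalar multiple of $\|\cdot\|_{\mathcal{G}}$. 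I would extract this scalar by analyzing the linear map $\Phi := \diff L_{g^{-1}}(g\!\cdot\! p)\colon T_{gp}\mathcal{M}\to T_p\mathcal{M}$ and its adjoint $\Phi^*$, since by definition $\mathcal{G}^g_p(\dot u,\dot v) = \mathcal{G}_{gp}(\Phi^*\dot u,\Phi^*\dot v)$.

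For claim (1), isometry of $L_{g}$ (equivalently of $L_{g^{-1}}$) makes $\Phi$ a linear isometry between $(T_{gp}\mathcal{M},\mathcal{G}_{gp})$ and $(T_p\mathcal{M},\mathcal{G}_p)$, so its adjoint coincides with its inverse and $\mathcal{G}^g_p(\dot u,\dot v) = \mathcal{G}_{gp}(\Phi^{-1}\dot u,\Phi^{-1}\dot v) = \mathcal{G}_p(\dot u,\dot v)$, i.e.\ $\mathcal{G}^g = \mathcal{G}$. Substituting into \eqref{eq:def_mu} and invoking the eikonal constraint \eqref{eq:TwoPointEikonal1} at $s'=g^{-1}\!\cdot\! s$ immediately gives $\mu(g,v_l)(s) = \|\operatorname{grad}_{g^{-1}s} T_l(g^{-1}\!\cdot\! s, g^{-1}\!\cdot\! r)\|_{\mathcal{G}}^{-1} = v_l(g^{-1}\!\cdot\! s)$.

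For claim (2), I would factor $\diff L_g(p) = \Omega(g,p)\,U$ with $U$ a linear isometry, and use the cocycle relation $\Omega(g^{-1}, g\!\cdot\! p)\,\Omega(g,p) = 1$—obtained by differentiating $L_{g^{-1}}\circ L_g = \mathrm{id}$ and matching conformal factors—to conclude that $\Phi = \diff L_{g^{-1}}(g\!\cdot\! p)$ equals $\Omega(g,p)^{-1}W$ for a linear isometry $W$. Propagating the scaling through the adjoint yields $\Phi^* = \Omega(g,p)^{-1}W^{-1}$, whence $\mathcal{G}^g_p = \Omega(g,p)^{-2}\,\mathcal{G}_p$ and thus $\|\cdot\|_{\mathcal{G}^g} = \Omega(g,p)^{-1}\|\cdot\|_{\mathcal{G}}$ at $p$. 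Setting $p = g^{-1}\!\cdot\! s$ and again applying \eqref{eq:TwoPointEikonal1} produces the conformal factor multiplying $v_l(g^{-1}\!\cdot\! s)$.

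The main obstacle is the careful bookkeeping of the adjoint relative to the point-dependent metric: $(\diff L_{g^{-1}}(g\!\cdot\! p))^*$ is taken with $\mathcal{G}_p$ on the domain and $\mathcal{G}_{gp}$ on the codomain, so the identities $\Phi^* = \Phi^{-1}$ (isometric) and $\Phi^* = \Omega(g,p)^{-1}W^{-1}$ (conformal) must be verified against these two distinct inner products rather than a single fixed background one. A companion subtlety is the base point at which the conformal factor is read off: the computation naturally outputs $\Omega\bigl(g,\, g^{-1}\!\cdot\! s\bigr)$, and reconciling this with the displayed form $\Omega(g,s)$ requires fixing the evaluation-point convention via the cocycle identity above—a point I would pin down on a genuinely position-dependent example (such as a conformal inversion) before finalizing the statement.
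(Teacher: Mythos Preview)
Your approach is essentially the paper's: in the isometric case you use that the adjoint of $\diff L_{g^{-1}}$ is its inverse so $\mathcal{G}^g=\mathcal{G}$, and in the conformal case you extract the scalar factor from the adjoint and then invoke the eikonal constraint \eqref{eq:TwoPointEikonal1} at $g^{-1}\!\cdot\! s$. Your caution about the evaluation point of $\Omega$ is well-placed—the paper's proof tacitly writes $\Omega(g^{-1},s)=\Omega(g,s)^{-1}$, whereas the cocycle identity you derive gives $\Omega(g^{-1},s)=\Omega(g,g^{-1}\!\cdot\! s)^{-1}$, so the two agree only when $\Omega(g,\cdot)$ is position-independent (as in the scaling example the paper uses); for a genuinely position-dependent conformal factor, the honest output is $\mu(g,v_l)(s)=\Omega(g,g^{-1}\!\cdot\! s)\,v_l(g^{-1}\!\cdot\! s)$, exactly as your bookkeeping predicts.
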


Since $\mu: G \times (\mathcal{M}\to \mathbb{R}_+^*) \to (\mathcal{M}\to \mathbb{R}_+^*)$ constitutes a group action on the space of velocity fields, its orbit space induces a partition. Therefore, by obtaining the conditioning variable associated with one representative of an orbit, we effectively learn to solve the eikonal equation for all velocities within that equivalence class.

Finally, steerability also relates $\operatorname{grad}_sT_{\theta}(s,r;z)$ to $\operatorname{grad}_sT_{\theta}(s,r;g\cdot z)$ (see Lemma~\ref{lem:grad}).  Hence, any geodesic extracted by backtracking the gradient of $T_{\theta}$ for one field generalizes to its transformed counterpart.  This property is essential for applications such as geodesic segmentation \citep{chen_active_2019}, motion planning \citep{ni_ntfields_2023}, and ray tracing \citep{abgrall_big_1999}.

Further details regarding the steerability property for eikonal equations, as well as proofs for Proposition~\ref{prop:velocity_group_action} and Corollary~\ref{col:simple_steer}, can be found in the Appendix (Section~\ref{app:steer}).

\subsection{Model Architecture}
\label{sec:model}
We define the Equivariant Neural Eikonal Solver (\texttt{E-NES}) as $\tau_{\theta}= P \circ E$, where $E: \mathcal{M}\times \mathcal{M}\times \mathscr{P}(\mathcal{Z})\to\mathbb{R}^L$ is our extention of the invariant cross-attention encoder of \cite{wessels_grounding_2024} and $P: \mathbb{R}^L\to \mathbb{R}_+$ is the bounded projection head from \cite{grubas_neural_2023}.

\paragraph{1. Invariant Cross-Attention Encoder.} 
To enforce the steerability, our encoder builds invariant representations under $G$-symmetries of $(s, r, g_i)$. For each $g_i \in z$, we compute:
\begin{equation}
\label{eq:inv}
   \mathbf{a}_i^{(s, r)} = \operatorname{RFF}\bigl(\operatorname{Inv}(s, r, g_i)\bigr), \quad \mathbf{a}_i^{(r, s)} = \operatorname{RFF}\bigl(\operatorname{Inv}(r, s, g_i)\bigr), 
\end{equation}

where $\operatorname{Inv}(\cdot)$ yields a complete set of functionally independent invariants via the moving frame method (as we will explain in Section~\ref{sec:comp_inv}), and $\operatorname{RFF}$ is a random Fourier feature mapping \citep{tancik_fourier_2020}. To enforce $\tau_{\theta}(s,r;z) = \tau_{\theta}(r,s;z)$, we use $\mathbf{\tilde{a}}_i =(\mathbf{a}_i^{(s, r)} + \mathbf{a}_i^{(r, s)})/2$, the Reynolds operator over $S_2$ \citep{dym_equivariant_2024}. Then the invariant cross-attention encoder is computed as:
\[
E(s, r; z) = \text{FFN}_E\left(\sum_{i=1}^N \alpha_i\, v(\mathbf{\tilde{a}}_i, \mathbf{c}_i)\right) \quad \text{with }\ \alpha_i = \frac{\exp(q(\mathbf{\tilde{a}}_i)^\top k(\mathbf{c}_i) / \sqrt{d_k})}{\sum_{j=1}^N \exp(q(\mathbf{\tilde{a}}_j)^\top k(\mathbf{c}_j) / \sqrt{d_k})}, 
\]
where the attention maps and values are parameterized as:
\begin{align*}
q(\mathbf{\tilde{a}}) &= W_q \mathbf{\tilde{a}}, \quad k(\mathbf{c}) = W_k \operatorname{LN}(W_c \mathbf{c}), \\
v(\mathbf{\tilde{a}}, \mathbf{c}) &=  \text{FFN}_v(W_v\operatorname{LN}(W_c \mathbf{c}) \odot (1 + \text{FFN}_\gamma(\mathbf{\tilde{a}})) + \text{FFN}_\beta( \mathbf{\tilde{a}})),
\end{align*}
with $\text{FFN}_{E}, \text{FFN}_{v}, \text{FFN}_{\gamma}, \text{FFN}_{\beta}$ being small multilayer perceptron (MLP) using GELU activation functions, and $W_q, W_k, W_c, W_v, W_\gamma, W_\beta$ learnable linear maps.

\paragraph{2. Bounded Velocity Projection.}
The encoder output $\mathbf{h} = E(s, r; z)$ passes through a second MLP network $\text{FFN}_P$ with AdaptiveGauss activations to model sharp wavefronts and caustics \citep{grubas_neural_2023}. The final output is projected into $[1/v_{\max}, 1/v_{\min}]$ by:
\[
P(\mathbf{h}) = \left(\frac{1}{v_{\min}} - \frac{1}{v_{\max}}\right)\sigma(\alpha_0 \cdot \operatorname{FFN}_P(\mathbf{h})) + \frac{1}{v_{\max}},
\]
where $\sigma$ is the sigmoid function and $\alpha_0 \in \mathbb{R}_+$ is a learnable temperature parameter.

\vspace{-5pt}

\subsection{Computation of Fundamental Joint-Invariants}
\label{sec:comp_inv}
Let $\Pi = \mathcal{M}_1 \times \cdots \times \mathcal{M}_m$ denote a product of Riemannian manifolds, each equipped with a smooth, regular action $\delta_i: G \times \mathcal{M}_i \to \mathcal{M}_i$ by a Lie group $G$. These individual actions induce a natural diagonal action on the product $\Pi$ given by $
\delta(g, (p_1,\ldots,p_m)) = (\delta_1(g, p_1),\ldots,\delta_m(g, p_m))$.

As observed in \cite{olver_joint_2001}, when the group action on $\Pi$ is not free, the standard moving frame method is not directly applicable. In such cases, alternative techniques—such as those discussed in Section~\ref{sec:rel}—are typically employed to compute invariants.

We show that the moving frame method can be restored in this setting by augmenting the space $\Pi$ with an auxiliary (learnable) group element, yielding an extended space $\widebar{\Pi} = \Pi \times G$. On this augmented space, the group action admits a canonicalization procedure with explicitly computable invariants:

\begin{theorem}[Canonicalization via latent-pose extension]
\label{thm:augmented-moving-frame}
Let $\Pi$ and $G$ be as above. Define a new group action $\widebar{\delta}: G \times \widebar{\Pi} \to \widebar{\Pi}$ by
$\widebar{\delta}(h, (p_1,\ldots,p_m,g)) = (\delta(h,(p_1,\ldots,p_m)), h \cdot g)$. Then, the set $ \left\{ \delta_i(g^{-1}, p_i) \right\}_{i=1}^m $
forms a complete collection of functionally independent invariants of the action $\widebar{\mu}$.
\end{theorem}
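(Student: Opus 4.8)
The plan is to recognize that, although the diagonal action $\delta$ on $\Pi$ need not be free, the \emph{augmented} action $\widebar{\delta}$ on $\widebar{\Pi}=\Pi\times G$ always is, because its restriction to the $G$-factor is left translation. Concretely, the orbit through $(p_1,\dots,p_m,g)$ is the graph $\{(\delta(g'g^{-1},p_1,\dots,p_m),\,g') : g'\in G\}$ over the $G$-coordinate; being the graph of a smooth map, each orbit is an embedded, connected, $r$-dimensional submanifold, where $r=\dim G$. This simultaneously shows that $\widebar{\delta}$ is free (trivial stabilizers, since $hg=g$ forces $h=e$) and regular, so the moving frame machinery of \citep{olver_joint_2001, olver_equivalence_1995} applies directly. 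I would then exhibit the global cross-section $\mathcal{K}=\Pi\times\{e\}$, which each orbit meets exactly once—at the parameter $g'=e$, i.e.\ at $(\delta(g^{-1},p_1,\dots,p_m),e)$—together with the associated right moving frame $\rho(p_1,\dots,p_m,g)=g^{-1}$, whose defining equivariance $\rho(\widebar{\delta}(h,\cdot))=\rho(\cdot)\,h^{-1}$ is immediate from $(hg)^{-1}=g^{-1}h^{-1}$.

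Invariantization then amounts to sending each point to its unique intersection with $\mathcal{K}$, i.e.\ applying $\rho$: $\rho(p,g)\cdot(p,g)=g^{-1}\cdot(p,g)=(\delta_1(g^{-1},p_1),\dots,\delta_m(g^{-1},p_m),e)$. The trailing $e$ is a constant carrying no information, so the nontrivial invariantized coordinates are exactly the claimed $\{\delta_i(g^{-1},p_i)\}_{i=1}^m$. To verify invariance by hand, I would compute the effect of $\widebar{\delta}(h,\cdot)$ on the $i$-th coordinate using only the left-action axiom $\delta_i(a,\delta_i(b,x))=\delta_i(ab,x)$: under $h$ we have $(p_i,g)\mapsto(\delta_i(h,p_i),hg)$, so the coordinate becomes $\delta_i((hg)^{-1},\delta_i(h,p_i))=\delta_i(g^{-1}h^{-1}h,p_i)=\delta_i(g^{-1},p_i)$, as required.

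For completeness and functional independence I would package the invariants as the map $\Phi:\widebar{\Pi}\to\Pi$, $\Phi(p,g)=(\delta_1(g^{-1},p_1),\dots,\delta_m(g^{-1},p_m))$, and show its fibers coincide exactly with the $\widebar{\delta}$-orbits. Indeed $\Phi^{-1}(q)=\{(\delta(g',q),g'):g'\in G\}$, which is precisely the orbit of $(q,e)$; equivalently, $\Phi(p,g)=\Phi(p',g')$ holds iff $h=g'g^{-1}$ carries $(p,g)$ to $(p',g')$, giving orbit separation. Since $\Phi$ restricts to the identity on $\mathcal{K}$ and is $G$-invariant with $r$-dimensional fibers equal to the orbits, it is a surjective submersion onto $\Pi$ with $\operatorname{rank}\diff\Phi=\dim\Pi$ everywhere, so its $m$ vector-valued components are functionally independent and the induced map $\widebar{\Pi}/G\to\Pi$ is a (local) diffeomorphism. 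Finally, the Replacement Theorem for moving frames—or equivalently descent to the quotient—yields that every $\widebar{\delta}$-invariant $J$ factors as $J=\widebar{J}\circ\Phi$, establishing maximal expressivity.

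The main obstacle I anticipate is not any single computation—each step above is a short application of the action axioms—but rather the bookkeeping needed to justify the regularity and submersion claims cleanly, so that invoking the moving frame theorem is fully licensed. In particular, care is required to confirm that $\widebar{\delta}$ inherits regularity from the component actions $\delta_i$ (the graph description makes this transparent, but it must be stated) and to argue functional independence \emph{without} assuming the $\delta_i$ are themselves free. Once the free-and-regular status of $\widebar{\delta}$ is secured, completeness and independence follow from the general theory with essentially no extra work.
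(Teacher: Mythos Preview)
Your proposal is correct and follows essentially the same approach as the paper: both establish freeness of $\widebar{\delta}$ via the trivial stabilizer of left translation on the $G$-factor, exhibit the cross-section $\Pi\times\{e\}$ with moving frame $\rho(p,g)=g^{-1}$, and read off the invariants as the non-constant coordinates of the canonicalization map. Your treatment is in fact more thorough---you justify regularity explicitly via the graph description of orbits and give a direct orbit-separation argument through the map $\Phi$, whereas the paper simply assumes regularity is inherited and cites the general moving-frame theorem for completeness and functional independence.
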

\vspace{-5pt}
\begin{proof}[\textbf{Sketch of proof }(full at Appendix, Section~\ref{app:inv})]
To verify that the action $\widebar{\mu}$ is free, we show that the isotropy group of any point $(p_1, \ldots, p_m, g) \in \widebar{\Pi}$ is trivial. Specifically, this subgroup satisfies $
G_{(p_1, \ldots, p_m, g)} = G_{p_1} \cap \cdots \cap G_{p_m} \cap G_g$, where $G_{p_i}$ denotes the isotropy subgroup of $p_i$ under $\delta_i$, and $G_g$ is the isotropy subgroup of $g \in G$ under left multiplication. Since $h\cdot g = g$ implies $h = e$ in a group, we have $G_g = \{e\}$. Thus, the intersection is trivial, and $\widebar{\delta}$ defines a free action. The moving frame method then guarantees a complete set of invariants, which are exactly $\{\delta_i(g^{-1}, p_i)\}_{i=1}^m$.
\end{proof}
\vspace{-5pt}
This result formally justifies the construction proposed in \cite{wessels_grounding_2024}, showing that the method yields a complete set of functionally independent invariants and thus guarantees full expressivity. Moreover, it extends the applicability of Equivariant Neural Fields to settings where $G$ acts regularly—but not necessarily freely nor transitive—on product manifolds. In particular, the invariant computation used in our \texttt{E-NES} architecture, as presented in Equation~\eqref{eq:inv}, takes the form 
\[\operatorname{Inv}(s, r, g_i) = (\, g_i^{-1} \cdot s,  g_i^{-1} \cdot r) \in \mathcal{M}\times \mathcal{M}.
\]

\subsection{Training details}
\label{sec:train}

Let \(\mathcal{V} = \{v_l : \mathcal{M} \to [v_{\min}, v_{\max}]\}_{l=1}^K\) be our training set of \(K\) velocity fields over the domain \(\mathcal{M}\). At each iteration, we sample a batch \(\mathcal{B}\) with \(B\) velocity fields \(\{v_i\}_{i=1}^B \subseteq \mathcal{V}\) and \(N_{sr}\) source–receiver pairs \(\{(s_{i,j}, r_{i,j})\}_{j=1}^{N_{sr}} \subset \mathcal{M}^2\) for each $v_i$. Let $\{z_i\}_{i=1}^B$ be the conditioning variables associated with $\{v_i\}_{i=1}^B$.  To enforce the Eikonal equation, we express it in Hamiltonian form as $\mathcal{H}(s, r, T) = v(s)^2 \|\operatorname{grad}_s T(s, r)\|_{\mathcal{G}}^2 - 1$, where the Eikonal equation is satisfied when $\mathcal{H} = 0$ \citep{grubas_neural_2023}. We then minimize a physics-informed loss that penalizes deviations from this zero-level set at both source and receiver locations:
\vspace{-4pt}
\begin{align}
\label{eq:loss}
\begin{split}
\displaystyle L(\theta, \{z_i\}_{i=1}^B, \mathcal{B})
=\frac{1}{B\,N_{sr}}
\sum_{i=1}^B \sum_{j=1}^{N_{sr}}&
\Bigl( \left| v_i(s_{i,j})^2 \|\operatorname{grad}_{s} T_{\theta}(s_{i,j}, r_{i,j}; z_i)\|_{\mathcal{G}}^2  - 1 \right|
\\
&+ \left| v_i(r_{i,j})^2 \|\operatorname{grad}_{r} T_{\theta}(s_{i,j}, r_{i,j}; z_i)\|_{\mathcal{G}}^2  - 1 \right| \Bigr).  
\end{split}
\end{align}%
Fitting is performed in the two modes presented in \cite{wessels_grounding_2024}. The first one is Autodecoding \citep{park2019deepsdf} -- where $z_l$ and $\theta$ are optimised simultaneously over a dataset. The second one is Meta-learning \citep{tancik2021learned, cheng2024meta} -- where optimization is split into an outer and inner loop, with $\theta$ being optimized in the outer loop and $z$ being re-initialized every outer step to solve the eikonal equation of the velocity fields in the batch in a limited number of SGD steps in the inner loop. We refer the readers to the Appendix (Section~\ref{app:train}) for further details.

\section{Experiments}
\label{sec:exp}
We evaluate Equivariant Neural Eikonal Solvers (\texttt{E-NES}) on the 2D OpenFWI benchmark \citep{deng2022openfwi} and extend our analysis to 3D settings to assess scalability and spherical geometry to show its generalization capabilities. Implementation details are provided in the Appendix (Section~\ref{app:experimental-details}). The code, including the experiments, is provided in the previously-mentioned public repository.
\vspace{-5pt}
\subsection{Benchmark on 2D-OpenFWI}
Following \cite{mei_fully_2024}, we utilize ten velocity field categories from OpenFWI: FlatVel-A/B, CurveVel-A/B, FlatFault-A/B, CurveFault-A/B, and Style-A/B, each defined on a $70 \times 70$ grid. We train \texttt{E-NES} on 500 velocity fields per category and evaluate on 100 test fields, positioning four equidistant source points at the top boundary and computing travel times to all receiver coordinates. Additional evaluations using a denser $14 \times 14$ source grid are presented in the Appendix (Section~\ref{app:full_grid}).

Performance is quantified using relative error (RE) and relative mean absolute error (RMAE):
\[
 \mathit{RE} := \frac{1}{N_s}\sum^{N_s}_{i=1}\sqrt{\frac{\sum^{M_p}_{j=1}|T^i_j - \hat{T}^i_j|}{\sum^{M_p}_{j=1}|T^i_j|^2}}, \quad
    \mathit{RMAE} := \frac{1}{N_s}\sum^{N_s}_{i=1}\frac{\sum^{M_p}_{j=1}|T^i_j - \hat{T}^i_j|}{\sum^{M_p}_{j=1}|T^i_j|}, 
\]
where $N_s$ represents the total number of samples, $M_p$ denotes the total number of evaluated source-receiver pairs, $T^i_j$ indicates the $j$-th point of the $i$-th ground truth travel time, and $\hat{T}$ represents the model's predicted travel times. The ground truth values are generated using the second-order factored Fast Marching Method \citep{treister2016fast}.

\begin{table}[!t]
\renewcommand{\arraystretch}{1.3}
\caption{Performance comparison on OpenFWI datasets against FC-DeepONet. Colours denote \textbf{\colorfirst{Best}}, \colorsecond{Second best}, and \colorthird{Third best} performing setups for each dataset. Fitting time represents the total computational time required to fit the latent conditioning variables for all 100 testing velocity fields.}
\label{tab:2d-res-top}
\centering
\medskip
\resizebox{\textwidth}{!}{
\begin{tabular}{lcccccccc}
\toprule
             &                   &              & \multicolumn{6}{c}{\texttt{E-NES}}                                                                      \\
             \cmidrule(lr){4-9} 
 &
  \multicolumn{2}{c}{FC-DeepONet} &
  \multicolumn{2}{c}{Autodecoding (100 epochs)} &
  \multicolumn{2}{c}{Autodecoding (convergence)} &
  \multicolumn{2}{c}{Meta-learning} \\
   \cmidrule(lr){2-3}  \cmidrule(lr){4-5} \cmidrule(lr){6-7} \cmidrule(lr){8-9}
           Dataset  & RE  ($\downarrow$)               & Fitting (s) & RE ($\downarrow$)       & Fitting (s) & RE    ($\downarrow$)                     & Fitting (s) & RE  ($\downarrow$)      & Fitting (s) \\
           \midrule
FlatVel-A    & \colorfirst{0.00277}  & $\sim0.615$ & \colorthird{0.00952} & 223.31 & \colorsecond{0.00506}         & 1120.25 & 0.01065 & 5.92 \\
CurveVel-A   & \colorthird{0.01878}          & $\sim0.615$ & \colorsecond{0.01348}  & 222.72 & \colorfirst{0.00955} & 1009.67 & 0.02196 & 5.91 \\
FlatFault-A  & \colorfirst{0.00514}  & $\sim0.615$ & \colorthird{0.00857} & 222.61 & \colorsecond{0.00568}         & 1014.45 & 0.01372 & 5.92 \\
CurveFault-A & \colorsecond{0.00963}           & $\sim0.615$ & \colorthird{0.01108}  & 222.89 & \colorfirst{0.00820} & 1123.90 & 0.02086 & 5.92 \\
Style-A      & 0,03461          & $\sim0.615$ & \colorsecond{0.01034} & 222.00 & \colorfirst{0.00833} & 1117.99 & \colorthird{0.01317} & 5.92\\ \midrule
FlatVel-B    & \colorfirst{0.00711} & $\sim0.615$ & \colorthird{0.01581} & 222.74 & \colorsecond{0.00860}          & 1010.32 & 0.02274 & 5.91 \\
CurveVel-B   & \colorthird{0.03410}          & $\sim0.615$ & \colorsecond{0.03203}  & 222.97 & \colorfirst{0.02250}  & 1127.87 & 0.03583 & 5.90 \\
FlatFault-B  & 0.04459          & $\sim0.615$ & \colorsecond{0.01989} & 222.70 & \colorfirst{0.01568} & 1133.98 & \colorthird{0.03058} & 5.93 \\
CurveFault-B & 0.07863          & $\sim0.615$ & \colorsecond{0.02183} & 222.89 & \colorfirst{0.01885} & 893.84  & \colorthird{0.03812} & 5.89 \\
Style-B      & 0.03463          & $\sim0.615$ & \colorsecond{0.01171} & 221.90 & \colorfirst{0.01069} & 896.06   & \colorthird{0.01541} & 5.90 \\  
\bottomrule
\end{tabular}
}
\vspace{-5pt}
\end{table}

\subsubsection{Impact of Steerable Geometric Conditioning}
\begin{wrapfigure}{r}{0.5\textwidth}
  \vspace{-10pt}
  \centering
  \includegraphics[width=\linewidth]{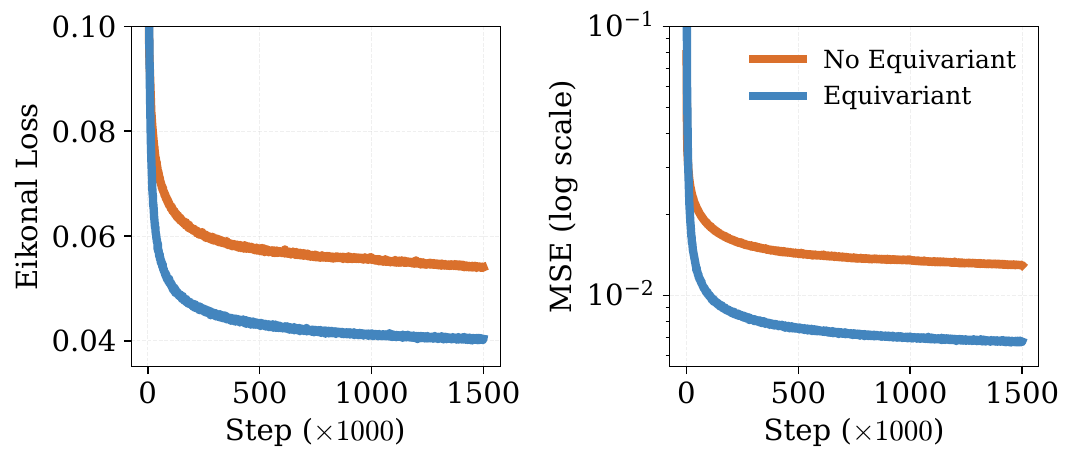}
  \caption{Comparative analysis of equivariant conditioning variables on the Style-B dataset. For non-equivariant models $\mathcal{Z}\cong\mathbb{R}^c$, while equivariant models use $\mathcal{Z}=SE(2)\times \mathbb{R}^c$.}
  \label{fig:equivariance}
  \vspace{-15pt}
\end{wrapfigure}

To empirically validate the theoretical benefits of equivariance in our formulation, we conducted a controlled ablation study comparing \texttt{E-NES} with equivariance ($\mathcal{Z}=SE(2)\times \mathbb{R}^c$) against a variant without equivariance constraints ($\mathcal{Z}\cong\mathbb{R}^c$) on the Style-B dataset. Figure~\ref{fig:equivariance} illustrates consistent performance advantages with equivariance, demonstrated by lower values in both Eikonal loss and mean squared error (MSE) throughout the training process. This empirical validation substantiates our theoretical motivation for incorporating explicit equivariance constraints into the model architecture.

\begin{wrapfigure}{r}{0.5\textwidth}
\captionof{table}{Performance comparison on OpenFWI B-type datasets against Functa. Fitting time represents the total computational time required to fit the latent conditioning variables for all 100 testing velocity fields. Here both methods perform 100 epochs of autodecoding to fit the latents.}
\label{tab:2d-functa}
\centering
\resizebox{0.5\textwidth}{!}{
\begin{tabular}{lcccc}
\toprule

  & \multicolumn{2}{c}{Functa} &
  \multicolumn{2}{c}{\texttt{E-NES}} \\
   \cmidrule(lr){2-3}  \cmidrule(lr){4-5}
           Dataset  & RE  ($\downarrow$)               & Fitting (s) & RE ($\downarrow$)       & Fitting (s) \\
           \midrule

FlatVel-B    & {0.11854} & $12.55$ & \colorfirst{0.01581} & 222.74 \\
CurveVel-B   & {0.11210} & $12.49$ & \colorfirst{0.03203} & 222.97 \\
FlatFault-B  & 0.06428 & $12.50$ & \colorfirst{0.01989} & 222.70 \\
CurveFault-B & 0.06146 & $12.72$ & \colorfirst{0.02183} & 222.89 \\
Style-B      & 0.03106 & $12.33$ & \colorfirst{0.01171} & 221.90 \\  
\bottomrule
\end{tabular}
}

\end{wrapfigure}

To further assess the role of geometric conditioning, we compare our method against Functa \citep{dupont_data_2022}, a common baseline in the literature on conditional neural fields. 
Functa employs SIRENs \citep{sitzmann2020implicit} with sample-specific scale and shift modulation but relies on global latent variables without geometric constraints, providing a clear contrast to our geometric point-cloud formulation of the conditioning variables. 
As shown in Table~\ref{tab:2d-functa}, our method consistently outperforms Functa across all datasets. We attribute this to Functa's global conditioning, which lacks localized representation and explicit geometric constraints.

\begin{wrapfigure}{r}{0.45\textwidth} 
  \vspace{-40pt}
  \centering 

  \begin{subfigure}{\linewidth} 
    \centering
    \includegraphics[width=\linewidth]{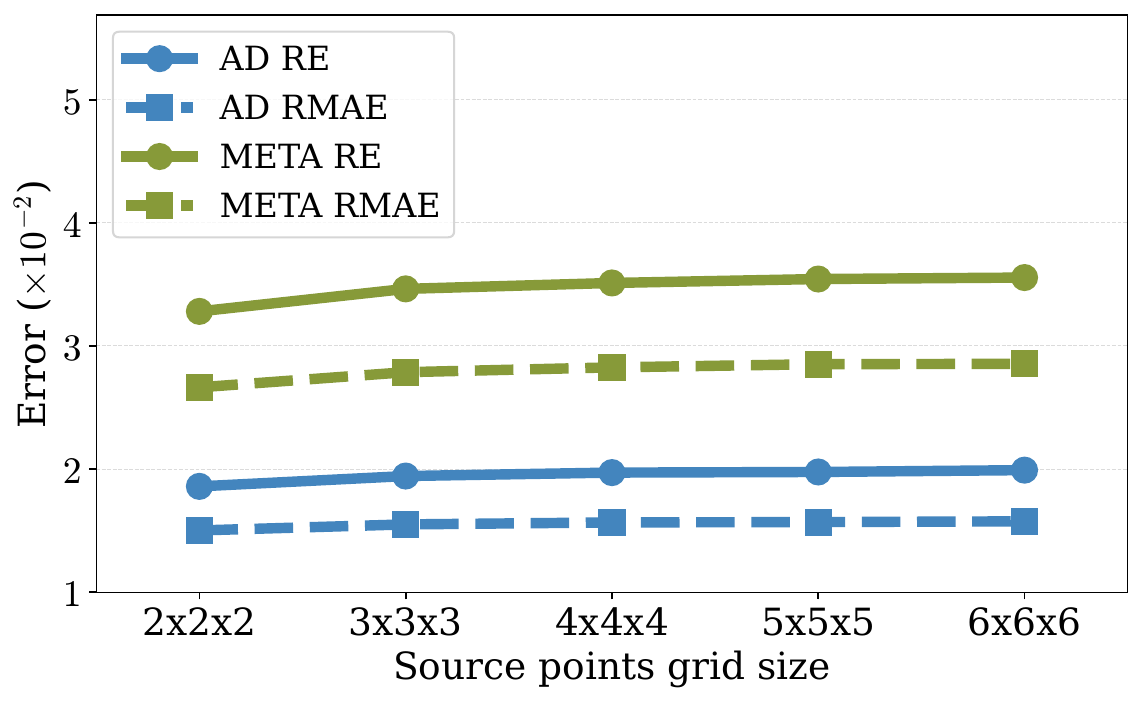}
    \caption{}
    \label{subfig:3d-performance}
  \end{subfigure}%
  \vspace{1ex} 

  \begin{subfigure}{\linewidth} 
    \centering
    \includegraphics[width=\linewidth]{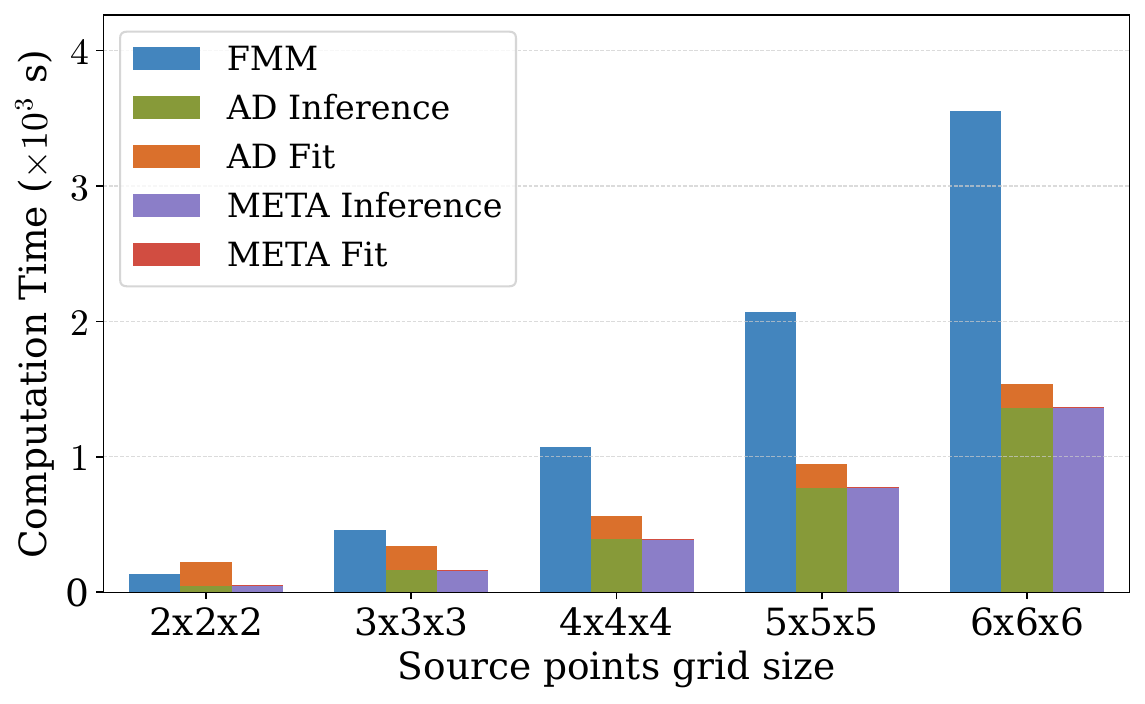} 
    \caption{}
    \label{subfig:3d-comptime}
  \end{subfigure}
\caption{Scaling analysis of \texttt{E-NES} versus FMM on 3D OpenFWI data. (a) Both autodecoding and meta-learning maintain consistent error metrics (RE and RMAE, $\times 10^{-2}$) across increasing grid dimensions. (b) \texttt{E-NES} demonstrates computational advantages (seconds $\times 10^3$) over FMM even at minimal grid sizes, with efficiency gains amplifying as dimensions increase. Note that meta-learning fitting times (approximately 3 seconds) are barely visible in (b) due to their minimal magnitude relative to other displayed times.} 
  \label{fig:main_stacked_wrap} 
    \vspace{-30pt}

\end{wrapfigure}

\vspace{-5pt}
\subsubsection{Performance Comparison}

Table \ref{tab:2d-res-top} presents a systematic comparison between \texttt{E-NES} and FC-DeepONet \citep{mei_fully_2024} across all ten OpenFWI benchmark datasets. We evaluate three configurations of \texttt{E-NES}: autodecoding with 100 epochs (tradeoff between computational efficiency and performance), autodecoding until convergence (optimizing for accuracy), and meta-learning (prioritizing computational efficiency).

Our results demonstrate that \texttt{E-NES} with full autodecoding convergence outperforms FC-DeepONet in seven out of ten datasets, with particularly substantial improvements on the more challenging variants—Style-A/B, FlatFault-B, and CurveFault-B. Even with the reduced computational budget of 100 epochs, \texttt{E-NES} maintains competitive performance across most datasets. The meta-learning approach, while exhibiting moderately higher error rates, delivers remarkable computational efficiency—reducing fitting time from approximately 1000 seconds to under 6 seconds for the total 100 velocity fields, representing a two orders of magnitude improvement. Additional analyses are provided in Appendix~\ref{app:perf-diss}.

The quantitative results are supplemented by qualitative evaluations in the Appendix (Section~\ref{app:vis-res}), including visualizations of travel-time predictions and spatial error distributions across all datasets. For a more detailed analysis of the trade-off between computational efficiency and prediction accuracy, including performance with varying numbers of autodecoding epochs, we refer to the ablation studies in the Appendix (Section~\ref{app:ab-steps}).

\vspace{-5pt}
\subsection{Extending to 3D: Scalability Analysis}

To evaluate scalability to higher dimensions, we extended the Style-B dataset to 3D by extruding 2D velocity fields along the z-axis. Figure \ref{subfig:3d-performance} shows that both autodecoding and meta-learning approaches maintain stable error metrics as grid dimensions increase, demonstrating \texttt{E-NES}'s ability to model continuous fields independent of discretization resolution. Figure \ref{subfig:3d-comptime} shows \texttt{E-NES} maintains efficiency advantages over the Fast Marching Method (FMM) across all evaluated grid dimensions, with this advantage becoming more pronounced at larger scales (total computation time for all 100 test velocity fields). This stability stems from \texttt{E-NES}'s continuous representation, which adapts to the underlying physics without requiring increasingly fine discretization.

\vspace{-8pt}
\subsection{Generalizability to Non-Euclidean Domains}

\begin{wrapfigure}{r}{0.5\textwidth}
\vspace{-13pt}
\centering
\captionof{table}{Performance of our method on Eikonal solvers over the 2-sphere.}
\label{tab:sphere}
\resizebox{0.5\textwidth}{!}{
\begin{tabular}{lccc}
\toprule
{Dataset} & {RE ($\downarrow$)} & {RMAE ($\downarrow$)} & {Fitting Time (s)} \\ 
\midrule
Constant Speed    & 0.013 & 0.012 & 209.2 \\
Spherical Style-B & 0.015 & 0.012 & 207.1 \\
\bottomrule
\end{tabular}
}
\vspace{-5pt}
\end{wrapfigure}
We validate the generality of our framework on the 2-sphere, i.e., on $\mathbb{S}^2\subset \mathbb{R}^3$, with $SO(2)$ steerability (rotations about the $z$-axis). This setting demonstrates two key capabilities: (i) handling non-transitive Lie group actions, and (ii) extending to non-Euclidean geometries.

We test on two velocity field types: \emph{constant speed} fields with velocities uniformly sampled, and \emph{Spherical Style-B} fields, obtained by projecting OpenFWI's 2D Style-B fields onto the sphere via spherical coordinates. As shown in Table~\ref{tab:sphere}, our method achieves strong performance on both benchmarks, effectively learning the sphere's intrinsic geometry and correctly modeling wavefront propagation despite using Euclidean chordal distance $\tilde{d}(s, r)$ in the factorized representation (as described in Section~\ref{sec:meth-eik}). 

Moreover, Figure~\ref{fig:paths} demonstrates that \texttt{E-NES} enables geodesic path planning via gradient integration, yielding optimal trajectories under configurations with and without obstacles. Additional details on how to perform this path-finding task are provided in the Appendix~\ref{app:inf}.

\begin{figure}[t!]
  \centering
  \begin{minipage}[b]{0.32\linewidth}
    \centering
    \begin{subfigure}{\linewidth}
      \centering
      \includegraphics[width=\linewidth]{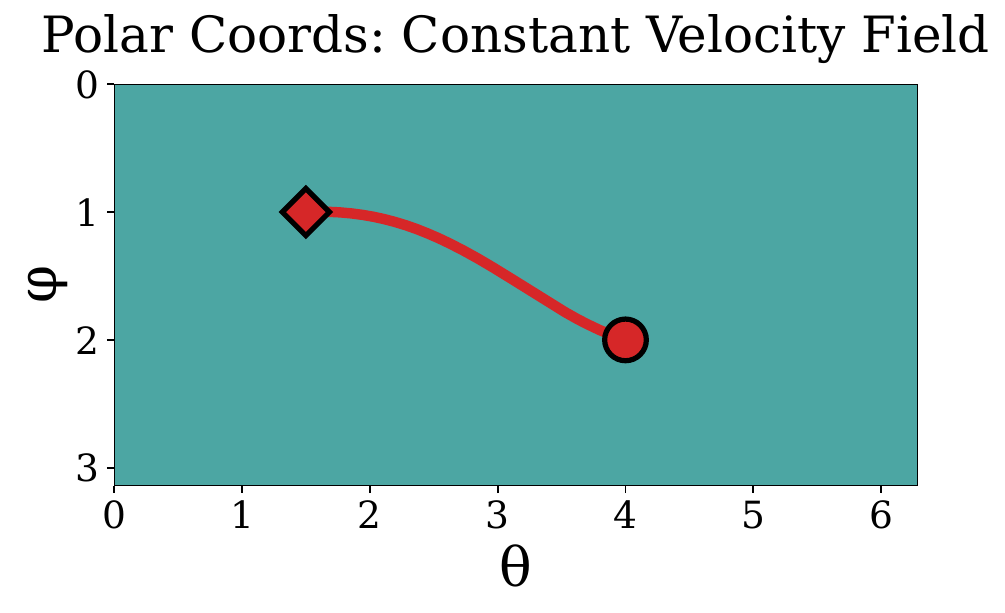}
    \end{subfigure}

    \begin{subfigure}{\linewidth}
      \centering
      \includegraphics[width=\linewidth]{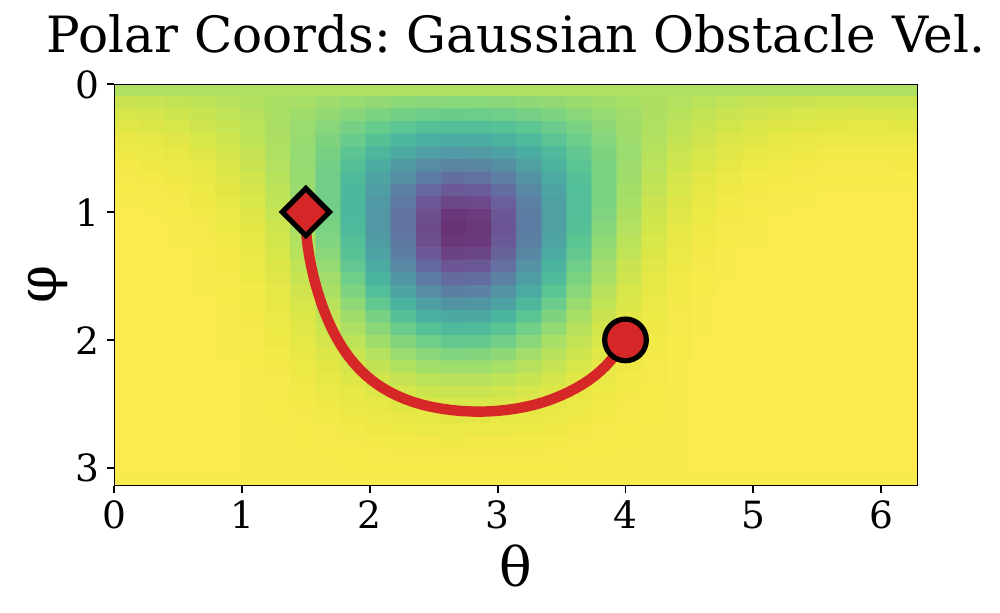}
    \end{subfigure}
  \end{minipage}%
  \hspace{0.02\linewidth}%
  \begin{subfigure}[b]{0.3\linewidth}
    \centering
    \includegraphics[width=\linewidth]{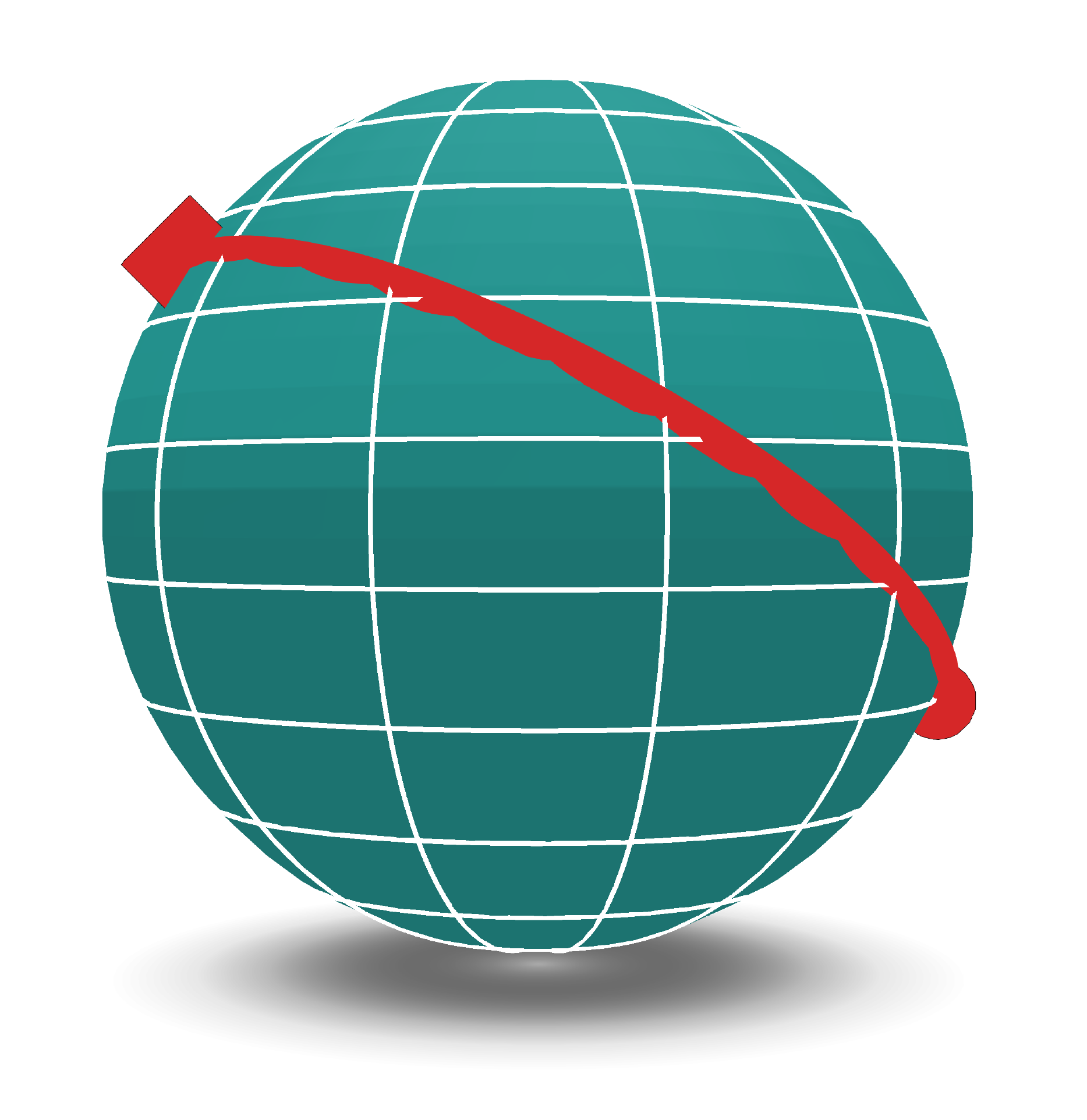}
    \caption*{Constant Velocity Field}
  \end{subfigure}%
  \begin{subfigure}[b]{0.35\linewidth}
    \centering
    \includegraphics[width=\linewidth]{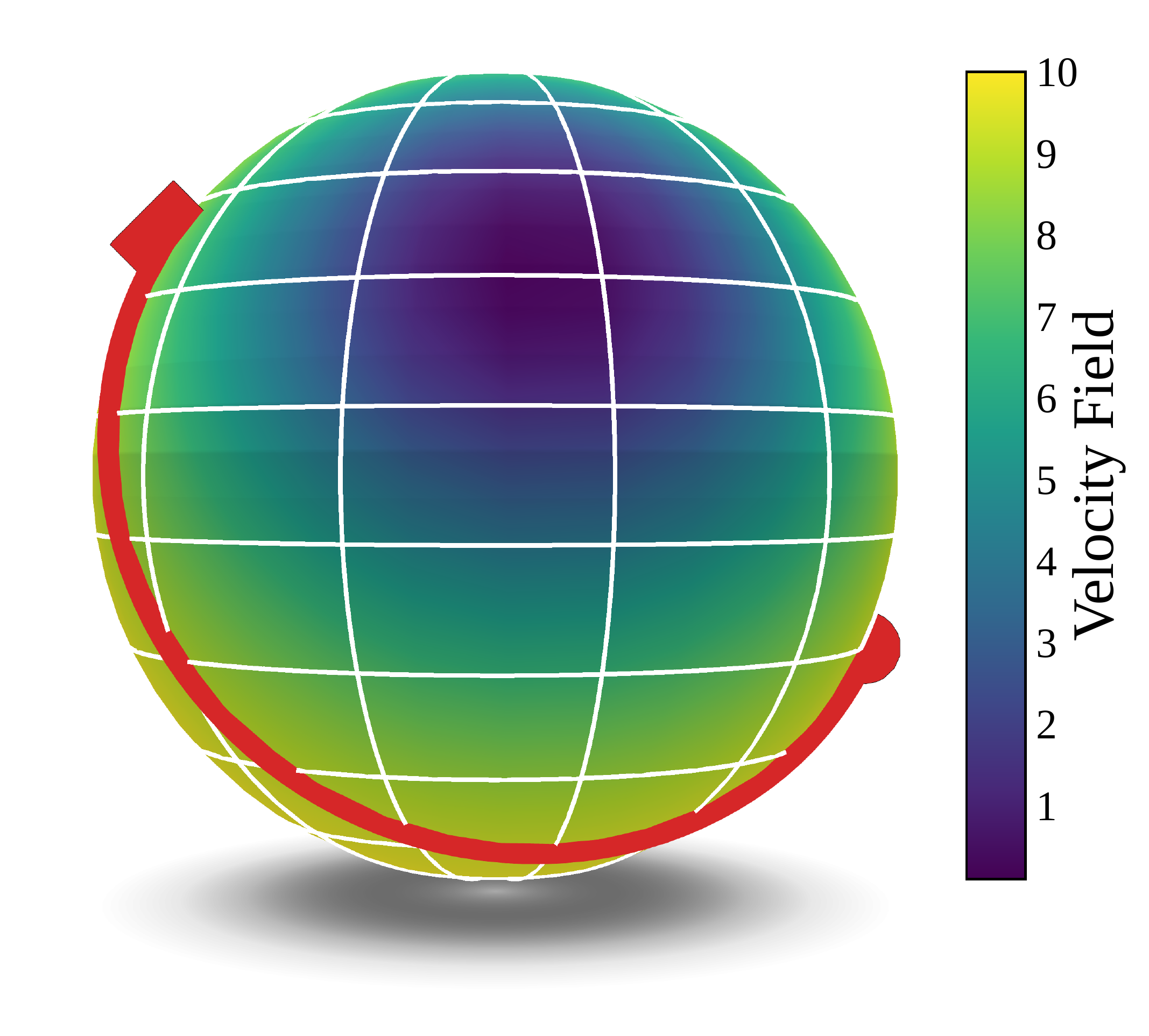}
    \caption*{Gaussian Obstacle Velocity Field}
  \end{subfigure}
  
\caption{
Geodesic path planning on the sphere using gradient integration of the travel-time function under two velocity fields. 
Left panels show the trajectories in local polar coordinates, while right panels visualize the corresponding paths on the spherical surface. 
The constant velocity field (top) yields a great-circle path, whereas the Gaussian obstacle velocity field (bottom) causes the trajectory to bend around the low-speed region. 
The diamond ({\color{RedLight}$\blackdiam$}) denotes the start and the circle ({\color{RedLight}$\newmoon$}) the goal.
}
  \label{fig:paths}
\end{figure}

\section{Discussion and Future Work}
\label{sec:diss}

In this work, we proposed a systematic approach to incorporate equivariance into neural fields and demonstrated its effectiveness through our Equivariant Neural Eikonal Solver (\texttt{E-NES}). Our experiments show that \texttt{E-NES} outperforms both Neural Operator methods (e.g., FC-DeepONet) and Conditional Neural Field approaches (e.g., Functa) across most benchmark datasets. The grid-free formulation is particularly advantageous for gradient integration tasks and naturally extends to Riemannian manifolds.

While our method requires explicit optimization at test time, FC-DeepONet's encoder forward pass performs implicit latent fitting (0.615 seconds for 100 velocity fields, as indicated in Table~\ref{tab:2d-res-top}). Critically, our test-time optimization enables practitioners to dynamically adjust the accuracy-efficiency trade-off by varying the number of iterations (Appendix~\ref{app:ab-steps}). This adaptability parallels recent test-time optimization advances in large language models~\citep{zhang2025survey}, whereas FC-DeepONet's performance is fixed post-training. Additional comparative analyses are provided in Appendix~\ref{app:scala-diss}.

For future work, we plan to extend our analysis to homogeneous spaces beyond Euclidean and spherical domains, including position-orientation spaces for systems with nonholonomic constraints (e.g., vehicle path planning) and hyperbolic spaces for hierarchical interpolation tasks.

\newpage
\section*{Acknowledgements}
We wish to thank Maksim Zhdanov for his help on the JAX implementation.
Alejandro García Castellanos is funded by the Hybrid Intelligence Center, a 10-year programme funded through the research programme Gravitation which is (partly) financed by the Dutch Research Council (NWO). This publication is part of the project SIGN with file number VI.Vidi.233.220 of the research programme Vidi which is (partly) financed by the Dutch Research Council (NWO) under the grant \url{https://doi.org/10.61686/PKQGZ71565}. David Wessels is partially funded Ellogon.AI and a public grant of the Dutch Cancer Society (KWF) under subsidy (15059/2022-PPS2). Remco Duits and Nicky van den Berg gratefully acknowledge NWO for financial support via VIC- C 202.031.
\appendix
{
\bibliographystyle{plainnat}
\bibliography{bib}
}
\newpage

\addcontentsline{toc}{section}{Appendix} 
\part{Appendix} 
\parttoc 
\newpage

\section{Steerability and Gradient Equivariance}
\label{app:steer}

Let \(G\) be a Lie group acting smoothly on the Riemannian manifold \((\mathcal{M},\mathcal{G})\) by left‐translations
\[
L_{g}\colon \mathcal{M}\to\mathcal{M},\qquad L_{g}(p)=g\!\cdot p \,
\]
and write \(\diff L_{g}(s)\colon T_s\mathcal{M}\to T_{gs}\mathcal{M}\) for its differential. Let \((\diff L_{g}(s))^*:  T_{gs}\mathcal{M} \to T_s\mathcal{M}\) denote the adjoint of \(\diff L_{g}(s)\) with respect to the metric \(\mathcal{G}\).

\begin{lemma}[Gradient Equivariance]
\label{lem:grad}
Let $T_\theta\colon \mathcal{M}\times\mathcal{M}\times\mathscr{P}(\mathcal{Z})
\to \mathbb{R}_+$
be a steerable conditional neural field, i.e.\ for all \(g\in G\) and all \(s,r\in\mathcal{M}\), $T_\theta(s,r;g\cdot z)=T_\theta(g^{-1}\cdot s,\,g^{-1}\cdot r;z)$.
Then, for each fixed \(z\in\mathcal{Z}\), fixed receiver \(r\in\mathcal{M}\), and every \(g\in G\),
\[
\operatorname{grad}_s\,T_\theta(s,r;g\cdot z)
\;=\;(\diff L_{g^{-1}}(s))^*\left[\operatorname{grad}_{g^{-1} s}\,T_\theta(g^{-1}\cdot s,\;g^{-1}\cdot r;z)\right] \in T_{s}\mathcal{M}.
\]
\end{lemma}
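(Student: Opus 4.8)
The plan is to reduce the claim to a single application of the chain rule combined with the metric--adjoint relation recorded in the preliminaries. First I would use the steerability hypothesis to rewrite the source-dependent quantity as a composition. Fixing $z$, the receiver $r$, and $g$, define the auxiliary scalar field $H\colon\mathcal{M}\to\mathbb{R}_+$ by $H(p)=T_\theta(p,\,g^{-1}\cdot r;\,z)$; since $r$ is held fixed, $g^{-1}\cdot r$ is a constant point and $H$ is a genuine function of one manifold argument. Steerability then gives $T_\theta(s,r;g\cdot z)=T_\theta(g^{-1}\cdot s,\,g^{-1}\cdot r;z)=H(L_{g^{-1}}(s))$, so the left-hand side is the composite $H\circ L_{g^{-1}}$.

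Next I would pass to differentials. By the chain rule, $\diff(H\circ L_{g^{-1}})(s)=\diff H(g^{-1}\cdot s)\circ\diff L_{g^{-1}}(s)$ as linear functionals on $T_s\mathcal{M}$, so for every $\dot{s}\in T_s\mathcal{M}$,
\[
\diff T_\theta(s,r;g\cdot z)[\dot{s}] = \diff H(g^{-1}\cdot s)\bigl[\diff L_{g^{-1}}(s)[\dot{s}]\bigr].
\]
I would then translate differentials into gradients using the reciprocity $\diff f=\mathcal{G}(\operatorname{grad} f,\cdot)$ from the preliminaries: the right-hand side equals $\mathcal{G}_{g^{-1}s}\bigl(\operatorname{grad} H(g^{-1}\cdot s),\,\diff L_{g^{-1}}(s)[\dot{s}]\bigr)$, where $\operatorname{grad} H(g^{-1}\cdot s)=\operatorname{grad}_{g^{-1}s}T_\theta(g^{-1}\cdot s,\,g^{-1}\cdot r;z)$.

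The key move is to push $\diff L_{g^{-1}}(s)$ across the inner product using the adjoint $(\diff L_{g^{-1}}(s))^*$ exactly as defined in the preliminaries (with $\phi=L_{g^{-1}}$ and $p=s$). By symmetry of $\mathcal{G}$ and that defining identity,
\[
\mathcal{G}_{g^{-1}s}\bigl(\operatorname{grad} H(g^{-1}\cdot s),\,\diff L_{g^{-1}}(s)[\dot{s}]\bigr) = \mathcal{G}_s\bigl(\dot{s},\,(\diff L_{g^{-1}}(s))^*[\operatorname{grad} H(g^{-1}\cdot s)]\bigr).
\]
Comparing this with $\diff T_\theta(s,r;g\cdot z)[\dot{s}]=\mathcal{G}_s(\operatorname{grad}_s T_\theta(s,r;g\cdot z),\dot{s})$ and invoking non-degeneracy (positive-definiteness) of $\mathcal{G}_s$ to cancel the arbitrary $\dot{s}$ yields the asserted identity.

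I would emphasize two bookkeeping points, which are the only places to slip; no deeper analytic input is needed. The mild obstacle is tracking the base points of the tangent spaces: $\operatorname{grad}_{g^{-1}s}T_\theta(\dots)$ lives in $T_{g^{-1}s}\mathcal{M}$, and the adjoint $(\diff L_{g^{-1}}(s))^*$ correctly transports it back to $T_s\mathcal{M}$, so both sides of the claimed equality indeed live in $T_s\mathcal{M}$. The second point is to apply the chain rule only in the source slot while noting that the frozen receiver argument $g^{-1}\cdot r$ contributes nothing to $\diff H$; this justifies treating $H$ as a one-argument field and confirms the result is purely the interplay of the steerability functional equation, the chain rule, and the metric-adjoint definition.
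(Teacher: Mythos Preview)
Your proposal is correct and follows essentially the same route as the paper's own proof: apply steerability to write the left-hand side as a composition with $L_{g^{-1}}$, differentiate via the chain rule, convert differentials to gradients through $\mathcal{G}(\operatorname{grad} f,\cdot)=\diff f$, and then invoke the defining property of the adjoint $(\diff L_{g^{-1}}(s))^*$. Your write-up is slightly more explicit about the auxiliary one-variable field $H$, the base-point bookkeeping, and the non-degeneracy step, but the underlying argument is identical.
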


\begin{proof}
By steerability, one has
\[
T_\theta(s,r;g\cdot z)
= T_\theta\bigl(L_{g^{-1}}(s),\,L_{g^{-1}}(r);z\bigr).
\]
Fix \(r\in\mathcal{M}\) and differentiate with respect to \(s\).  For any \(\dot{v}\in T_s\mathcal{M}\), the chain rule yields
\[
\diff _sT_\theta(s,r;g\cdot z)[\dot{v}]
= \diff_{g^{-1}s}T_\theta\bigl(g^{-1}\cdot s,\;g^{-1}\cdot r;z\bigr)\bigl[\diff L_{g^{-1}}(s)[\dot{v}]\bigr].
\]
By the defining property of the Riemannian gradient, we have $\mathcal{G}(\operatorname{grad} f,\cdot)=\diff f$, so that:
\[
\mathcal{G}_{s}\bigl(\,\underbrace{\operatorname{grad}_sT_\theta(s,r;g\cdot z)}_{\in T_s\mathcal{M}}\,,\dot{v}\bigr)
= \mathcal{G}_{g^{-1}s}\bigl(\,\underbrace{\operatorname{grad}_{g^{-1}s},T_\theta\bigl(g^{-1}\cdot s,g^{-1}\cdot r;z)}_{\in T_{g^{-1}s}\mathcal{M}}\,,
\diff L_{g^{-1}}(s)[\dot{v}]\bigr).
\]
This exactly characterizes the adjoint \((\diff L_{g^{-1}}(s))^*\), and the result follows.
\end{proof}

\newtheorem*{defMetricRestate}{Definition~\ref{def:g_steered_metric} (Restated)}

\begin{defMetricRestate}
    For all $g\in G$, define the $g$-\textit{steered metric} $\mathcal{G}^g: T\mathcal{M} \times T\mathcal{M} \to \mathbb{R}$ as:
    \[
    \mathcal{G}^g_p\left(\dot{u},\dot{v}\right) :=\mathcal{G}_{gp}\left((\diff L_{g^{-1}}(g\cdot p))^*[\dot{u}], (\diff L_{g^{-1}}(g\cdot p))^*[\dot{v}]\right) \quad \text{for }p\in \mathcal{M}, \text{ and } \dot{u},\dot{v}\in T_{p}\mathcal{M}.
    \]
\end{defMetricRestate}

\newtheorem*{propMainSteerRestate}{Proposition~\ref{prop:velocity_group_action} (Restated)}

\begin{propMainSteerRestate}
Let $T_{\theta}: \mathcal{M}\times \mathcal{M}\times \mathscr{P}(\mathcal{Z})\to \mathbb{R}_+$ be a conditional neural field satisfying the steerability property \eqref{eq:steerability}, and let $z_l$ be the conditioning variable representing the solution of the eikonal equation for $v_l: \mathcal{M}\to \mathbb{R}_+^*$, i.e.,  
$T_{\theta}(s, r; z_l) \approx T_l(s, r)$ for $T_l$ satisfying Equation~\eqref{eq:TwoPointEikonal1} for the velocity field $v_l$. Let $\mathcal{G}^g$ be a $g$-steered metric (Definition \ref{def:g_steered_metric}).
Then:
\begin{enumerate}
\item The map $\mu: G \times (\mathcal{M}\to \mathbb{R}_+^*) \to (\mathcal{M}\to \mathbb{R}_+^*)$ defined by 
\begin{equation}
\mu(g, v_l)(s) := \frac{1}{\bigl\|\operatorname{grad}_{g^{-1}s}\,T_l(g^{-1}\cdot s,\;g^{-1}\cdot r)\bigr\|_{\mathcal{G}^g}},
\end{equation}
where $r$ is an arbitrary point in $\mathcal{M}$, \textbf{is a well-defined group action}.  
\item For any $g \in G$, $T_{\theta}(s, r; g\cdot z_l)$ \textbf{solves the eikonal equation} with velocity field $\mu(g, v_l)$.
\end{enumerate}
\end{propMainSteerRestate}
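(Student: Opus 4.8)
My plan is to build both claims on a single norm identity, proving the two items together since they share this engine. First I would use the steerability property \eqref{eq:steerability} to write $T_\theta(s,r;g\cdot z_l)=T_\theta(g^{-1}\cdot s,g^{-1}\cdot r;z_l)$ and treat $T_\theta(\cdot,\cdot;z_l)$ as the exact solution $T_l$ for the purpose of the PDE computation. Differentiating in $s$ with $r$ fixed and applying the Gradient Equivariance Lemma~\ref{lem:grad} gives
\[
\operatorname{grad}_s T_\theta(s,r;g\cdot z_l)=(\diff L_{g^{-1}}(s))^{*}\bigl[\,u\,\bigr],\qquad u:=\operatorname{grad}_{g^{-1}s}T_l(g^{-1}\!\cdot s,\,g^{-1}\!\cdot r)\in T_{g^{-1}s}\mathcal{M}.
\]
Expanding $\|\operatorname{grad}_s T_\theta(s,r;g\cdot z_l)\|_{\mathcal{G}}^{2}=\mathcal{G}_s\bigl((\diff L_{g^{-1}}(s))^{*}u,(\diff L_{g^{-1}}(s))^{*}u\bigr)$ and recognizing the right-hand side as the $g$-steered metric of Definition~\ref{def:g_steered_metric} evaluated at $p=g^{-1}\!\cdot s$ (so that $g\cdot p=s$), I obtain the key identity $\|\operatorname{grad}_s T_\theta(s,r;g\cdot z_l)\|_{\mathcal{G}}=\|u\|_{\mathcal{G}^{g}}$.

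Given this identity, item 2 is essentially immediate once $\mu$ is known to be well-defined: by \eqref{eq:def_mu} the right-hand side equals $\mu(g,v_l)(s)^{-1}$, so the identity is exactly the source equation of \eqref{eq:TwoPointEikonal1} for the velocity $\mu(g,v_l)$. The receiver equation follows by running the same computation with $s$ and $r$ interchanged, and the remaining conditions $T(s,r)=T(r,s)$ and $T(s,s)=0$ follow from the symmetrized construction of $T_\theta$ and the assumed $G$-invariance of the semimetric $\tilde d$, which forces $T_\theta(s,s;g\cdot z_l)=\tilde d(s,s)\,\tau_\theta=0$.

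For item 1 I would check the two action axioms and, crucially, well-definedness. The identity axiom is direct: at $g=e$ we have $L_e=\mathrm{id}$, hence $\mathcal{G}^{e}=\mathcal{G}$ and $\mu(e,v_l)(s)=\|\operatorname{grad}_s T_l(s,r)\|_{\mathcal{G}}^{-1}=v_l(s)$ by the eikonal equation for $T_l$. The cocycle condition $\mu(g,\mu(h,v_l))=\mu(gh,v_l)$ I would argue structurally rather than by a direct metric manipulation: the point-cloud action $g\cdot z=\{(g\cdot g_i,\mathbf{c}_i)\}$ is a genuine left $G$-action, so $(gh)\cdot z_l=g\cdot(h\cdot z_l)$; by item 2 the field $h\cdot z_l$ represents the velocity $\mu(h,v_l)$, and applying item 2 once more with $g$ to this field identifies $T_\theta(\cdot,\cdot;(gh)\cdot z_l)$ simultaneously with $\mu(gh,v_l)$ and with $\mu(g,\mu(h,v_l))$, forcing their equality.

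The step I expect to be the main obstacle is well-definedness, i.e.\ independence of the right-hand side of \eqref{eq:def_mu} from the arbitrary point $r$; equivalently, that the transported field is genuinely an eikonal solution, so that $\|\operatorname{grad}_s T_\theta(s,r;g\cdot z_l)\|_{\mathcal{G}}$ does not depend on $r$. The difficulty is that the eikonal equation for $T_l$ fixes only the $\mathcal{G}$-norm of $u$ (namely $v_l(g^{-1}s)^{-1}$), whereas $\mu$ uses its $\mathcal{G}^{g}$-norm, and the direction of $u$ varies with $r$; thus $\|u\|_{\mathcal{G}^g}$ is forced to be $r$-independent precisely when $\mathcal{G}^{g}$ is a pointwise scalar multiple of $\mathcal{G}$, i.e.\ when $L_g$ acts conformally (in particular isometrically). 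I would therefore secure well-definedness in exactly these regimes, where the identity collapses to the closed forms of Corollary~\ref{col:simple_steer}---$\mu(g,v_l)(s)=v_l(g^{-1}s)$ in the isometric case and $\Omega(g,s)\,v_l(g^{-1}s)$ in the conformal case, both manifestly free of $r$---while the remaining non-conformal regular actions are handled through the discrete-semimetric route of Section~\ref{sec:meth-eik}, where the steered representation is enforced during training rather than read off from the metric.
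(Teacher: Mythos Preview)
Your derivation of the key norm identity
\[
\bigl\|\operatorname{grad}_s T_\theta(s,r;g\cdot z_l)\bigr\|_{\mathcal{G}}
=\bigl\|\operatorname{grad}_{g^{-1}s}T_l(g^{-1}\!\cdot s,g^{-1}\!\cdot r)\bigr\|_{\mathcal{G}^g}
\]
via Lemma~\ref{lem:grad} and Definition~\ref{def:g_steered_metric}, the proof of item~2 from it, and the identity axiom for $\mu$ are exactly what the paper does.

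For compatibility you take a genuinely different route. The paper verifies $\mu(g,\mu(h,v_l))=\mu(gh,v_l)$ by a direct metric computation: it writes both sides in terms of the same gradient $\operatorname{grad}_{h^{-1}g^{-1}s}T_l$ but under the metrics $(\mathcal{G}^h)^g$ and $\mathcal{G}^{gh}$ respectively, then shows these two metrics coincide because
\[
(\diff L_{(gh)^{-1}})^*=(\diff L_{h^{-1}}\circ\diff L_{g^{-1}})^*=(\diff L_{g^{-1}})^*(\diff L_{h^{-1}})^*.
\]
This argument is self-contained: it does not appeal to item~2, to the latent action, or to uniqueness of the velocity from an eikonal solution. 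Your structural argument (bootstrapping item~2 through $(gh)\cdot z_l=g\cdot(h\cdot z_l)$ and reading off the velocity from the common solution) is valid, but it leans on precisely the $r$-independence you flag as the delicate point, since ``the eikonal solution determines the velocity'' is the statement that $\|\operatorname{grad}_s T\|_{\mathcal{G}}$ is $r$-free. The paper's route avoids that dependence for this step.

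On well-definedness you go further than the paper. The paper simply declares $r$ ``arbitrary'' in \eqref{eq:def_mu} and never revisits it; your observation that $\|u\|_{\mathcal{G}^g}$ is forced to be $r$-independent exactly when $\mathcal{G}^g$ is a pointwise scalar multiple of $\mathcal{G}$ (the conformal case, subsuming the isometric one) is a correct and sharper reading, consistent with Corollary~\ref{col:simple_steer}. Your final clause about the discrete-semimetric route for non-conformal actions does not actually address $r$-independence of $\mu$---the factorization $T=\tilde d\,\tau$ governs the boundary behaviour, not the definition of the steered velocity---so that sentence should be dropped or reworked; but this is not a gap relative to the paper, which leaves the same point unaddressed.
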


\begin{proof}
By the steerability of \(T_\theta\), for every \(g\in G\) and \(s,r\in\mathcal M\) we have
\[
T_\theta(s,r;g\cdot z_l) = T_\theta(g^{-1}\cdot s, g^{-1}\cdot r; z_l).
\]
Since \(T_\theta(s,r;z_l) \approx T_l(s,r)\), it follows that
\[
T_\theta(s,r;g\cdot z_l) \approx T_l(g^{-1}\cdot s, g^{-1}\cdot r).
\]
Define the steered arrival time
\[
T_l^g(s,r) := T_l(g^{-1}\cdot s, g^{-1}\cdot r).
\]
We aim to show that \(T_l^g\) satisfies the eikonal equation with velocity field \(\mu(g, v_l)\).

\paragraph{Gradient transformation.}
By Lemma~\ref{lem:grad}, the gradient of \(T_l^g\) is related to that of \(T_l\) via
\[
\operatorname{grad}_s T_l^g(s,r) =(\diff L_{g^{-1}}(s))^*\left[\operatorname{grad}_{g^{-1} s}\,T_l(g^{-1}\cdot s,\;g^{-1}\cdot r)\right] .
\]

Fix $g\in G$ and write \( \dot{w} = \operatorname{grad}_{g^{-1}s} T_{l}(g^{-1}\cdot s,g^{-1}\cdot r)\in T_{g^{-1}s}\mathcal{M}\). Taking the squared \( \mathcal{G} \)-norm we get:
\[
\|\operatorname{grad}_s T_l^g(s,r)\|_{\mathcal{G}}^2 = \mathcal{G}_s\left((\diff L_{g^{-1}}(s))^*[\dot{w}], (\diff L_{g^{-1}}(s))^*[\dot{w}]\right).
\]

Then, for $\mathcal{G}^g$ defined by Definition \ref{def:g_steered_metric}:
\begin{align*}
 \|\operatorname{grad}_{g^{-1}s} T_{l}(g^{-1}\cdot s,g^{-1}\cdot r)\|_{\mathcal{G}^g}^2 &=\mathcal{G}^g_{g^{-1}s}\left(\dot{w}, \dot{w}\right)\\
 &= \mathcal{G}_{gg^{-1}s}\left((\diff L_{g^{-1}}(g g^{-1}\cdot s))^*[\dot{w}], (\diff L_{g^{-1}}(g g^{-1}\cdot s))^*[\dot{w}]\right)\\
 &= \mathcal{G}_s\left((\diff L_{g^{-1}}(s))^*[\dot{w}], (\diff L_{g^{-1}}(s))^*[\dot{w}]\right)\\
 &=\|\operatorname{grad}_s T_l^g(s,r)\|_{\mathcal{G}}^2\,.
\end{align*}

By Eq.~\eqref{eq:def_mu}, we now get
\[
\|\operatorname{grad}_s T_l^g(s,r)\|_{\mathcal{G}} = \frac{1}{\mu(g, v_l)(s)},
\]
i.e., \(T_l^g\) solves the eikonal equation with velocity \(\mu(g, v_l)\).

\paragraph{Group action properties of \(\mu\).}$ $\\
\textbf{(1) Identity:} Let \(e \in G\) denote the identity. Then \(e^{-1} = e\) and \(\diff L_e(s) = \mathrm{Id}\), hence:
\[
\mathcal{G}^e_s = \mathcal{G}_s, \qquad \mu(e, v_l)(s) = \|\operatorname{grad}_{s} T_l(s, r)\|_{\mathcal{G}}^{-1} = v_l(s),
\]
using the eikonal equation for \(T_l\).

\textbf{(2) Compatibility:} For all \(g, h \in G\), we show that:
\[
\mu(g, \mu(h, v_l)) = \mu(gh, v_l).
\]
We note that left and right hand side are respectively given by
\begin{equation}
    \mu(g,\mu(h,v_l))=\frac{1}{\|\operatorname{grad}_{h^{-1}g^{-1}s}T_l(h^{-1}g^{-1}\cdot s,h^{-1}g^{-1}\cdot r)\|_{(\mathcal{G}^h)^g}}\label{eq:mu_g_mu_h}
\end{equation}
and
\begin{equation}
    \mu(gh,v_l)=\frac{1}{\|\operatorname{grad}_{(gh)^{-1}s}T_l((gh)^{-1}\cdot s,(gh)^{-1}\cdot r)\|_{\mathcal{G}^{gh}}}.\label{eq:mu_gh}
\end{equation}
Both equations \eqref{eq:mu_g_mu_h} and \eqref{eq:mu_gh} are the same iff $\mathcal{G}^{gh}=(\mathcal{G}^h)^g$. By Definition \ref{def:g_steered_metric}, it suffices to show $(\diff L_{(gh)^{-1}})^* = (\diff L_{g^{-1}})^*(\diff L_{h^{-1}})^*$ which follows readily:
\[
(\diff L_{(gh)^{-1}})^* = (\diff L_{h^{-1}}\circ \diff L_{g^{-1}})^* = (\diff L_{g^{-1}})^*(\diff L_{h^{-1}})^*.
\]
Hence, the group action $\mu$ is compatible.

\end{proof}

\newtheorem*{colSimpleSteerRestate}{Corollary~\ref{col:simple_steer} (Restated)}

\begin{colSimpleSteerRestate}
Assume the hypotheses of Proposition~\ref{prop:velocity_group_action}, then the group action $\mu: G \times (\mathcal{M}\to \mathbb{R}_+^*) \to (\mathcal{M}\to \mathbb{R}_+^*)$ is given by:
\begin{enumerate}
    \item $\mu(g,v_l)(s)= v_l(g^{-1}\cdot s)$ if $G$ acts isometrically on $\mathcal{M}$.
    \item $\mu(g,v_l)(s)= \Omega(g, s)\, v_l(g^{-1}\cdot s)$ if $G$ acts conformally on $\mathcal{M}$ with conformal factor $\Omega(g, s) > 0$, i.e., $
    \mathcal{G}_{gs}\left(\diff L_{g}(s)[\dot{s}_1],\, \diff L_{g}(s)[\dot{s}_2] \right) = \Omega(g, s)^2\,  \mathcal{G}_{s}\left( \dot{s}_1,\, \dot{s}_2 \right)$, $\forall\, \dot{s}_1, \dot{s}_2 \in T_s \mathcal{M}.$
\end{enumerate}
\end{colSimpleSteerRestate}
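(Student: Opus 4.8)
The plan is to start from the closed-form expression for the action given by Proposition~\ref{prop:velocity_group_action},
\[
\mu(g,v_l)(s) = \bigl\|\operatorname{grad}_{g^{-1}s}\,T_l(g^{-1}\cdot s,\,g^{-1}\cdot r)\bigr\|_{\mathcal{G}^g}^{-1},
\]
and to evaluate the $g$-steered norm of the vector $w := \operatorname{grad}_{g^{-1}s}\,T_l(g^{-1}\cdot s,\,g^{-1}\cdot r)\in T_{g^{-1}s}\mathcal{M}$ by unfolding Definition~\ref{def:g_steered_metric}. Taking $p=g^{-1}s$ (so that $g\cdot p=s$) reduces the whole computation to $\|w\|_{\mathcal{G}^g}^2 = \mathcal{G}_s\bigl((\diff L_{g^{-1}}(s))^*[w],\,(\diff L_{g^{-1}}(s))^*[w]\bigr)$, so the entire task is to determine the factor by which the adjoint $(\diff L_{g^{-1}}(s))^*$ rescales $\mathcal{G}$-norms. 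Once that factor is known, the eikonal identity $\|\operatorname{grad}_{g^{-1}s}T_l(g^{-1}\cdot s,\cdot)\|_{\mathcal{G}} = v_l(g^{-1}s)^{-1}$ from Equation~\eqref{eq:TwoPointEikonal1} converts $\|w\|_{\mathcal{G}}$ into $v_l(g^{-1}\cdot s)^{-1}$, and inverting yields the stated velocity fields.

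For the isometric case I would use that the differential of an isometry is a linear isometry between tangent spaces, hence orthogonal; its adjoint equals its inverse and in particular preserves norms. Therefore $(\diff L_{g^{-1}}(s))^*$ is norm-preserving, the steered metric collapses to the base metric ($\mathcal{G}^g_p = \mathcal{G}_p$ for all $p$), and $\|w\|_{\mathcal{G}^g} = \|w\|_{\mathcal{G}} = v_l(g^{-1}\cdot s)^{-1}$. Inverting gives $\mu(g,v_l)(s) = v_l(g^{-1}\cdot s)$ immediately.

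For the conformal case the differential $\diff L_{g^{-1}}(s)$ scales the metric by $\Omega(g^{-1},s)^2$. A one-line linear-algebra fact---if an invertible $B$ satisfies $B^*B = \lambda\,\mathrm{Id}$ then $BB^* = \lambda\,\mathrm{Id}$ as well, whence $\|B^*[w]\|^2 = \lambda\|w\|^2$---shows the adjoint carries the \emph{same} conformal factor. Thus $\|w\|_{\mathcal{G}^g} = \Omega(g^{-1},s)\,\|w\|_{\mathcal{G}} = \Omega(g^{-1},s)\,v_l(g^{-1}\cdot s)^{-1}$, so that $\mu(g,v_l)(s) = \Omega(g^{-1},s)^{-1}\,v_l(g^{-1}\cdot s)$.

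The main obstacle, and the only genuinely delicate step, is the bookkeeping of the conformal factor: the steered metric naturally produces $\Omega$ evaluated at $g^{-1}$, whereas the statement is phrased through the factor of $\diff L_g$. I would reconcile the two with the conformal cocycle identity coming from $L_e = L_g\circ L_{g^{-1}}$ together with $\Omega(e,\cdot)=1$, which gives $\Omega(g^{-1},s) = \Omega(g,\,g^{-1}\cdot s)^{-1}$ and hence $\mu(g,v_l)(s) = \Omega(g,\,g^{-1}\cdot s)\,v_l(g^{-1}\cdot s)$. In the common situation where the conformal factor is independent of the base point (e.g.\ dilations and similarity transformations), $\Omega(g,\,g^{-1}\cdot s) = \Omega(g,s)$ and we recover the stated formula exactly. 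As sanity checks I would confirm that setting $\Omega\equiv 1$ collapses the conformal case into the isometric one, and note that independence from the arbitrary receiver $r$ is already guaranteed by Proposition~\ref{prop:velocity_group_action}.
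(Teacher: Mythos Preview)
Your proposal is correct and follows the same line as the paper: both arguments reduce to computing how the adjoint $(\diff L_{g^{-1}}(s))^*$ rescales $\mathcal{G}$-norms, then invoke the eikonal identity for $T_l$ at $g^{-1}\cdot s$. The paper works on the ``physical'' side $\|\operatorname{grad}_s T_\theta(s,r;g\cdot z_l)\|_{\mathcal{G}}$ via Lemma~\ref{lem:grad}, whereas you start from the steered-metric expression for $\mu$; by the proof of Proposition~\ref{prop:velocity_group_action} these are the same quantity, so there is no substantive divergence.

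One point is worth flagging: your treatment of the conformal bookkeeping is actually more precise than the paper's. The paper simply asserts $\Omega(g^{-1},s)=\Omega(g,s)^{-1}$ and proceeds to the stated formula. As you correctly derive, the cocycle identity from $L_e=L_g\circ L_{g^{-1}}$ only gives $\Omega(g^{-1},s)=\Omega(g,\,g^{-1}\cdot s)^{-1}$, so the honest output of the computation is $\mu(g,v_l)(s)=\Omega(g,\,g^{-1}\cdot s)\,v_l(g^{-1}\cdot s)$; this matches the corollary's $\Omega(g,s)$ formulation only when the conformal factor is base-point independent (as in the scaling example the paper uses for illustration). Your caveat about this is warranted and not a gap in your argument---if anything it is a clarification the paper's own proof leaves implicit.
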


\begin{proof}

\medskip\noindent\textbf{(1) Isometric case:}
If \(G\) is an isometry, then \((\diff L_{g^{-1}}(s))^*=(\diff L_{g^{-1}}(s))^{-1} = \diff L_{g}(g^{-1} \cdot s)\) -- so that $\mathcal{G}^p$ is equal to the pull-back metric -- and preserves inner products. Hence, 
\[
\bigl\| \dot{s}\bigr\|_{\mathcal{G}^g}^2 = \mathcal{G}^g_s( \dot{s},  \dot{s}) = \mathcal{G}_{gs}(\diff L_{g}(g^{-1}g\cdot s) [ \dot{s}],  \diff L_{g}(g^{-1}g\cdot s) [\dot{s}]) = \mathcal{G}_{s}(\dot{s},  \dot{s})= \bigl\| \dot{s}\bigr\|_{\mathcal{G}}^2.
\]

Therefore,
\[
\bigl\|\operatorname{grad}_s T_{\theta}(s,r;g\cdot z_l)\bigr\|_{\mathcal{G}}
= \|\operatorname{grad}_{g^{-1}s} T_{\theta}(g^{-1}\cdot s,g^{-1}\cdot r;z_l)\bigr\|_{\mathcal{G}^g}
= \frac{1}{v_l(g^{-1}\cdot s)},
\]
so \(\mu(g, v_l)(s)=v_l(g^{-1}\cdot s)\).

\medskip\noindent\textbf{(2) Conformal case:}
If \(L_g\) acts conformally with factor \(\Omega(g,s)\), then for all \( \dot{s}_1, \dot{s}_2 \in T_s\mathcal M\),
\[
\mathcal{G}_{gs}\left(\diff L_{g}(s)[\dot{s}_1],\, \diff L_{g}(s)[\dot{s}_2] \right) = \Omega(g, s)^2\,  \mathcal{G}_s\left( \dot{s}_1,\, \dot{s}_2 \right).
\]
Hence \(\diff L_{g^{-1}}(s)\) scales lengths by \( \Omega(g^{-1},s) = \Omega(g,s)^{-1}\) and its adjoint satisfies
\[
(\diff L_{g^{-1}}(s))^* = \Omega(g,s)^{-2}\,(\diff L_{g^{-1}}(s))^{-1},\quad\text{because }(\diff L_{g}(s))^* = \Omega(g,s)^{2}\,(\diff L_{g}(s))^{-1}.
\]
Then
\[
\operatorname{grad}_s T_{\theta}(s,r;g\cdot z_l)
= \Omega(g,s)^{-2}\,(\diff L_{g^{-1}}(s))^{-1}[\operatorname{grad}_{g^{-1} s}\,T_\theta(g^{-1}\cdot s,\;g^{-1}\cdot r;z_l)],
\]
and thus
\[
\bigl\|\operatorname{grad}_s T_{\theta}(s,r;g\cdot z_l)\bigr\|_{\mathcal{G}}
= \frac1{\Omega(g,s)\,v_l(g^{-1}\cdot s)}.
\]
Therefore \(\mu(g, v_l)(s)=\Omega(g,s)\,v_l(g^{-1}\cdot s)\).
\end{proof}
\begin{wrapfigure}{r}{0.5\textwidth}

  \centering
  \includegraphics[width=0.9\linewidth]{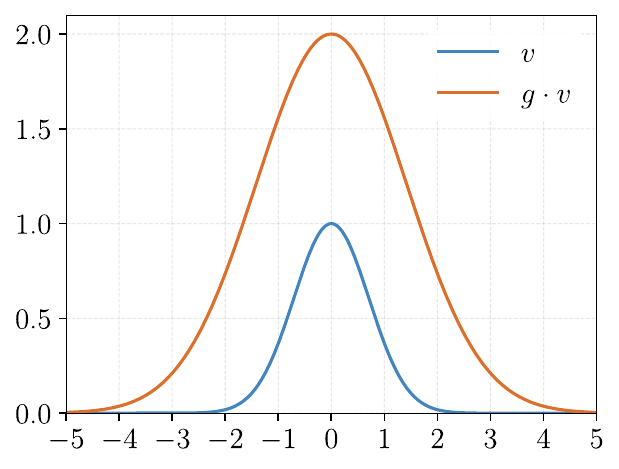}
  \caption{Example of conformal group action on $v(s)= e^{-s^2}$.}
  \label{fig:comformal}
\end{wrapfigure}
As an example of an isometric action, consider 2D rotations with $G=SO(2)$ acting on $\mathcal{M}=\mathbb{R}^2$. In Figure~\ref{fig:steerability}, given a velocity field $v(s)$ and its corresponding conditioning variable $z$, the transformed variable $R_{\pi/6} \cdot z$ encodes the velocity field $v(R_{5\pi/6}s)$. Similarly, as an example of a conformal action, consider the positive scaling group $G=\mathbb{R}^*_+$ acting on $\mathcal{M}=\mathbb{R}$. In the graph shown in Figure~\ref{fig:comformal}, given a velocity field $v(s)$ encoded by $z$, the scaled conditioning variable $2 \cdot z$ encodes the velocity field $2\,v(s/2)$. Additional examples on the steerability property as well as the emperical validation on our implementation can be found in Appendix~\ref{app:steer-test}.\\

\vspace{-5pt}
\begin{Remark}[Implementation in Euclidean Coordinates]
In local coordinates, the metric tensor $\mathcal{G}_p$ at any point $p \in \mathcal{M}$ is represented by a symmetric positive-definite matrix, which induces an inner product on the tangent space $T_p\mathcal{M}$. Specifically, for any tangent vectors $\dot{p}_1, \dot{p}_2 \in T_p\mathcal{M}$, the inner product is given by $\mathcal{G}_p(\dot{p}_1, \dot{p}_2)= \langle \dot{p}_1, \dot{p}_2 \rangle_{\mathcal{G}} = \dot{p}_1^\top \mathcal{G}_p \dot{p}_2$, and the corresponding norm is defined as $\norm{\dot p_1}_{\mathcal{G}} = \sqrt{\langle \dot{p}_1, \dot{p}_1 \rangle_{\mathcal{G}}}$.

Moreover, in local coordinates, the Riemannian gradient is related to that of the Euclidean gradient $\nabla f$ via the inverse metric tensor: $\operatorname{grad} f(p) = \mathcal{G}_p^{-1} \nabla f (p) $ \citep{absil_optimization_2008}. If $\mathcal{M}$ is embedded in a Euclidean space, then the norm of the Riemannian gradient can be computed as $ \norm{\operatorname{grad} f}_{\mathcal{G}} = \norm{\nabla f}_{\mathcal{G}^{-1}}$, and the adjoint can be computed as $(\diff L_{g^{-1}}(s))^* = \mathcal{G}_s^{-1} (\diff L_{g^{-1}}(s))^T \mathcal{G}_{g^{-1}s}$. 

Therefore, the group action $\mu: G \times (\mathcal{M}\to \mathbb{R}_+^*) \to (\mathcal{M}\to \mathbb{R}_+^*)$ in local coordinates is given by:
    \[
\mu(g, v_l)(s) = \bigl\|\nabla_{g^{-1}s}T_l\bigl(g^{-1}\cdot s,g^{-1}\cdot r\bigr)\bigr\|_{\widetilde{\mathcal{G}}^g}^{-1},
\]
where $r$ is an arbitrary point in $\mathcal{M}$, and $\widetilde{\mathcal{G}}^g:T_p\mathcal{M}\times T_p\mathcal{M} \to \mathbb{R}$ is the local coordinated expression of the $g$-steered metric given by
\[\widetilde{\mathcal{G}}^g_p := \diff L_{g^{-1}}(g\cdot p) \mathcal{G}_{gp}^{-1} (\diff L_{g^{-1}}(g\cdot p))^T \quad \text{for any } p\in \mathcal{M}.\]
\end{Remark}
\vspace{5pt}
\begin{Remark}[Relation to pull-back metric]
    If $\mathcal{G}$ is isometric or conformal, then the $g$-steered metric is the pull-back metric $\mathcal{G}^g=(L_g)^*\mathcal{G}$ and one directly has 
    \[
    \mathcal{G}^{g_1 g_2}=(\mathcal{G}^{g_2})^{g_1}\Rightarrow \mu(g_1 g_2,v_l)=\mu(g_1,\mu(g_2,v_1)).
    \]
\end{Remark}

\section{Computing Invariants via Moving Frame}
\label{app:inv}

\subsection{Preliminaries}
In this section, we present the essential mathematical foundations of the Moving Frame method, following \cite{olver_joint_2001, sangalli_moving_2023}. For a comprehensive treatment, we refer readers to \cite{levi_lectures_2011, olver_equivalence_1995}.

Let $\mathcal{M}$ be an $m$-dimensional smooth manifold and $G$ be an $r$-dimensional Lie group acting on $\mathcal{M}$. A (right) \textit{moving frame} is a smooth equivariant map $\rho: \mathcal{M} \to G$ satisfying $\rho(g\cdot p) = \rho(p)\cdot g^{-1}$ for all $g\in G$ and $p \in \mathcal{M}$. Every moving frame $\rho$ induces a canonicalization function $k:\mathcal{M}\to \mathcal{M}$ defined by
\[
k(p)= \rho(p)\cdot p,
\]
which is $G$-invariant:
\[
\forall p \in \mathcal{M},\; g \in G,\quad 
k(g\cdot p)
=\rho(g\cdot p)\cdot g\cdot p
=\rho(p)\cdot p
= k(p).
\]

The existence of moving frames is characterized by the following fundamental result:

\begin{theorem}[Existence of moving frame; see \cite{olver_joint_2001}]
\label{thm:existence-moving-frame}
A moving frame exists in a neighborhood of a point $p\in \mathcal{M}$ if and only if $G$ acts freely and regularly near $p$.
\end{theorem}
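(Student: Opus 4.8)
The plan is to prove both implications of the equivalence, with essentially all of the content lying in the sufficiency (``if'') direction, where a moving frame must be built from scratch; the necessity direction is largely formal. Throughout I would work with a fixed neighborhood $U$ of $p$ on which the candidate moving frame $\rho:U\to G$ is defined, and keep in mind that the equivariance $\rho(g\cdot q)=\rho(q)\,g^{-1}$ is required for \emph{all} $g$ with $q,g\cdot q\in U$, not merely for $g$ near the identity.

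For necessity, suppose such a $\rho$ exists. To get freeness, I would take any $g\in G_q$ with $q\in U$, so $g\cdot q=q$, and compute $\rho(q)=\rho(g\cdot q)=\rho(q)\,g^{-1}$; left-cancelling $\rho(q)$ gives $g^{-1}=e$, so every isotropy group is trivial and the action is free on $U$. To get regularity, I would use the $G$-invariant canonicalization $k(q)=\rho(q)\cdot q$ and observe that $q\mapsto(\rho(q)^{-1},k(q))$ inverts $(g,\kappa)\mapsto g\cdot\kappa$, exhibiting $U$ as a local product $G_{\mathrm{loc}}\times\mathcal K$ in which orbit pieces are the $G_{\mathrm{loc}}$-slices; after shrinking, this product structure says precisely that small neighborhoods meet each orbit in a single connected piece, i.e. the action is regular.

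For sufficiency, assume the action is free and regular near $p$. Freeness makes the infinitesimal action $\mathfrak g\to T_p\mathcal M$, $\xi\mapsto\xi_{\mathcal M}(p)$, injective, so the orbit through $p$ has dimension $r=\dim G$ and $T_p\mathcal M=T_p(\mathrm{Orb}(p))\oplus W$ for a complement $W$ of dimension $m-r$. Using regularity I would choose a \emph{cross-section} $\mathcal K\ni p$, a smooth $(m-r)$-dimensional submanifold with $T_p\mathcal K=W$, small enough to meet each nearby orbit exactly once. The smooth map $\psi:G\times\mathcal K\to\mathcal M$, $\psi(g,\kappa)=g\cdot\kappa$, then has differential at $(e,p)$ given by $(\xi,w)\mapsto\xi_{\mathcal M}(p)+w$, which is a linear isomorphism by the direct-sum decomposition, so the inverse function theorem makes $\psi$ a local diffeomorphism near $(e,p)$. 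Consequently every $q$ near $p$ decomposes uniquely as $q=\sigma(q)\cdot\kappa(q)$ with $\sigma(q)\in G$, $\kappa(q)\in\mathcal K$, smoothly in $q$, and I would set $\rho(q):=\sigma(q)^{-1}$. Then $\rho(q)\cdot q=\kappa(q)\in\mathcal K$, and writing $h\cdot q=(h\,\sigma(q))\cdot\kappa(q)$ with uniqueness gives $\sigma(h\cdot q)=h\,\sigma(q)$, hence $\rho(h\cdot q)=\rho(q)\,h^{-1}$; smoothness of $\rho$ is inherited from $\psi^{-1}$.

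I expect the main obstacle to be pinning down the exact role of regularity as distinct from freeness. Freeness plus the inverse function theorem only yield uniqueness of the decomposition within a neighborhood of $(e,p)$ in $G\times\mathcal K$, i.e. local-in-the-group uniqueness; this does not by itself prevent an orbit from re-entering $U$ and meeting $\mathcal K$ a second time at a far-away group element, which would render $\rho$ multivalued and destroy equivariance for large $g$. The instructive failure is the irrational flow of $\mathbb R$ on the torus: it is free and locally straightenable by the flow-box theorem, yet not regular, and indeed no single-valued equivariant $\rho$ exists on a neighborhood because the returning orbit forces contradictory values. It is exactly the regularity hypothesis that lets $\mathcal K$ be shrunk so each orbit is hit once, upgrading the infinitesimal bijectivity from the inverse function theorem to a genuine single-valued moving frame on a full neighborhood of $p$.
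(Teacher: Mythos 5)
Your proof is correct, and it follows essentially the same route as the source the paper defers to: the paper itself states this theorem without proof (citing Olver), and its surrounding Theorems~\ref{thm:cross-section-moving-frame} and \ref{thm:fundamental-invariants} encode exactly your construction — a transverse cross-section $\mathcal{K}$, the local diffeomorphism $\psi(g,\kappa)=g\cdot\kappa$ via the inverse function theorem, and $\rho(q)=\sigma(q)^{-1}$. You also correctly isolate the one genuinely delicate point (regularity, not freeness, is what makes the decomposition single-valued beyond a neighborhood of the identity in $G$), with the irrational torus flow as the standard witness.
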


Moving frames can be constructed via cross-sections to group orbits. A \emph{cross-section} to the group orbits is a submanifold $K \subseteq \mathcal{M}$ of complementary dimension to the group (i.e., $\dim K = m - r$) that intersects each orbit transversally at exactly one point.

\begin{theorem}[Moving frame from cross-section; see \cite{olver_joint_2001}]
\label{thm:cross-section-moving-frame}
If $G$ acts freely and regularly on $\mathcal{M}$, then given a cross-section $K$ to the group orbits, for each $p\in\mathcal{M}$ there exists a unique element $g_p\in G$ such that $g_p \cdot p \in K$. The function $\rho: \mathcal{M} \to G$ mapping $p$ to $g_p$ is a moving frame.
\end{theorem}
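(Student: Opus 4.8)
The plan is to verify, in sequence, the three properties defining a moving frame: that a canonicalizing element $g_p$ exists, that it is unique, and that the resulting assignment $\rho(p) := g_p$ is smooth and right-equivariant. Existence, uniqueness, and equivariance are short consequences of the defining properties of a cross-section together with freeness; the analytic weight of the argument lies in establishing smoothness.

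First I would treat existence and uniqueness. Existence is immediate: since $K$ meets every orbit, the orbit $\mathrm{Orb}(p)$ contains some $q \in K$, and writing $q = g_p \cdot p$ yields an element with $g_p \cdot p \in K$. For uniqueness, if $g_p \cdot p$ and $g_p' \cdot p$ both lie in $K$, then they are two points of $K \cap \mathrm{Orb}(p)$, which the cross-section property forces to coincide; freeness then upgrades $g_p \cdot p = g_p' \cdot p$ to $g_p = g_p'$, since $(g_p')^{-1} g_p$ fixes $p$ and hence lies in $G_p = \{e\}$. This makes $\rho(p) := g_p$ a well-defined map.

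Next I would check right-equivariance directly from uniqueness. Setting $h := \rho(p)\cdot g^{-1}$, one has $h \cdot (g \cdot p) = \rho(p) \cdot p \in K$, so $h$ is a canonicalizing element for $g \cdot p$; by the uniqueness just shown, $\rho(g \cdot p) = h = \rho(p) \cdot g^{-1}$, which is exactly the moving-frame equivariance relation.

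The main obstacle is smoothness of $\rho$, which I would obtain from the inverse function theorem applied to the action map $\Psi \colon G \times K \to \mathcal{M}$, $\Psi(a,k) = a \cdot k$. At a base point $(e, k_0)$ with $k_0 \in K$, the differential $d\Psi_{(e,k_0)}$ restricts on $T_e G$ to the infinitesimal action, whose image is the orbit tangent space $T_{k_0}(\mathrm{Orb}(k_0))$ and which is injective because freeness makes the orbits $r$-dimensional; on $T_{k_0} K$ it is the inclusion. Transversality of $K$ to the orbits gives $T_{k_0}K \oplus T_{k_0}(\mathrm{Orb}(k_0)) = T_{k_0}\mathcal{M}$, so $d\Psi_{(e,k_0)}$ is a linear isomorphism and $\Psi$ is a local diffeomorphism near $(e,k_0)$, with regularity ensuring the cross-section stays single-valued across nearby orbits. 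Inverting $\Psi$ locally expresses each nearby $p$ uniquely as $p = a(p) \cdot k(p)$ with $a(p) \in G$ and $k(p) \in K$ depending smoothly on $p$; then $a(p)^{-1} \cdot p = k(p) \in K$ identifies $\rho(p) = a(p)^{-1}$ by uniqueness, exhibiting $\rho$ as a composition of smooth maps and hence smooth. This completes the verification that $\rho$ is a smooth right-equivariant map, i.e.\ a moving frame.
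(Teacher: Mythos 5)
The paper itself states Theorem~\ref{thm:cross-section-moving-frame} as background and gives no proof, deferring to \cite{olver_joint_2001}; your argument is correct and is essentially the standard one from that reference. Existence and uniqueness follow exactly as you say from the exactly-one-point intersection property of the cross-section together with freeness, and your derivation of the equivariance $\rho(g\cdot p)=\rho(p)\,g^{-1}$ from uniqueness is clean. The inverse-function-theorem step is also sound: injectivity of the infinitesimal action on $T_eG$ (freeness forces $r$-dimensional orbits, per the fact quoted in the paper's background section) combined with transversality $T_{k_0}K\oplus T_{k_0}\mathrm{Orb}(k_0)=T_{k_0}\mathcal{M}$ makes $d\Psi_{(e,k_0)}$ a linear isomorphism. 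One small completeness point: as written, inverting $\Psi$ near $(e,k_0)$ only yields smoothness of $\rho$ on a neighborhood of points of $K$, not at an arbitrary $p_0\in\mathcal{M}$. The fix is a one-line use of the equivariance you have already established: setting $g_0=\rho(p_0)$, the point $g_0\cdot p$ lies near $g_0\cdot p_0\in K$ for $p$ near $p_0$, and $\rho(p)=\rho(g_0\cdot p)\,g_0$ then exhibits $\rho$ near $p_0$ as a composition of smooth maps. With that translation step made explicit, the verification is complete.
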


While any regular Lie group action admits multiple local cross-sections, coordinate cross-sections (obtained by fixing $r$ of the coordinates) are particularly useful for determining fundamental invariants of $\mathcal{M}$ with regards to $G$.

\begin{theorem}[Fundamental invariants via coordinate cross-sections; see \cite{olver_joint_2001}]
\label{thm:fundamental-invariants}
Given a free, regular Lie group action and a coordinate cross-section $K$, let $\rho$ be the associated moving frame. Then the non-constant coordinates of the canonicalization function image
\[
k(p)= \rho(p)\cdot p \in K
\]
form a complete system of functionally independent invariants.
\end{theorem}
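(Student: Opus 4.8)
The plan is to exploit the two defining features of the canonicalization map $k(p)=\rho(p)\cdot p$: that it is constant along $G$-orbits (already verified in the excerpt, where $k(g\cdot p)=k(p)$), and that it retracts $\mathcal{M}$ onto the cross-section $K$. First I would fix local coordinates $(x_1,\dots,x_m)$ adapted to the coordinate cross-section, relabelled so that $K=\{x_1=c_1,\dots,x_r=c_r\}$ for constants $c_j$. Since the action is free and regular, Theorem~\ref{thm:cross-section-moving-frame} guarantees that $\rho$ is a well-defined smooth moving frame, so we may write $k(p)=(c_1,\dots,c_r,I_1(p),\dots,I_{m-r}(p))\in K$; the functions $I_1,\dots,I_{m-r}$ are exactly the non-constant coordinates, and each is $G$-invariant because $k$ is.

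For completeness I would establish the \emph{replacement identity}: since $k(p)=\rho(p)\cdot p$ lies in the same orbit as $p$, every $G$-invariant $F$ satisfies $F(p)=F(\rho(p)\cdot p)=F(k(p))$. Evaluating the right-hand side in the chosen coordinates gives
\[
F(p)=F\bigl(c_1,\dots,c_r,I_1(p),\dots,I_{m-r}(p)\bigr)=H\bigl(I_1(p),\dots,I_{m-r}(p)\bigr),
\]
where $H$ is simply the restriction $F|_K$ read off in the free coordinates. Thus every invariant factors through $(I_1,\dots,I_{m-r})$, which is precisely completeness; the same identity shows these invariants separate orbits, since $k(p)=k(q)$ forces $p$ and $q$ into a common orbit.

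For functional independence I would note that $k|_K=\mathrm{id}_K$ — the unique group element carrying a point of $K$ back into $K$ is $e$, so $\rho\equiv e$ on $K$ — whence $k\circ k=k$ and $dk_p$ is a projection onto $T_pK$ of rank $m-r$ at points of $K$. To extend full rank to all $p$, I would factor $k=\iota\circ\pi$, where $\pi:\mathcal{M}\to\mathcal{M}/G$ is the quotient submersion (well defined locally as a map onto an $(m-r)$-dimensional manifold precisely because the action is free and regular) and $\iota:\mathcal{M}/G\to K$ is the local diffeomorphism supplied by transversality of the cross-section. Both factors have rank $m-r$, so $k$ does as well, and hence the differentials $dI_1,\dots,dI_{m-r}$, being the free-coordinate components of $dk$, are linearly independent.

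The main obstacle is the functional-independence step rather than completeness: completeness drops out almost immediately from the one-line replacement identity, whereas independence requires genuinely invoking regularity to guarantee that the local orbit space is a manifold and that $\pi$ is a submersion of the correct rank. Verifying that the coordinate cross-section meets each orbit transversally in a single point — so that $\iota$ is a diffeomorphism onto $K$ — is the technical heart, and it is exactly the geometric content that Theorems~\ref{thm:existence-moving-frame} and~\ref{thm:cross-section-moving-frame} package for us.
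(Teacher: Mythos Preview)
The paper does not supply its own proof of this statement: Theorem~\ref{thm:fundamental-invariants} appears in the preliminaries of Appendix~B as a background result quoted from \cite{olver_joint_2001}, with no accompanying argument. There is therefore nothing in the paper to compare your proposal against.

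That said, your sketch is essentially the standard proof one finds in Olver's treatment. The replacement identity $F(p)=F(k(p))$ is exactly the mechanism that yields completeness, and the factorization $k=\iota\circ\pi$ through the local orbit space is the usual route to functional independence. One minor point: you do not actually need to invoke the full factorization to get rank $m-r$ everywhere. Since the action is free and regular, the map $(g,q)\mapsto g\cdot q$ from $G\times K$ to $\mathcal{M}$ is a local diffeomorphism near $(e,p)$ for $p\in K$; composing its inverse with the projection onto $K$ reproduces $k$ and shows directly that $k$ is a submersion onto $K$ of constant rank $m-r$, without first constructing the quotient manifold. This is a slightly more self-contained packaging of the same idea, and it is closer to how Olver phrases the normalization procedure.
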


Moreover, this theorem aligns with the classical result on the structure and separation properties of invariants:

\begin{theorem}[Existence of Fundamental Invariants; see \citep{olver_equivalence_1995}]
Let $G$ be a Lie group acting freely and regularly on an $m$-dimensional manifold $\mathcal{M}$ with orbits of dimension $s$. Then, for each point $p \in \mathcal{M}$, there exist $m-s$ functionally independent invariants $I_1,\ldots,I_{m-s}$ defined on a neighborhood $U$ of $p$ such that any other invariant $I$ on $U$ can be expressed as $I = H(I_1,\ldots,I_{m-s})$ for some function $H$. Moreover, these invariants separate orbits: two points $y, z \in U$ lie in the same orbit if and only if $I_\upsilon(y) = I_\upsilon(z)$ for all $\upsilon = 1,\ldots,m-s$.
\end{theorem}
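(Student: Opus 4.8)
The plan is to derive this statement directly from the moving-frame apparatus assembled above, reading it as the coordinate-free summary of Theorems~\ref{thm:existence-moving-frame}--\ref{thm:fundamental-invariants}. First I would record that, since the action is free, the earlier characterization of free actions forces every orbit through a small neighborhood of $p$ to have dimension $s = r = \dim G$, so that $m - s = m - r$ is exactly the expected number of invariants. Shrinking to a neighborhood $U$ of $p$ on which the action is free and regular, I would choose local coordinates $(x^1,\dots,x^m)$ centered at $p$ and single out $r$ of them whose differentials pair non-degenerately with the infinitesimal generators of the action at $p$; fixing these $r$ coordinates to their values at $p$ produces a coordinate cross-section $K$ of dimension $m - s$ that is transverse to the orbit through $p$. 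Theorem~\ref{thm:cross-section-moving-frame} then supplies a moving frame $\rho : U \to G$ with canonicalization map $k(q) = \rho(q)\cdot q \in K$, and Theorem~\ref{thm:fundamental-invariants} identifies the $m-s$ non-constant coordinates of $k$ as a functionally independent invariant system $I_1,\dots,I_{m-s}$ on $U$.

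It then remains to verify the two closure properties. For expressibility, let $I$ be any $G$-invariant function on $U$. Since $\rho(q)\in G$, the canonicalized point $k(q)$ lies in $\operatorname{Orb}(q)$, so invariance gives $I(q)=I(k(q))$ for every $q\in U$. Because $k(q)\in K$ and points of $K$ are determined by the $m-s$ free coordinates, which are precisely $I_1(q),\dots,I_{m-s}(q)$, the restriction of $I$ to $K$ is a function $H$ of those free coordinates; hence $I = H(I_1,\dots,I_{m-s})$, as claimed. For orbit separation, the forward direction is immediate, since each invariant $I_\upsilon$ is constant on orbits. For the converse, suppose $I_\upsilon(y)=I_\upsilon(z)$ for all $\upsilon$. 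The full coordinate vector of $k$ splits into the constant cross-section coordinates, which agree trivially, and the $I_\upsilon$, which agree by hypothesis; therefore $k(y)=k(z)$, i.e. $\rho(y)\cdot y = \rho(z)\cdot z$. Rearranging with the action law gives $y = (\rho(y)^{-1}\rho(z))\cdot z$, so $y\in\operatorname{Orb}(z)$ and the two points share an orbit.

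The step I expect to be most delicate is not the algebra above but the local set-up that legitimizes it: one must take $U$ small enough that the cross-section $K$ meets each orbit passing through $U$ in exactly one point and that $\rho$ is smooth and single-valued there. This uniqueness of the orbit--cross-section intersection is precisely where \emph{regularity}, and not merely freeness, is needed, and it is exactly what Theorems~\ref{thm:existence-moving-frame} and~\ref{thm:cross-section-moving-frame} guarantee. Once that local structure is secured, the remainder of the proof is a direct reading-off of coordinates through the canonicalization map $k$, so I would spend the bulk of the writeup justifying the transversality choice and the single-valuedness of $\rho$ on $U$, and keep the two closure arguments brief.
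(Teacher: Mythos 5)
Your proposal is correct, but be aware that the paper contains no proof of this statement at all: it is quoted as a classical result of \citet{olver_equivalence_1995}, alongside Theorems~\ref{thm:existence-moving-frame}--\ref{thm:fundamental-invariants}, which are likewise cited without proof. What you have produced is a sound reconstruction from those companion theorems: freeness forces the orbit dimension $s$ to equal $\dim G$, a coordinate cross-section $K$ of dimension $m-s$ through $p$ exists by transversality to the infinitesimal generators, and the canonicalization $k(q)=\rho(q)\cdot q$ delivers both closure properties exactly as you argue --- $I(q)=I(k(q))$ gives expressibility with $H$ the restriction of $I$ to $K$ in the free coordinates, and agreement of the $I_\upsilon$ forces $k(y)=k(z)$, whence $y=\bigl(\rho(y)^{-1}\rho(z)\bigr)\cdot z$. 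It is worth noting that this moving-frame route differs from Olver's own textbook proof of the theorem, which straightens the orbit foliation via the Frobenius/flow-box theorem in rectifying coordinates and takes the transverse coordinates as the invariants; that argument needs only regularity of the orbit foliation, not freeness, whereas yours genuinely uses freeness (through the existence of the moving frame). Since freeness is among the hypotheses, nothing is lost, and your version has the virtue of being self-contained within the appendix's apparatus --- indeed it makes explicit the logical dependence that the paper leaves implicit when it says these results ``underlie all moving-frame computations.'' Your flagged caveat is also the right one: the forward direction of orbit separation is ``immediate'' only once $U$ is shrunk so that $K$ meets each orbit in exactly one point and $\rho$ is smooth and single-valued there, which is precisely what regularity (via Theorems~\ref{thm:existence-moving-frame} and~\ref{thm:cross-section-moving-frame}) guarantees, and it is also what legitimizes the step $I(q)=I(k(q))$ for invariants that are only locally defined on $U$.
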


These foundational results—existence, construction via cross-sections, and the properties of invariants—underlie all moving-frame computations and guarantee that one obtains a complete, orbit-separating set of invariants. 

\subsection{Canonicalization via latent-pose extension}
\newtheorem*{thInvGroup}{Theorem~\ref{thm:augmented-moving-frame} (Restated)}
\begin{thInvGroup}
Let $G$ be a Lie group acting smoothly and regularly (but not necessarily freely) on each Riemannian manifold $\mathcal{M}_i$ via $
\delta_i : G \times \mathcal{M}_i \to \mathcal{M}_i$, for $i=1,\dots,m$, and hence diagonally on
\[
\Pi = \mathcal{M}_1 \times \cdots \times \mathcal{M}_m,\quad
\delta\bigl(g,(p_1,\dots,p_m)\bigr) = \bigl(\delta_1(g,p_1),\dots,\delta_m(g,p_m)\bigr).
\]
On the augmented space $\widebar\Pi = \Pi \times G$, define $
\widebar\delta\bigl(h,(p_1,\dots,p_m,g)\bigr)
= \bigl(\delta(h,(p_1,\dots,p_m)),\;h\,g\bigr).
$

$ $

Then:
\begin{enumerate}
  \item $\widebar\delta$ is free.
  \item A moving frame is given by $\rho: \widebar\Pi \to G$, such that $\rho(p_1,\dots,p_m,g)=g^{-1}$.
  \item The set $ \left\{ \delta_i(g^{-1}, p_i) \right\}_{i=1}^m $
forms a complete collection of functionally independent invariants of the action $\widebar{\mu}$.
\end{enumerate}
\end{thInvGroup}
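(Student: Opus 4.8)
The plan is to verify the three claims in sequence, leveraging the fact that the augmented $G$-factor, acted on by left multiplication, behaves as a ``rigid'' coordinate that pins down the group element and thereby rigidifies the otherwise non-free diagonal action on $\Pi$. First I would establish freeness by a direct isotropy computation; next I would exhibit the explicit candidate moving frame $\rho(p_1,\dots,p_m,g)=g^{-1}$ and check it satisfies the defining equivariance; finally I would read off the invariants as the non-constant coordinates of the associated canonicalization map and invoke Theorem~\ref{thm:fundamental-invariants} for completeness.

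For freeness (claim 1), I would compute the isotropy subgroup of an arbitrary point $(p_1,\dots,p_m,g)\in\widebar\Pi$. An element $h\in G$ fixes it under $\widebar\delta$ precisely when $\delta_i(h,p_i)=p_i$ for every $i$ and, crucially, $h\cdot g=g$ in $G$. The last equation is left multiplication in the group, so right-multiplying by $g^{-1}$ forces $h=e$. Hence the isotropy group is $G_{p_1}\cap\dots\cap G_{p_m}\cap\{e\}=\{e\}$ irrespective of the individual stabilizers $G_{p_i}$, and the action is free. This is exactly where the latent-pose extension does its work: appending $G$ with its free left action annihilates the stabilizer without imposing any freeness assumption on the original $\delta_i$.

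For the moving frame (claim 2) and invariants (claim 3), I would first confirm that $\rho(p_1,\dots,p_m,g)=g^{-1}$ is smooth (composition of the smooth projection onto the $G$-component with group inversion) and right-equivariant: writing $p=(p_1,\dots,p_m,g)$, since $\widebar\delta(h,\cdot)$ sends $g\mapsto hg$, we get $\rho(\widebar\delta(h,p))=(hg)^{-1}=g^{-1}h^{-1}=\rho(p)\,h^{-1}$, matching the right moving-frame convention. The canonicalization map $k(p)=\rho(p)\cdot p$ then evaluates to $\bigl(\delta_1(g^{-1},p_1),\dots,\delta_m(g^{-1},p_m),\,g^{-1}\cdot g\bigr)$, whose final entry is the constant $e$; its non-constant entries are precisely $\{\delta_i(g^{-1},p_i)\}_{i=1}^m$. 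Since $\rho$ arises from the coordinate cross-section $K=\Pi\times\{e\}$ (which meets each orbit in the single point obtained by the unique choice $h=g^{-1}$), Theorem~\ref{thm:fundamental-invariants} certifies this set is a complete, functionally independent collection of invariants. A dimension count corroborates completeness: $\dim\widebar\Pi-\dim(\text{orbit})=(\dim\Pi+\dim G)-\dim G=\dim\Pi$, matching the number of free coordinates in $\Pi$.

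The main obstacle I anticipate is not the algebra but the hypothesis bookkeeping: the moving-frame existence and cross-section results (Theorems~\ref{thm:existence-moving-frame}--\ref{thm:fundamental-invariants}) require the augmented action to be both free and \emph{regular}, and they are stated locally. I would discharge regularity cleanly by introducing the trivializing diffeomorphism $\Phi(p_1,\dots,p_m,g)=(\delta_1(g^{-1},p_1),\dots,\delta_m(g^{-1},p_m),g)$, under which $\widebar\delta(h,\cdot)$ conjugates to the action $(q_1,\dots,q_m,\gamma)\mapsto(q_1,\dots,q_m,h\gamma)$ that touches only the last factor by left translation. In these coordinates the orbits are the regular submanifolds $\{(q_1,\dots,q_m)\}\times G$ and the $q_i$ are manifestly invariant, so regularity is immediate and the moving frame $\rho$ is in fact global rather than merely local. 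This trivialization both rescues the classical machinery and makes the completeness of $\{\delta_i(g^{-1},p_i)\}$ transparent.
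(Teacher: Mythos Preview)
Your proposal is correct and follows essentially the same path as the paper: the same isotropy computation for freeness, the same cross-section $K=\Pi\times\{e\}$ with moving frame $\rho=g^{-1}$ and equivariance check, and the same appeal to Theorem~\ref{thm:fundamental-invariants} after computing the canonicalization map. Your trivializing diffeomorphism $\Phi$ to verify regularity of the augmented action (and your dimension count) go slightly beyond the paper, which simply asserts regularity without justification; this makes your argument the more careful of the two.
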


\begin{proof}$ $\newline
\textbf{(1) Freeness:}
Fix \((p_1,\dots,p_m,g)\in\widebar\Pi\).  Its isotropy subgroup is
\[
G_{(p_1,\dots,p_m,g)}
=\bigl\{\,h\in G:\;\delta(h,(p_1,\dots,p_m))=(p_1,\dots,p_m),\;h\,g=g\bigr\}.
\]
Hence
\[
G_{(p_1,\dots,p_m,g)}
= G_{p_1}\cap\cdots\cap G_{p_m}\cap G_g.
\]
where $G_{p_i}$ denotes the isotropy subgroup of $p_i$ under $\delta_i$, and $G_g$ is the isotropy subgroup of $g \in G$ under left multiplication. Since $h g = g$ implies $h = e$ in a group, we have $G_g = \{e\}$. Thus, the intersection is trivial, and $\widebar{\mu}$ defines a free action.

\smallskip
\textbf{(2) Moving frame existence:}
Because $\widebar\delta$ is free and regular, Theorem~\ref{thm:existence-moving-frame} guarantees a unique smooth equivariant map 
\[
\rho:\widebar\Pi\to G
\]
associated to the cross-section $K = \{(p_1,\ldots,p_m,g) \in \widebar{\Pi} : g = e\}$. A direct check shows
\[
\rho(p_1,\dots,p_m,g)=g^{-1},
\]
and one verifies equivariance
\[
\rho\bigl(\widebar\delta(h,(p_1,\dots,p_m,g))\bigr)
=(h\,g)^{-1}=g^{-1}h^{-1}
=\rho(p_1,\dots,p_m,g)\,h^{-1}.
\]

\smallskip
\textbf{(3) Functional independence and completeness:}
The canonicalization map \(k(p_1,\dots,p_m,g)= \widebar{\delta}(\rho(p_1,\dots,p_m,g), (p_1,\dots,p_m,g))\in K\) gives
\begin{align*}
k(p_1,\dots,p_m,g)&= \widebar{\delta}(g^{-1}, (p_1,\dots,p_m,g))\\
&= \bigl(\delta_1(g^{-1},p_1),\dots,\delta_m(g^{-1},p_m),\,e\bigr).  
\end{align*}
By Theorem~\ref{thm:fundamental-invariants}, the nonconstant coordinates \(\delta_i(g^{-1},p_i)\) are functionally independent and generate all invariants of \(\widebar\delta\).
\end{proof}

\section{Training and Inference Details}
\label{app:train}
\subsection{Training}

\paragraph{Autodecoding.}  
In autodecoding, we jointly optimize both the network parameters $\theta$ and the latent conditioning variables ${z_i}$ across the dataset. As detailed in Algorithm~\ref{alg:autodecoding}, this approach yields a tighter fit to the eikonal equation, at the expense of longer training times. Empirically, we observe that convergence on the validation set requires between 250 and 500 fitting epochs.

\begin{algorithm}[h]
\caption{Autodecoding Training}
\label{alg:autodecoding}
\begin{algorithmic}[1]
\Require Velocity fields $\mathcal{V}=\{v_l\}_{l=1}^K$, epochs $num\_epochs$, batch size $B$, pairs per field $N_{sr}$, learning rate $\eta$
\State Randomly initialize shared base network $T_\theta$
\State Initialize latents $z_l \leftarrow \{(g_i, \mathbf{c}_i)\}_{i=1}^N$ \textbf{for all velocity fields}
\For{$epochs=1$ to $num\_epochs$}
\While{dataloader not empty}
\State Sample batch $\mathcal{B}=\{(s_{i,j},r_{i,j},v_i(s_{i,j}),v_i(r_{i,j}))\}_{i=1,j=1}^{B,N_{sr}}$
\State Compute loss $L(\theta,\{z_i\}_{i=1}^B, \mathcal{B})$ (see Equation~\ref{eq:loss})
\State Update $\theta \leftarrow \theta - \eta\,\nabla_\theta L$
\State Update each $z_i \leftarrow z_i - \eta\,\nabla_{z_i} L$
\EndWhile
\EndFor
\Ensure Trained $\theta$ and latents $\{z_l\}_{l=1}^K$
\end{algorithmic}
\end{algorithm}
\paragraph{Meta‐learning.}  

Our meta‑learning framework separates training into two loops: an inner loop for optimizing latents and an outer loop for updating network parameters. Algorithm~\ref{alg:meta_learning} summarizes this bi-level optimization procedure.

\begin{algorithm}[h]
\caption{Meta‑learning Training}
\label{alg:meta_learning}
\begin{algorithmic}[1]
\Require Velocity fields $\mathcal{V}=\{v_l\}_{l=1}^K$, outer epochs $num\_epochs$, inner steps $S$, batch size $B$, pairs per field $N_{sr}$, learning rates $\eta_\theta, \eta_{\text{SGD}}$
\State Initialize shared base network $T_\theta$ (optionally pretrained), and learnable learning rate $\eta_z$.
\For{$epochs=1$ to $num\_epochs$}
\While{dataloader not empty}
\State Sample batch of velocity fields $\{v_i\}_{i=1}^B \subseteq \mathcal{V}$
\State Initialize latents ${z_i^{(0)}}$ for each $v_i$ 
\For{$t=1$ to $S$} \Comment{\textbf{Inner loop:} Update latents}
\State Sample \(N_{sr}\) source–receiver pairs \(\{(s_{i,j}^{(t-1)}, r_{i,j}^{(t-1)})\}_{j=1}^{N_{sr}} \subset \mathcal{M}^2\), for each $v_i$
\State Construct batch $\mathcal{B}^{(t-1)}=\{(s_{i,j}^{(t-1)},r_{i,j}^{(t-1)},v_i(s_{i,j}^{(t-1)}),v_i(r_{i,j}^{(t-1)}))\}_{i=1,j=1}^{B,N_{sr}}$
\State Compute $\widetilde{L}(\theta,\{z_i^{(t-1)}\}_{i=1}^B , \mathcal{B}^{(t-1)})$ 
\State Update each $z_i^{(t)} \leftarrow z_i^{(t-1)} - \eta_z\,\nabla_{z_i} \widetilde{L}$
\EndFor
\State Sample \(N_{sr}\) source–receiver pairs \(\{(s_{i,j}^{(S)}, r_{i,j}^{(S)})\}_{j=1}^{N_{sr}} \subset \mathcal{M}^2\), for each $v_i$
\State Construct batch $\mathcal{B}^{(S)}=\{(s_{i,j}^{(S)},r_{i,j}^{(S)},v_i(s_{i,j}^{(S)}),v_i(r_{i,j}^{(S)}))\}_{i=1,j=1}^{B,N_{sr}}$
\State Compute $\widetilde{L}_{meta}(\theta) = \widetilde{L}(\theta,\{z_i^{(S)}\}_{i=1}^B, \mathcal{B}^{(S)})$
\State Update $\theta \leftarrow \theta - \eta_\theta\,\nabla_\theta \widetilde{L}_{meta}$
\State Update $\eta_z \leftarrow \eta_z - \eta_{\text{SGD}}\,\nabla_{\eta_z } \widetilde{L}_{meta}$
\EndWhile
\EndFor
\Ensure Trained $\theta$
\end{algorithmic}
\end{algorithm}

By leveraging meta‑learning, we achieve significantly faster fitting and impose additional structure on the latent space \citep{knigge_space-time_2024}. However, as noted by \cite{dupont_data_2022}, the small number of inner‑loop updates typically used on meta-learning can restrict expressivity. To mitigate this, we initialize the network with weights pretrained via autodecoding, which accelerates convergence and often leads to better local minima (see Section~\ref{app:pretrain}).

In our implementation, we utilize an alternative loss function $\widetilde{L}$ in place of the original loss defined in Equation~\ref{eq:loss}. Specifically, $\widetilde{L}$ employs the log-hyperbolic cosine loss $\log(\cosh(x))$ \citep{jeendgar_loggene_2022}, as a differentiable substitute for the absolute value term in Equation~\ref{eq:loss}. This substitution is critical for effective meta-learning, as the log-cosh function provides a smooth approximation to the absolute value while maintaining convexity and ensuring well-behaved gradients throughout its domain. The differentiability properties of this function enable us to obtain high-quality higher-order derivatives, which are essential for the backpropagation process through all inner optimization steps, as outlined in lines 15 and 16 of Algorithm~\ref{alg:meta_learning}. 

\subsection{Inference}
\label{app:inf}

\paragraph{Solving for new velocity fields.} Given a new set of velocity fields, we will obtain their corresponding latent representation as follows:
\begin{itemize}
    \item \textbf{Autodecoding:} we perform Algorithm~\ref{alg:autodecoding} using the frozen weights $\theta^*$, i.e., we do not perform steps 1 and 7.
      \item \textbf{Meta-learning:} we perform the inner loop of the Algorithm~\ref{alg:meta_learning} using frozen weights $\theta^*$, i.e., we do steps 5 to 11.
\end{itemize}

\paragraph{Execution of bidirectional backward integration.} As stated in Section~\ref{sec:meth}, given the solution of the eikonal equation, you can obtain the shortest path between two points under the given velocity via backward integration. Indeed, we perform SGD over the normalized gradients $ \|\operatorname{grad}_{s} T(s, r)\|_{\mathcal{G}}\, \operatorname{grad}_{s} T(s, r)$ \citep{bekkers_pde_2015}.
Furthermore, as shown in \cite{ni_ntfields_2023}, our model's symmetry behavior allows us to perform gradient steps bidirectionally from source to receiver and from receiver to source. Hence, we compute the final path solution bidirectionally using iterative Riemannian Gradient Descent \citep{absil_optimization_2008} by updating the source and receiver points as follows, where $\alpha \in \mathbb{R}$ is a step size hyperparameter.
\begin{equation}
\label{eq:TwoPointEikonal}
\begin{cases}
    s^{(t)} \gets R_{s^{(t-1)}}\left(-\alpha\, \|\operatorname{grad}_{s} T_{\theta^*}(s^{(t-1)}, r^{(t-1) };z_l)\|_{\mathcal{G}}\, \operatorname{grad}_{s} T_{\theta^*}(s^{(t-1)}, r^{(t-1)};z_l)\right), \\
    r^{(t)} \gets R_{r^{(t-1)}}\left(-\alpha\, \|\operatorname{grad}_{r} T_{\theta^*}(s^{(t-1)}, r^{(t-1)};z_l)\|_{\mathcal{G}}\, \operatorname{grad}_{r} T_{\theta^*}(s^{(t-1)}, r^{(t-1)};z_l)\right),
\end{cases}
\end{equation}

where $R_p: T_p\mathcal{M} \to \mathcal{M}$ is a \textit{retraction} at $p\in \mathcal{M}$. The retraction mapping will provide a notion of moving in the direction of a tangent vector, while staying on the manifold:

\begin{Definition}[Retraction; see \cite{absil_optimization_2008}]
A retraction on a manifold $\mathcal{M}$ is a smooth mapping $R$ from the tangent bundle $T\mathcal{M}$ onto $\mathcal{M}$ with the following properties. Let $R_p$ denote the restriction of $R$ to $T_p\mathcal{M}$.
\begin{itemize}
    \item[(i)] $R_p(\dot{0}_p) = p$, where $\dot{0}_p$ denotes the zero element of $T_p\mathcal{M}$.
    \item[(ii)] With the canonical identification $T_{\dot{0}_p} T_p\mathcal{M} \simeq T_p\mathcal{M}$, $R_p$ satisfies
    \[
        \diff R_p(\dot{0}_p) = \mathrm{id}_{T_p\mathcal{M}},
    \]
    where $\mathrm{id}_{T_p\mathcal{M}}$ denotes the identity mapping on $T_p\mathcal{M}$.
\end{itemize}
\end{Definition}

\section{Experimental Details}
\label{app:experimental-details}

This section presents the comprehensive training and validation hyperparameters employed in the experiments described in Section~\ref{sec:exp}. All experiments were conducted using a single NVIDIA H100 GPU.

\subsection{2D OpenFWI Experiments}

\paragraph{Model Architecture.}
Our invariant cross-attention implementation utilizes a hidden dimension of 128 with 2 attention heads. The conditioning variables are defined as $z \in \mathcal{P}(\mathcal{Z})$ with cardinality $|z|=9$, where $\mathcal{Z} = SE(2) \times \mathbb{R}^{32}$ for each velocity field. We initialize the pose component of the latents—derived from $SE(2) = \mathbb{R}^2 \times S^1$—at equidistant positions in $\mathbb{R}^2$, with orientations randomly sampled from a uniform distribution over $[-\pi, \pi)$. The context component of the latents is initialized as constant unit vectors.

For embedding the invariants, we employ RFF-Net, a variant of Random Fourier Features with trainable frequency parameters. This approach enhances training robustness with respect to hyperparameter selection. Following the methodology of \cite{wessels_grounding_2024}, we implement two distinct RFF embeddings: one for the value function and another for the query function of the cross-attention mechanism. The frequency parameters are initialized to 0.05 for the query function and 0.2 for the value function. 

Detailed ablations on these architectural choices can be found in Section~\ref{app:ablation_embeddings} and Section~\ref{app:ablation_latent}

\paragraph{Dataset Configuration.}
For each OpenFWI dataset, we sample 600 velocity fields for training and 100 for validation. We further divide the training set into 500 fields for training and 100 fields for testing. For each velocity field, we uniformly sample 20,480 coordinates, producing 10,240 pairs per velocity field. Each batch comprises two velocity fields with 5,120 source-receiver pairs per field. 

\paragraph{Training Protocol.}
The autodecoding phase consists of 3,000 epochs, while the meta-learning phase comprises 500 epochs. To mitigate overfitting, we report results based on the model that performs optimally on the validation set. Under this criterion, the effective training duration for autodecoding averages approximately 920 epochs (3.6 hours per dataset), while meta-learning averages 440 epochs (17.8 hours per dataset).

\paragraph{SE(2) Optimization with Pseudo-Exponential Map.}
For optimizing parameters in $SE(2) = \mathbb{R}^2 \times S^1$, we employ a standard simplification of Riemannian optimization for affine groups, known as parameterization via the ``pseudo-exponential map.'' This approach substitutes the exponential map of $SE(n) = \mathbb{R}^n \rtimes SO(n)$ with the exponential map of $\mathbb{R}^n \times SO(n)$ \citep{sola_micro_2021, claraco_tutorial_2022}. This technique is applied in both autodecoding and meta-learning phases, though with different optimizers as detailed below.

\paragraph{Autodecoding Optimization Strategy.}
For autodecoding, we optimize all parameters using the Adam optimizer with different learning rates for each component. The model parameters are trained with a learning rate of $10^{-4}$. For the latent variables, context vectors use a learning rate of $10^{-2}$, while pose components in $SE(2)$ are optimized with a learning rate of $10^{-3}$. Both latent variable components (context and pose) are optimized using Adam.

\paragraph{Meta-learning Configuration.}
For meta-learning, we jointly optimize the model parameters $\theta$ and inner loop learning rates $\eta_z$ using Adam with a cosine scheduler. This scheduler implements a single cycle with an initial learning rate of $10^{-4}$ and a minimum learning rate of $10^{-6}$. For the SGD inner loop optimization, we initialize the learning rates at 30 for context vectors and 2 for pose components, executing 5 optimization steps in the inner loop.

\paragraph{Functa Model Architecture.} For the experiment presented in Table~\ref{tab:2d-functa}, adapt the model presented in \cite{dupont_data_2022} to have characteristics similar to our \texttt{E-NES} model. Specifically, we use a global conditioning variable of $z\in \mathbb{R}^{315}$ to match the total number of parameters in our geometric point cloud. Moreover, we will use a hidden dimension of 128 and a latent modulation size of 128 to match the dimensionality of our architecture.

\subsection{3D OpenFWI Experiments}

\paragraph{Model Architecture.}
For the 3D experiments, we adapt the architecture described in the 2D case with several key modifications. Most notably, we reduce the cross-attention mechanism to a single head rather than the two employed in the 2D experiments. Additionally, we utilize a set of eight elements in $\mathcal{Z}=SE(3)\times \mathbb{R}^{32}$ as conditioning variables instead of the nine used in the 2D case. 

\paragraph{Pose Representation.}
While we maintain the same general approach for pose optimization as in the 2D experiments, the 3D case requires parameterization of $SE(3)$ rather than $SE(2)$. We employ the pseudo-exponential map as described previously, but with an important distinction in the rotation component. Specifically, we parameterize the $SO(3)$ component using Euler angles.

\paragraph{Training and Optimization.}
All other aspects of the training procedure—including dataset configuration, optimization strategies, learning rates, and epoch counts—remain consistent with those detailed in the 2D experiments. We maintain the same distinction between optimization approaches in autodecoding (Adam for all parameters) and meta-learning (Adam for model parameters in the outer loop, SGD for latent variables in the inner loop). This consistency allows for direct comparison between 2D and 3D experimental results while accounting for the specific requirements of 3D modeling.

\subsection{Spherical Experiments}

\paragraph{Model Architecture.}
For the spherical experiments, we adapt the architecture from the 2D Euclidean case with several key modifications. First, we reduce the cross-attention mechanism to a single head for the constant-velocity dataset, while using two heads for the non-constant cases as in the 2D experiments. Additionally, we employ conditioning variables from $\mathcal{Z}=SO(2)\times \mathbb{R}^{32}$ with nine elements for the non-constant case, compared to four elements from $SO(2)\times \mathbb{R}^{16}$ for the constant case. These differences reflect the reduced representational requirements of the constant-velocity scenario, which exhibits simpler dynamics than its non-constant counterparts.

\paragraph{Dataset Configuration.} 
We evaluate our method on three types of velocity fields defined over the sphere:
\begin{itemize}
    \item \textbf{Constant Speed Fields:}  
    For each sample, we draw a scalar velocity $v \sim \mathcal{U}(0.1, 2.0)$ and define a constant velocity field over the entire sphere. 
    
    \item \textbf{Spherical Style-B Fields:} 
    We construct spatially-varying velocity fields by projecting OpenFWI's 2D Style-B velocity models onto the sphere. We query the velocity field at continuous coordinates via RBF interpolation using cosine distance kernels.
    
    \item \textbf{Gaussian Obstacle Fields:} 
    For each sample, we generate random von Mises-Fisher distributions on the sphere with concentration parameters sampled uniformly from $\kappa \sim \mathcal{U}(1, 5)$. The distributions are normalized to the interval $[0.1, 10.0]$ to enforce distinctions between low-velocity regions (obstacles) and high-velocity regions.
\end{itemize}

Ground truth travel times are computed using the Hamiltonian Fast Marching method of \cite{mirebeau2019hamiltonian} with the canonical spherical metric tensor.

\section{Additional results}
\label{app:add-results}

\subsection{Impact of Autodecoding Pretraining on Meta-Learning Performance}
\label{app:pretrain}
This section examines the effectiveness of initializing meta-learning with parameters derived from standard autodecoding pretraining. We present a comparative analysis using the Style-A and CurveVel-A 2D OpenFWI datasets, evaluating performance through both eikonal loss and mean squared error (MSE) metrics throughout the training process.

Figure~\ref{fig:pretrain-meta} illustrates the training dynamics across both initialization strategies. Our results demonstrate that utilizing pretrained model parameters from the autodecoding phase substantially enhances convergence characteristics in two critical dimensions. First, pretrained initialization enables significantly faster convergence, reducing the number of required epochs to reach performance plateaus. Second, and more importantly, this approach allows the optimization process to achieve superior local minima compared to random initialization.

\begin{figure}[ht!]
    \centering
    \includegraphics[width=\linewidth]{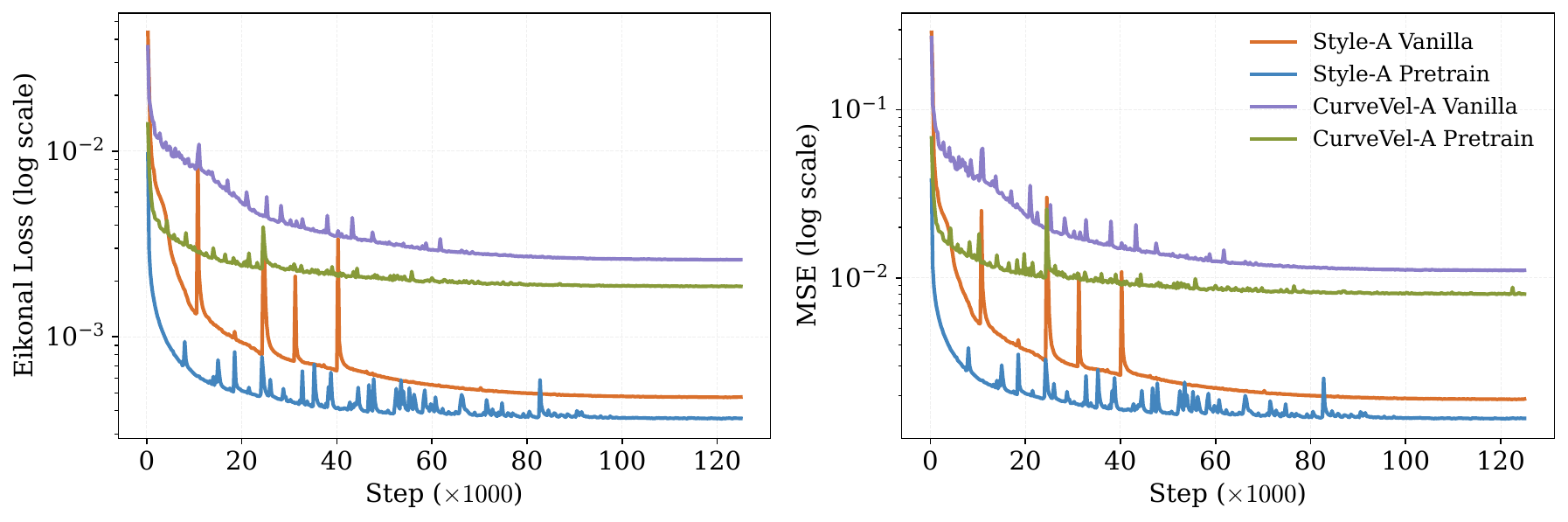}
     \caption{Comparative analysis of meta-learning convergence with pretrained versus random initialization on Style-A and CurveVel-A OpenFWI datasets.}
    \label{fig:pretrain-meta}
\end{figure}

These findings highlight a fundamental efficiency in our methodology: by leveraging pretrained autodecoding parameters, the meta-learning phase is effectively transformed from learning from scratch to a targeted adaptation task. Specifically, the pretrained model has already established a robust conditional neural field representation of the underlying physics. The subsequent meta-learning process then primarily needs to adapt this existing representation to interpret conditioning variables obtained through just 5 steps of SGD, rather than those refined over 500 epochs of autodecoding. This represents a significant computational advantage, as the meta-learning algorithm can focus exclusively on learning the mapping between rapidly-obtained SGD variables and the already-established neural field, rather than simultaneously learning both the field representation and the optimal conditioning.

\subsection{Comprehensive Grid Evaluation of OpenFWI Performance}
\label{app:full_grid}

In Section~\ref{sec:exp}, we evaluated \texttt{E-NES} performance on the OpenFWI benchmark following the methodology established by \cite{mei_fully_2024}, which utilizes four equidistant source points at the top of the velocity fields and measures predicted travel times from these sources to all 70×70 receiver coordinates. This approach allows direct comparison with both the Fast Marching Method (FMM) and the FC-DeepOnet model \citep{mei_fully_2024}.

To validate the robustness and generalizability of our approach, we conducted additional experiments using a substantially denser sampling protocol—specifically, a uniform 14×14 source point grid similar to that employed by \cite{grubas_neural_2023}. As demonstrated in Table~\ref{tab:2d-res-top-full}, \texttt{E-NES} maintains performance metrics comparable to those presented in Table~\ref{tab:2d-res-top}, despite the significant increase in source points and resulting source-receiver pairs.

\begin{table}[ht]
\renewcommand{\arraystretch}{1.3}
\caption{Performance on OpenFWI datasets on a $14\times 14$ grid of source points. Fitting time represents the total computational time required to fit the latent conditioning variables for all 100 testing velocity fields.}
\label{tab:2d-res-top-full}
\centering
\medskip
\resizebox{\textwidth}{!}{
\begin{tabular}{lccccccccc}
\toprule
 &
  \multicolumn{3}{c}{Autodecoding (100 epochs)} &
  \multicolumn{3}{c}{Autodecoding (convergence)} &
  \multicolumn{3}{c}{Meta-learning} \\ \cmidrule(lr){2-4} \cmidrule(lr){5-7} \cmidrule(lr){8-10} 
Dataset &
  RE ($\downarrow$) &
  \multicolumn{1}{l}{RMAE ($\downarrow$)} &
  Fitting (s) &
  RE ($\downarrow$) &
  \multicolumn{1}{l}{RMAE ($\downarrow$)} &
  Fitting (s) &
  RE ($\downarrow$) &
  \multicolumn{1}{l}{RMAE ($\downarrow$)} &
  Fitting (s) \\ \midrule
FlatVel-A    & 0.01023 & 0.00827 & 223.31 & 0.00624 & 0.00509 & 1010.90 & 0.01304 & 0.01003 & 5.92 \\
CurveVel-A   & 0.01438 & 0.01139 & 222.72 & 0.01069 & 0.00841 & 1009.67 & 0.02460 & 0.01878 & 5.91 \\
FlatFault-A  & 0.01050 & 0.00751 & 222.61 & 0.00744 & 0.00510 & 1014.45 & 0.01749 & 0.01255 & 5.92 \\
CurveFault-A & 0.01380 & 0.00976 & 222.89 & 0.01088 & 0.00745 & 1007.97 & 0.02471 & 0.01807 & 5.92 \\
Style-A      & 0.00962 & 0.00785 & 222.00 & 0.00795 & 0.00646 & 783.13  & 0.01326 & 0.01036 & 5.92 \\ \midrule
FlatVel-B    & 0.01988 & 0.01586 & 222.74 & 0.01178 & 0.00906 & 786.48  & 0.03077 & 0.02474 & 5.91 \\
CurveVel-B   & 0.04291 & 0.03349 & 222.97 & 0.03297 & 0.02528 & 1010.70 & 0.04977 & 0.03930 & 5.90 \\
FlatFault-B  & 0.01889 & 0.01413 & 222.70 & 0.01557 & 0.01147 & 898.28  & 0.02998 & 0.02214 & 5.93 \\
CurveFault-B & 0.02244 & 0.01728 & 222.89 & 0.01991 & 0.01537 & 561.22  & 0.03824 & 0.02945 & 5.89 \\
Style-B      & 0.01061 & 0.00860 & 221.90 & 0.00984 & 0.00798 & 1120.09 & 0.01566 & 0.01227 & 5.90 \\ \bottomrule
\end{tabular}
}
\end{table}

This consistency across sampling densities provides strong evidence that \texttt{E-NES} effectively captures the underlying travel-time function for arbitrary point pairs throughout the domain. We attribute this capability to two key architectural decisions in our approach. First, the grid-free architecture allows the model to operate on continuous spatial coordinates rather than discretized grid positions. Second, our training methodology leverages physics-informed neural network (PINN) principles rather than relying on numerical solver supervision as implemented in \cite{mei_fully_2024}. This physics-based learning approach enables \texttt{E-NES} to internalize the governing eikonal equation, resulting in a more generalizable representation of travel-time fields that remains accurate across varying evaluation protocols.

\subsection{Ablation Study: Choice of Embedding for Computing Invariants}
\label{app:ablation_embeddings}

To better understand the impact of architectural choices on model performance, we conducted an ablation study examining different embedding functions for computing invariants in our cross-attention module. Table~\ref{tab:embeddings} compares three embedding approaches: a standard MLP, a polynomial embedding (degree 3), and Random Fourier Features (RFF, our chosen approach).

The results demonstrate that RFF embeddings provide the best balance of accuracy and computational efficiency. While the polynomial embedding achieves comparable relative error (RE) and relative mean absolute error (RMAE) to RFF, it incurs a prohibitively high computational cost, with fit and inference time approximately 6 times slower. The standard MLP embedding, though computationally efficient, shows degraded performance across both metrics.

\begin{table}[h!]
\centering
\caption{Ablation on the choice of embedding for computing invariants (CurveFault-B validation set).}
\medskip
\label{tab:embeddings}
\begin{tabular}{lccc}
\toprule
Embedding & {RE ($\downarrow$)} & {RMAE ($\downarrow$)} & {Fit + Inf. Time (s)} \\
\midrule
MLP                   & 0.025 & 0.020 & 237.2 \\
Polynomial (deg. 3)   & 0.024 & 0.019 & 1460.7 \\
RFF (Ours)            & \textbf{0.021} & \textbf{0.016} & 255.8 \\
\bottomrule
\end{tabular}
\end{table}

These findings justify our choice of RFF with learnable frequencies (as discussed in Section D.2), which combines superior accuracy with reasonable computational requirements.

\subsection{Ablation Study: Latent Conditioning Design}
\label{app:ablation_latent}

The design of the latent conditioning mechanism significantly impacts both model performance and computational efficiency. We investigated different configurations by varying the number of latent points ($N$) and their dimensionality ($d$), while maintaining a fixed total parameter budget of 288 (i.e., $N \times d = 288$). This ensures a fair comparison across different architectural choices.

Table~\ref{tab:latent} presents results on the CurveFault-B validation set. The configuration with 9 latent points of dimension 32 provides the optimal tradeoff between accuracy and computational cost, achieving the lowest error metrics (RE = 0.021, RMAE = 0.016) with reasonable inference time (255.8 seconds).

\begin{table}[h!]
\centering
\caption{Ablation on latent conditioning design (CurveFault-B validation set). Total latent budget fixed at 288 parameters.}
\medskip
\label{tab:latent}
\begin{tabular}{ccccc}
\toprule
\# Latents ($N$) & Latent Dim. ($d$) & RE ($\downarrow$) & RMAE ($\downarrow$) & Fit + Inf. Time (s) \\
\midrule
1  & 288 & 0.078 & 0.066 & 58.5 \\
4  & 72  & 0.047 & 0.038 & 139.3 \\
9  & 32  & \textbf{0.021} & \textbf{0.016} & 255.8 \\
16 & 18  & 0.039 & 0.031 & 501.7 \\
\bottomrule
\end{tabular}
\end{table}

Notably, using a single high-dimensional latent vector (1 × 288) results in significantly degraded performance, suggesting that spatial distribution of latent conditioning is crucial for capturing the complexity of the velocity field. Conversely, using too many low-dimensional latents (16 × 18) increases computational cost without improving accuracy, likely due to insufficient expressiveness of the context vector per latent point. The selected configuration (9 × 32) thus represents an effective compromise that provides both spatial coverage and sufficient representational capacity per latent point.

These results highlight the importance of carefully balancing the number and dimensionality of latent conditioning variables, and demonstrate that our chosen architecture is well-tuned for the problem at hand.

\subsection{Ablation Study: Impact of Autodecoding Fitting Epochs}
\label{app:ab-steps}

This section presents a systematic analysis of how the number of autodecoding fitting epochs affects model performance and computational efficiency. We evaluate the \texttt{E-NES} model on the 2D-OpenFWI datasets across both grid configurations described in Section~\ref{app:full_grid}, measuring performance in terms of Relative Error while tracking computational costs. Additionally, we provide a comparative analysis between the standard autodecoding approach and our meta-learning methodology.

Figures~\ref{fig:top-ablation} and~\ref{fig:full-ablation} illustrate the relationship between fitting time, number of epochs, and model performance across all datasets. Our analysis reveals that most datasets reach optimal solution convergence at approximately 400 autodecoding fitting epochs. Performance improvements beyond this threshold exhibit diminishing returns relative to the additional computational investment required. Notably, approximately 100 autodecoding epochs represents an effective compromise between computational efficiency and performance quality.

\begin{figure}[ht!]
    \centering
    \includegraphics[width=\linewidth]{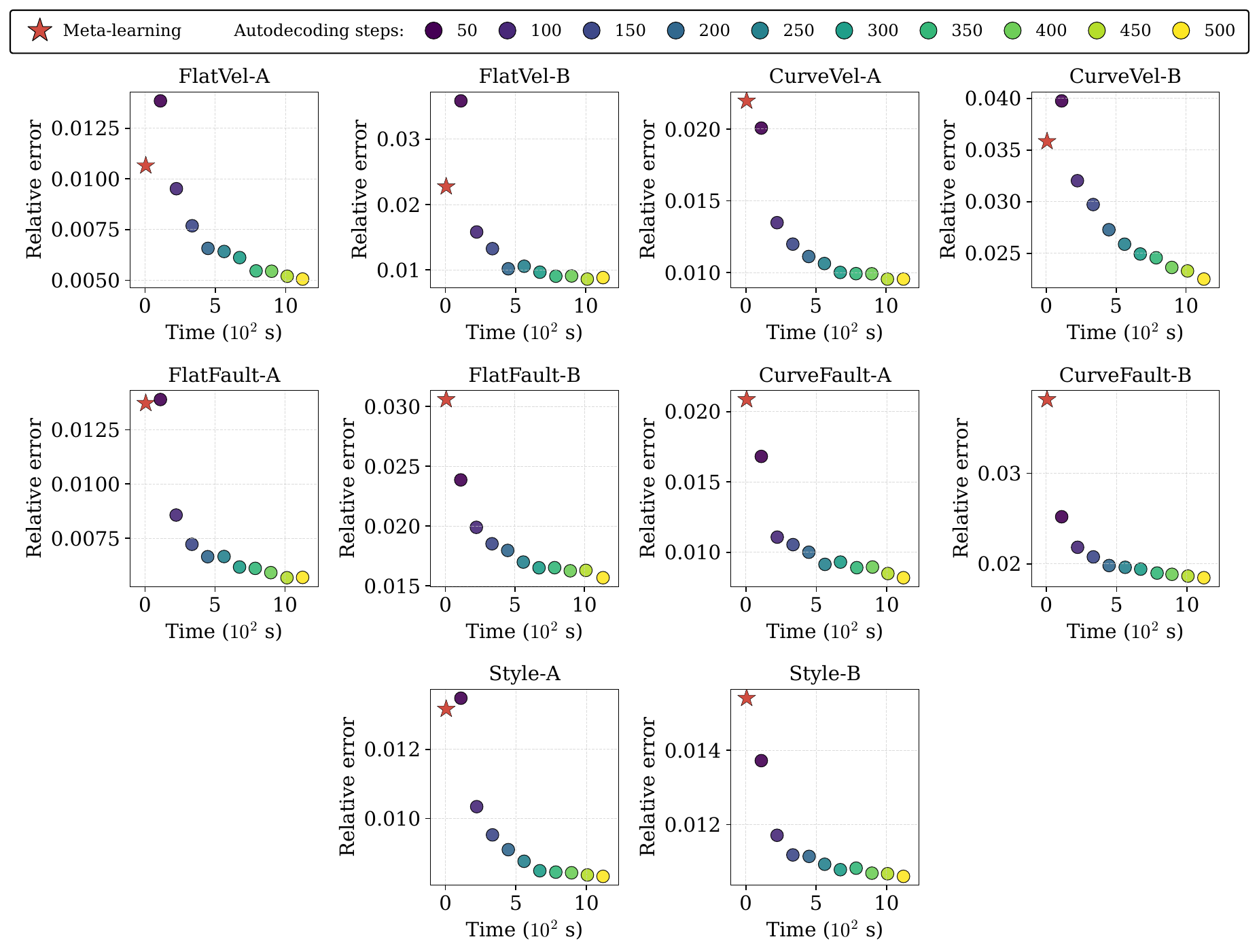}
    \caption{Comparative analysis of Relative Error versus total fitting time for all 100 velocity fields in the 4-source configuration across all datasets. Circular markers represent autodecoding performance at varying epoch counts (color-coded from 50 to 500 epochs), while the star marker indicates meta-learning performance.}

    \label{fig:top-ablation}
\end{figure}

\begin{figure}[ht!]
    \centering
    \includegraphics[width=\linewidth]{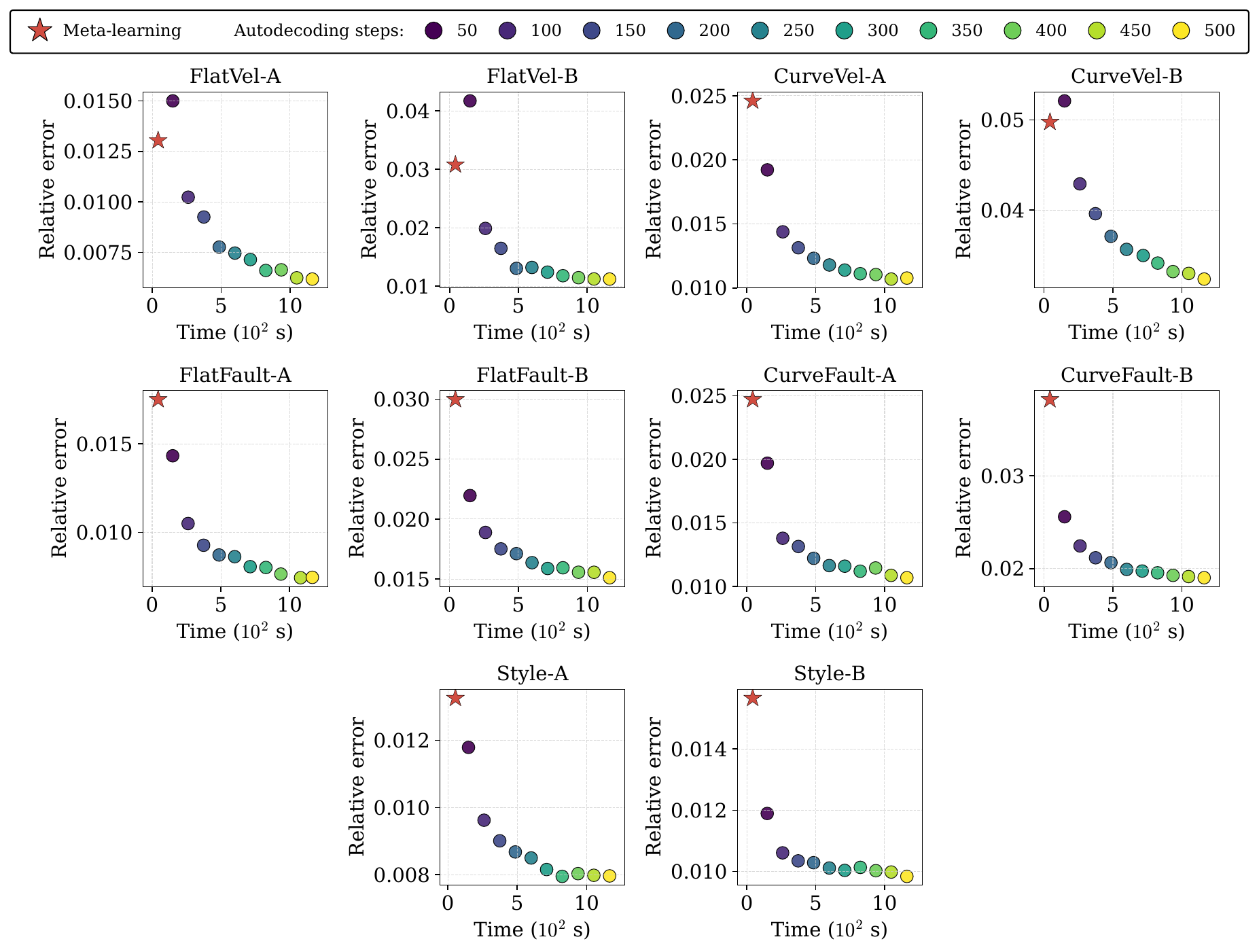}
    \caption{Comparative analysis of Relative Error versus total fitting time for all 100 velocity fields in the dense 14×14 source grid configuration. Circular markers represent autodecoding performance at varying epoch counts (color-coded from 50 to 500 epochs), while the star marker indicates meta-learning performance. }
    \label{fig:full-ablation}
\end{figure}
The meta-learning approach demonstrates remarkable efficiency advantages. With negligible fitting times compared to standard autodecoding, meta-learning achieves performance levels comparable to 50-100 epochs of autodecoding for the FlatVel-A/B and CurveVel-A/B datasets. This represents a substantial reduction in computational requirements while maintaining acceptable performance characteristics.

These findings suggest that practitioners can optimize computational resource allocation by selecting the appropriate training approach based on their specific performance requirements and computational constraints. For applications where rapid deployment is critical, meta-learning offers significant advantages, while applications demanding maximum accuracy may benefit from extended autodecoding training, with the optimal epoch count determined by performance saturation points identified in our analysis.

\newpage

\subsection{Qualitative Analysis of Travel-Time Predictions}
\label{app:vis-res}

This section provides a visual assessment of the \texttt{E-NES} model's performance as quantitatively reported in Table~\ref{tab:2d-res-top}. For each dataset in the 2D-OpenFWI benchmark, we present a representative velocity field alongside the corresponding ground-truth and predicted travel-time surfaces. Additionally, we visualize the spatial distribution of relative error to facilitate the identification of regions where prediction accuracy varies.

Figures~\ref{fig:times-vel-a} and~\ref{fig:times-vel-b} demonstrate the \texttt{E-NES} model's capacity to accurately reconstruct travel-time functions across diverse geological scenarios. Particularly noteworthy is the model's performance on the challenging CurveFault-A/B and Style-A/B datasets, where the travel-time functions exhibit complex wavefront behaviors including caustic singularities—regions. At these caustics, seismic-ray trajectories intersect each other, forming singularity zones where gradients are discontinuous.

\begin{figure}[htbp]
    \centering
    \begin{subfigure}[b]{\textwidth}
        \centering
        \includegraphics[width=\textwidth]{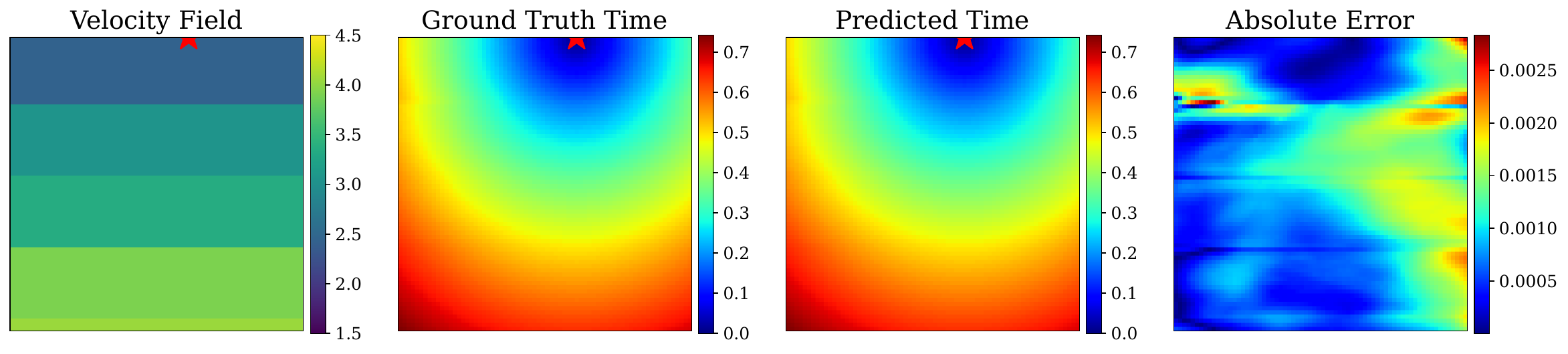}
        \caption{Results for FlatVel-A}
        \label{fig:flat-a-times}
    \end{subfigure}

    \vspace{1em}
    \begin{subfigure}[b]{\textwidth}
        \centering
        \includegraphics[width=\textwidth]{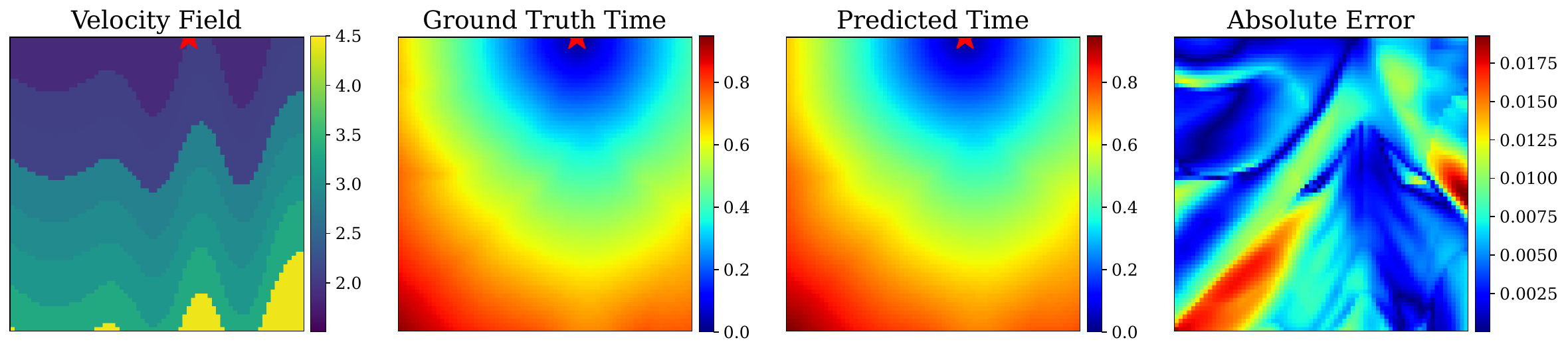}
        \caption{Results for CurveVel-A}
        \label{fig:curve-a-times}
    \end{subfigure}

    \vspace{1em}
    \begin{subfigure}[b]{\textwidth}
        \centering
        \includegraphics[width=\textwidth]{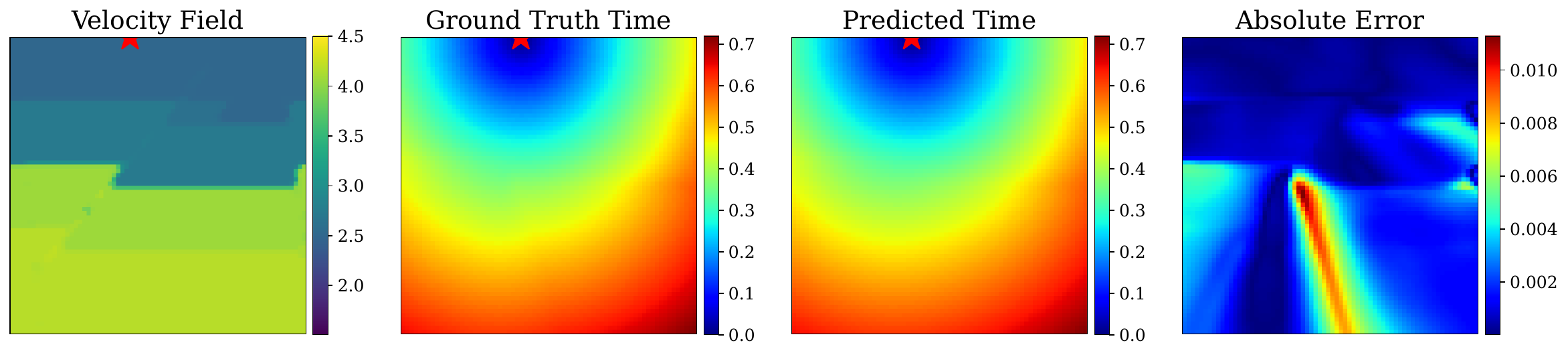}
        \caption{Results for FlatFault-A}
        \label{fig:flatfault-a-times}
    \end{subfigure}

    \vspace{1em}
    \begin{subfigure}[b]{\textwidth}
        \centering
        \includegraphics[width=\textwidth]{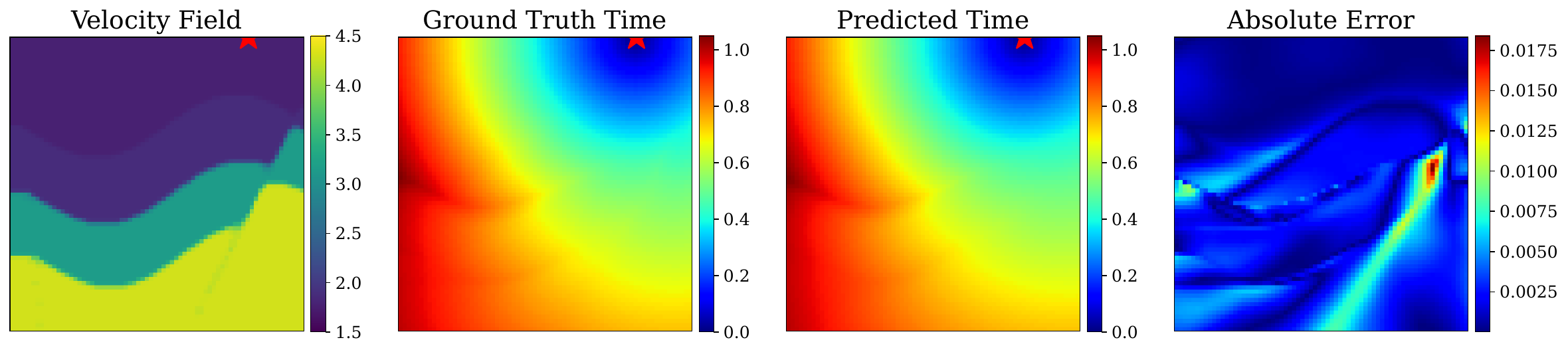}
        \caption{Results for CurveFault-A}
        \label{fig:curvefault-a-times}
    \end{subfigure}

    \vspace{1em}
    \begin{subfigure}[b]{\textwidth}
        \centering
        \includegraphics[width=\textwidth]{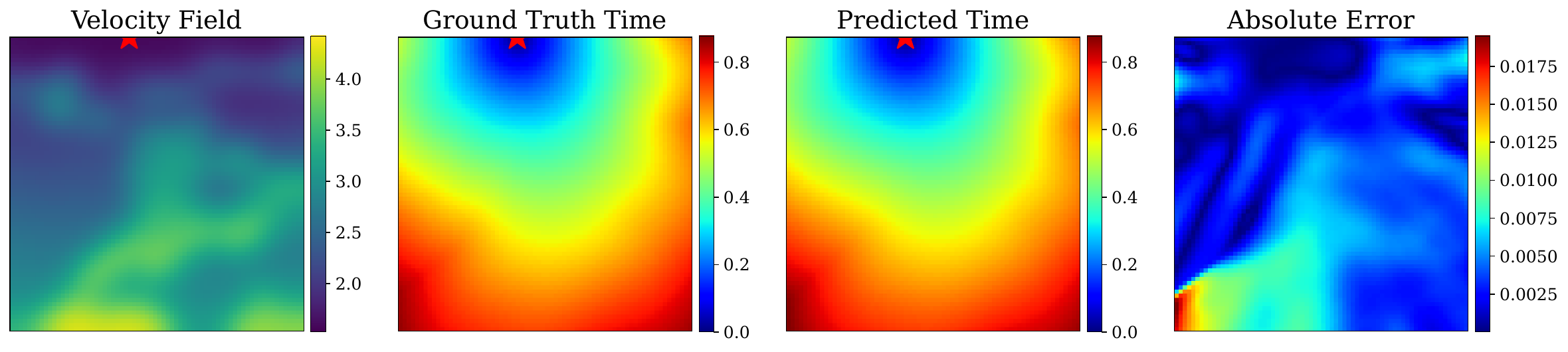}
        \caption{Results for Style-A}
        \label{fig:style-a-times}
    \end{subfigure}

   \caption{Comparative visualization of \texttt{E-NES} predicted travel-times against reference solutions for OpenFWI type B datasets. Each panel displays the velocity field (left), ground-truth travel-time surface (center-left), \texttt{E-NES} predicted travel-time surface (center-right), and relative error distribution (right). The red star denotes the source point location from which wavefronts propagate.}
    \label{fig:times-vel-a}
\end{figure}

\begin{figure}[htbp]
    \centering
    \begin{subfigure}[b]{\textwidth}
        \centering
        \includegraphics[width=\textwidth]{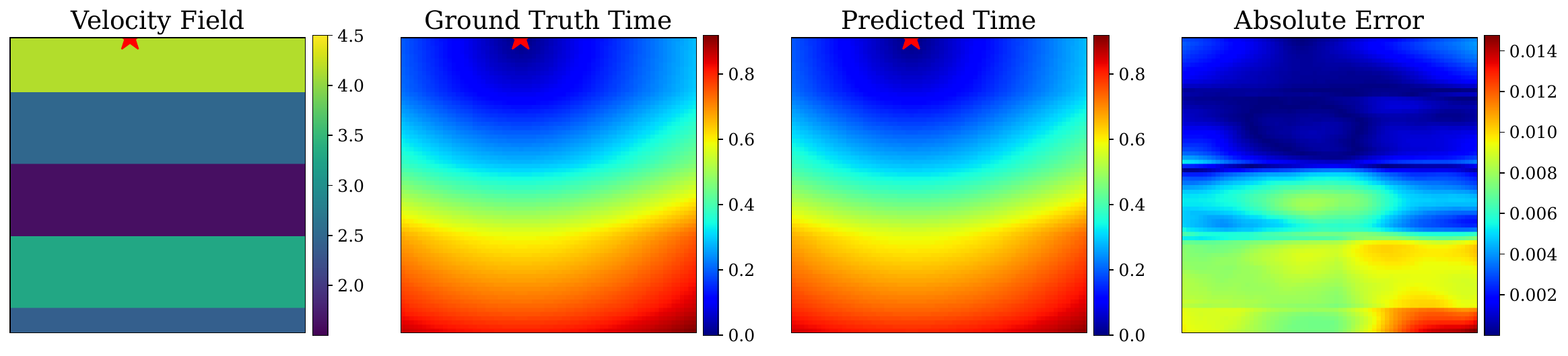}
        \caption{Results for FlatVel-B}
        \label{fig:flat-b-times}
    \end{subfigure}

    \vspace{1em}
    \begin{subfigure}[b]{\textwidth}
        \centering
        \includegraphics[width=\textwidth]{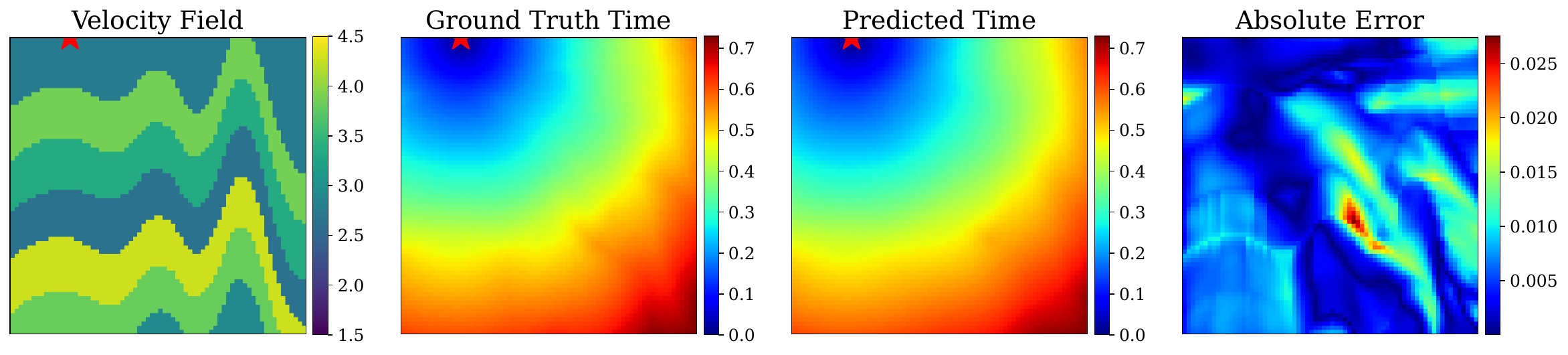}
        \caption{Results for CurveVel-B}
        \label{fig:curve-b-times}
    \end{subfigure}

    \vspace{1em}
    \begin{subfigure}[b]{\textwidth}
        \centering
        \includegraphics[width=\textwidth]{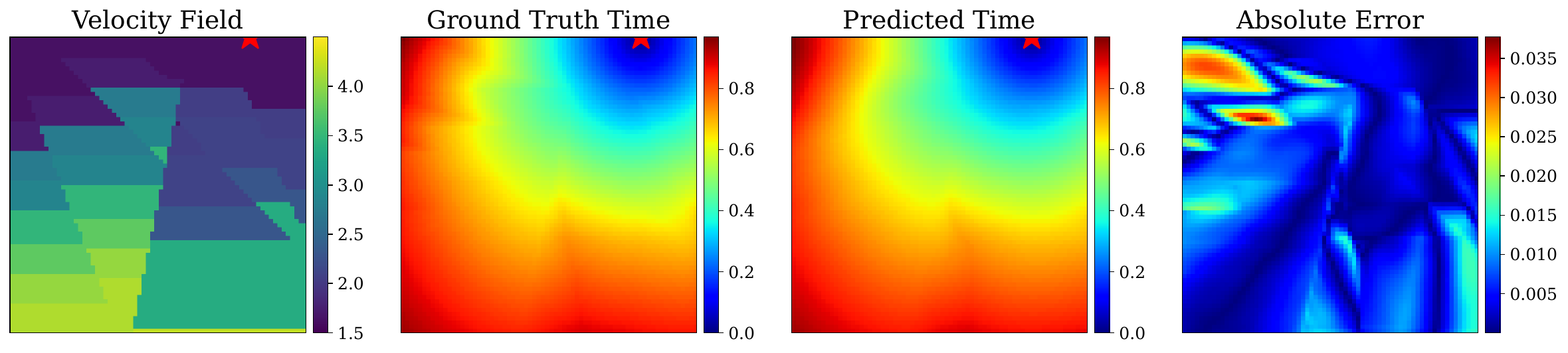}
        \caption{Results for FlatFault-B}
        \label{fig:flatfault-b-times}
    \end{subfigure}

    \vspace{1em}
    \begin{subfigure}[b]{\textwidth}
        \centering
        \includegraphics[width=\textwidth]{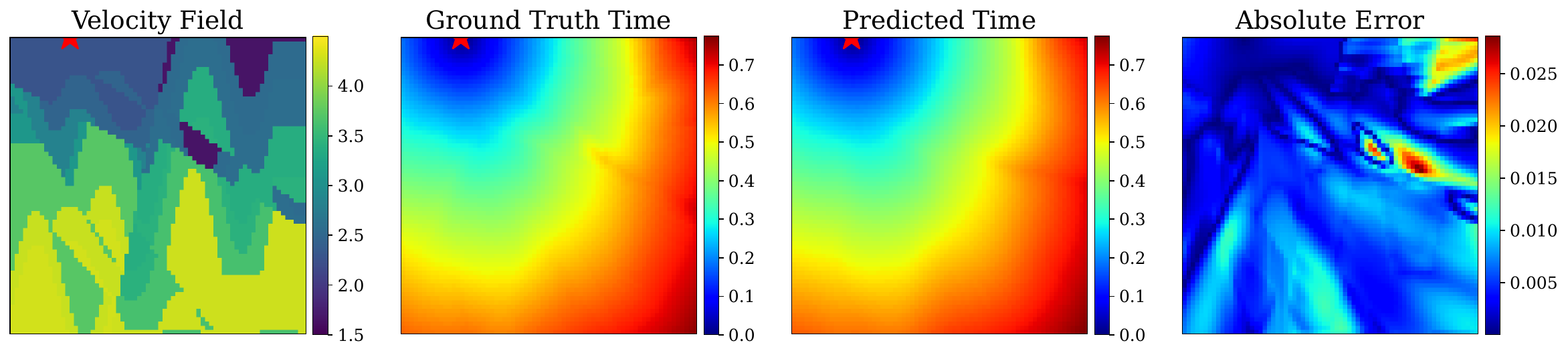}
        \caption{Results for CurveFault-B}
        \label{fig:curvefault-b-times}
    \end{subfigure}

    \vspace{1em}
    \begin{subfigure}[b]{\textwidth}
        \centering
        \includegraphics[width=\textwidth]{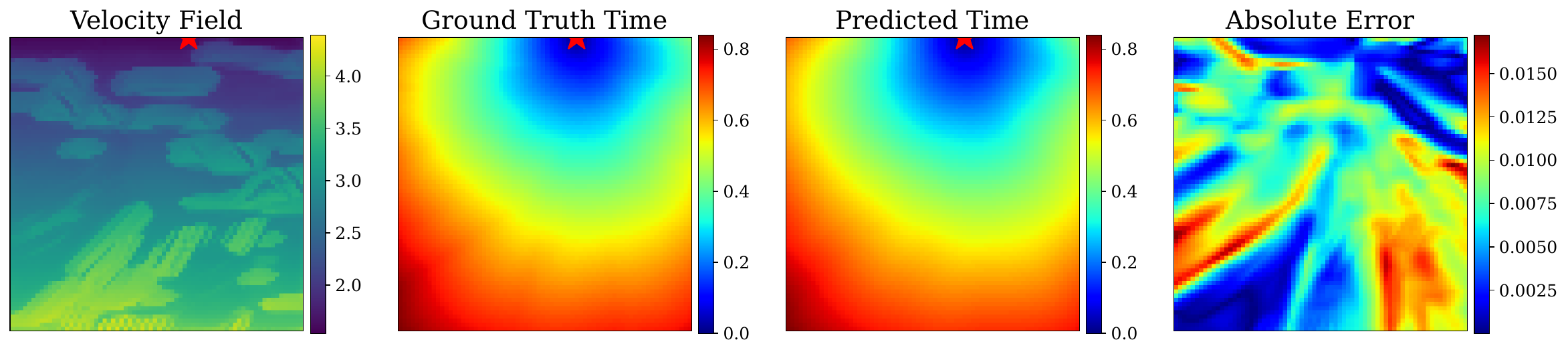}
        \caption{Results for Style-B}
        \label{fig:style-b-times}
    \end{subfigure}

       \caption{Comparative visualization of \texttt{E-NES} predicted travel-times against reference solutions for OpenFWI type A datasets. Each panel displays the velocity field (left), ground-truth travel-time surface (center-left), \texttt{E-NES} predicted travel-time surface (center-right), and relative error distribution (right). The red star denotes the source point location from which the wavefronts propagate.}
    \label{fig:times-vel-b}
\end{figure}

\newpage
\subsection{Empirical Steerability Test}
\label{app:steer-test}

In this section, we empirically validate the geometric inductive bias incorporated into our Eikonal Solver through a two-part test. First, we verify that $T_{\theta}(s, r; z_l)$ correctly models the solution of the Eikonal equation for the velocity field $v_l$. Second, we test the equivariance property: for any $g \in SE(2)$, we examine whether $T_{\theta}(s, r; g\cdot z_l)$ solves the Eikonal equation with the transformed velocity field $\mu(g, v_l)$, satisfying $\|\operatorname{grad}_{s} T(s, r; g\cdot z_l)\|_{\mathcal{G}}^{-1} = \mu(g, v_l)(s)$.

For the Special Euclidean group, Corollary~\ref{col:simple_steer} establishes that $\mu(g, v_l)(s)=v_l(g^{-1}\cdot s)$, which enables a direct geometric interpretation and visual verification of the steerability property. Figure~\ref{fig:equiv-test} demonstrates this behavior empirically: when we rotate the latent conditioning point cloud, the resulting travel-time field exhibits a corresponding rotation, and the gradient norm yields a rotated version of the original velocity field. The same predictable equivariant behavior is observed under y-axis translation, confirming that our model successfully captures the desired geometric structure.

\begin{figure}[ht!]
    \centering
    \includegraphics[width=\linewidth]{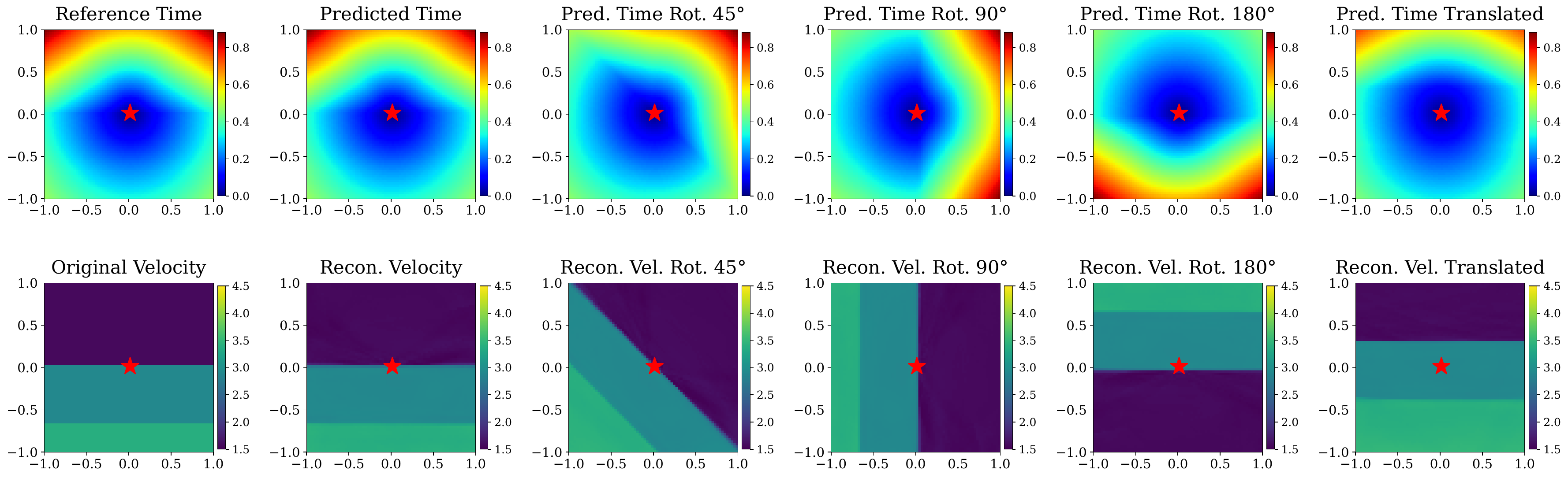}
     \caption{Steerability test demonstrating $SE(2)$ equivariance on the FlatVel-A dataset. When the latent conditioning point cloud is transformed by elements of $SE(2)$ (rotations and translations), the predicted travel-time field and recovered velocity field undergo corresponding geometric transformations, confirming the model's equivariant behavior. The red star denotes the source point from which wavefronts propagate.}
    \label{fig:equiv-test}
\end{figure}

\section{Extended Discussion}
\label{app:diss}

\subsection{Improved Memory Scalability}
\label{app:scala-diss}
In this section, we expand our analysis to consider the relative merits of conditional versus unconditional neural field approaches for solving the eikonal equation. We intentionally excluded comparisons against unconditional neural field-based eikonal equation solvers from our main experimental results, as these comparisons require additional context to interpret appropriately.

Unconditional neural fields typically outperform conditional neural fields in both solution accuracy and inference speed for individual problem instances. This performance gap stems from the fundamental nature of the task: training an unconditional neural field essentially constitutes an overfitting problem, where even a modest MLP architecture can achieve high accuracy on a single velocity field. In contrast, conditional neural fields must generalize across multiple instances, effectively learning a mapping from conditioning variables to solution fields rather than memorizing a single solution.

However, this performance advantage diminishes significantly when considering parameter scalability across multiple velocity fields. The unconditional approach (e.g., a standard Neural Eikonal Solver) requires approximately 17,558 parameters per velocity field \citep{grubas_neural_2023}, as it necessitates training an entirely new network for each travel-time solution. In contrast, our \texttt{E-NES} approach maintains a fixed baseline of 648,965 parameters for the core network $\tau_\theta$, with each additional velocity field requiring only 315 parameters for the conditioning variables (in the $\mathcal{Z}=SE(2)\times \mathbb{R}^{32}$ case). A simple calculation reveals that \texttt{E-NES} becomes more parameter-efficient than the unconditional approach when handling more than 38 velocity fields—a threshold easily surpassed in practical applications.

The memory scaling properties of \texttt{E-NES} offer additional advantages beyond parameter efficiency. The conditioning variables effectively serve as a quantization method for the travel-time function, significantly reducing memory requirements. For example, the 14×14×70×70 grid configuration used in Table~\ref{tab:2d-res-top-full} would require storing 960,400 floating-point values per velocity field using traditional methods. With \texttt{E-NES}, we need only 315 parameters per velocity field regardless of the grid resolution. This resolution-invariant property becomes particularly valuable as problem dimensions increase, offering orders of magnitude improvements in memory efficiency for high-resolution applications.

These scalability advantages highlight the complementary nature of conditional and unconditional approaches. While unconditional neural fields excel at individual problem instances where maximum accuracy is required, conditional architectures like \texttt{E-NES} provide substantially better scalability for applications involving multiple velocity fields or varying resolution requirements.

\subsection{Applicability Beyond Seismic Travel-Time Modeling}
\label{app:uses-diss}

While our manuscript focuses primarily on the Eikonal equation in the context of seismic travel-time modeling, the proposed generalization of Equivariant Neural Fields (ENFs) extends far beyond this specific application.

\paragraph{Eikonal Equation Beyond Seismic Imaging.} 
Even within the realm of Eikonal solvers, there exist several diverse and impactful application domains. For example, in \textit{robotics}, the Eikonal equation is used for optimal path planning, where travel-time functions are defined over the robot's configuration space \citep{ni_ntfields_2023}. In \textit{image segmentation}, the Eikonal equation can be employed to compute geodesic distances, with the velocity field derived from classical edge detectors, thereby transforming the segmentation task into a minimal-path extraction problem \citep{chen_active_2019}. Although our experiments focus on the OpenFWI benchmark—currently the only publicly available dataset for learning-based Eikonal solvers—our methods are directly applicable to these broader scenarios, as discussed in Section~\ref{sec:intro}.

\paragraph{Equivariant Neural Fields in Reinforcement Learning.}
Beyond Eikonal solvers, the proposed framework is applicable in \textit{continuous-control reinforcement learning} (RL). Specifically, the travel-time function $T(s, r)$ obtained by solving the Eikonal equation can be interpreted as an optimal value function:
\[
v^*(s, r) = T(s, r) = \inf_{\gamma(0) = s,\, \gamma(1) = r} \int_0^1 \frac{\|\dot{\gamma}(t)\|}{v(t)}\, dt,
\]
 where the state space is $\mathcal{M} \times \mathcal{M}$, the action space corresponds to $T\mathcal{M}$, and the optimal policy is the geodesic path $\gamma$. The environment is deterministic, with transitions $(\gamma(\varepsilon), r)$ for small $\varepsilon > 0$, and terminal state $(r,r)$.

In general, RL problems with continuous state and action spaces, the value function $v$ and action-value function $q$ can naturally be modeled as neural fields. In \textit{multi-task RL}~\citep{yu2020meta}, these functions are conditioned on task embeddings. Our formulation of Conditional Neural Fields is well-suited to this, as it accommodates both continuous coordinate inputs and a conditioning signal. Moreover, when the state or action spaces admit a Lie group symmetry \citep{wang2022mathrm}, Equivariant Neural Fields offer inductive biases that promote better generalization and sample efficiency. Importantly, our proposed extension enables modeling such value functions in cases involving multiple input coordinates, which are common in practice.

\paragraph{Beyond Reinforcement Learning.}
The applicability of our method extends well beyond reinforcement learning. Consider, for instance, a collection of signals \( f_1, \dots, f_n \), where each \( f_i: \mathcal{M}_i \to \mathbb{R}^d \) is defined over a potentially distinct Riemannian manifold \( \mathcal{M}_i \). In many scientific and machine learning applications—including computational biology, multimodal sensor fusion, and medical imaging—it is of interest to model a \emph{similarity metric} of the form
\[
\mathrm{sim}(f_1, \dots, f_n)[p_1, \dots, p_n],
\]
which compares the values or features of these signals at specific points \( (p_1, \dots, p_n) \in \mathcal{M}_1 \times \cdots \times \mathcal{M}_n \). Traditional ENFs are limited to modeling functions defined over a single manifold input. However, our proposed extension allows us to define and learn such similarity metrics equivariantly with respect to group actions on the product manifold \( \mathcal{M}_1 \times \cdots \times \mathcal{M}_n \).

\subsection{Additional Discussion on Performance Comparison against FC-DeepONet}
\label{app:perf-diss}

As shown in Table~\ref{tab:2d-res-top}, our method outperforms the FC-DeepONet baseline on 7 out of 10 OpenFWI datasets, with performance on FlatFault-A comparable. For the two datasets where our method underperforms (FlatVel-A/B), this can be attributed to two key factors:

\paragraph{Training signal difference:} FC-DeepONet is trained using the "ground-truth" travel time fields obtained from a numerical solver (FMM), rather than learning from the Eikonal equation directly. As discussed in Section 1, the quality of FMM solutions improves with finer domain discretization. In the FlatVel-A/B datasets, the velocity profiles have low spatial frequency (see Figures 8 and 9), making the FMM-derived travel times particularly accurate and advantageous as training targets. In contrast, our method learns from PDE constraints rather than supervised travel times. This makes our approach more broadly applicable, but also potentially disadvantaged in cases where numerical solvers already provide highly accurate approximations. Notably, this performance advantage for FC-DeepONet diminishes in more complex scenarios involving higher-frequency profiles or more intricate Riemannian manifolds, where numerical solvers like FMM may perform less reliably.

\paragraph{Inductive bias from domain discretization:} As discussed in Section 2, FC-DeepONet requires a discretized domain to produce conditional latents through its CNN encoder. While this introduces constraints on generalizability (e.g., limited applicability on manifolds with multiple charts or in tasks like geodesic backtracking), it can act as a strong inductive bias in low-frequency settings. The FlatVel-A/B datasets exemplify such settings, where this discretization bias likely aids FC-DeepONet's performance.


\end{document}